\documentclass{article}

\usepackage[square,sort,comma,numbers]{natbib}
\usepackage[final]{arxiv}
\usepackage{algorithm2e}

\usepackage[utf8]{inputenc} 
\usepackage{hyperref}       
\usepackage{url}            
\usepackage{booktabs}       
\usepackage{amsfonts}       
\usepackage{amsthm}
\usepackage{mathtools}
\usepackage{nicefrac}       
\usepackage{microtype}      
\usepackage{lipsum}
\usepackage{graphicx}

\usepackage{verbatim}
\usepackage{color,graphicx} 
\usepackage{amsfonts} 
\usepackage{amsmath}
\usepackage{amssymb}
\usepackage{enumitem}

\usepackage{overpic}
\usepackage{subcaption}

\newtheorem{thm}{Theorem}
\newtheorem{lem}{Lemma}
\newtheorem{prop}{Proposition}
\newtheorem{cor}{Corollary}
\theoremstyle{definition}
\newtheorem{defn}{Definition}
\newtheorem{ex}{Example}
\newtheorem{rmk}{Remark}
\newtheorem{ass}{Assumption}

\ifdefined\remappendix

\else

\fi


\newcommand{\tr}{\mathrm{tr}}
\newcommand{\sym}{\mathrm{sym}}

\newcommand{\dd}{\mathrm{d}}
\newcommand{\conv}{\mathrm{conv}}

\newcommand{\RR}{\mathbb{R}}      

\newcommand{\vecc}{\boldsymbol}
\newcommand{\diag}{\mathrm{diag}}

\title{Noisy Recurrent Neural Networks}
\author{
 Soon Hoe Lim \\
  Nordita, KTH Royal Institute of Technology \\
  and Stockholm University\\
  \texttt{soon.hoe.lim@su.se} \\
  \And 
  N. Benjamin Erichson
  \hspace{0.8cm} \\
  School of Engineering
  \hspace{0.8cm} \\
  University of Pittsburgh \hspace{0.8cm} \\
  \texttt{erichson@pitt.edu} \hspace{0.8cm} 
  \\
  \And
  Liam Hodgkinson  \\
  ICSI and Department of Statistics, \\
  UC Berkeley \\
  \texttt{liam.hodgkinson@berkeley.edu}
  \And
  Michael W. Mahoney  \\
  ICSI and Department of Statistics, \\
  UC Berkeley \\
  \texttt{mmahoney@stat.berkeley.edu}
}

\usepackage[verbose=true,letterpaper]{geometry}
\AtBeginDocument{
	\newgeometry{
		textheight=8.8in,
		textwidth=6.1in,
		top=1in,
		headheight=14pt,
		headsep=25pt,
		footskip=30pt
	}
}

\widowpenalty=10000
\clubpenalty=10000
\flushbottom
\sloppy

\begin{document}

\maketitle

\begin{abstract}
We provide a general framework for studying recurrent neural networks (RNNs) trained by injecting noise into hidden states.
Specifically, we consider RNNs that can be viewed as discretizations of stochastic differential equations driven by input data.
This framework allows us to study the implicit regularization effect of general noise injection schemes by deriving an approximate explicit regularizer in the small noise regime. 
We find that, under reasonable assumptions, this implicit regularization promotes flatter minima; it biases towards models with more stable dynamics; and, in classification tasks, it favors models with larger classification margin.
Sufficient conditions for global stability are obtained, highlighting the phenomenon of stochastic stabilization, where noise injection can improve stability during training.
Our theory is supported by empirical results which demonstrate that the RNNs have improved robustness with respect to various input perturbations.
\end{abstract}

\section{Introduction}
Viewing recurrent neural networks (RNNs) as discretizations of ordinary differential equations (ODEs) driven by input data has recently gained attention~\cite{chang2019antisymmetricrnn,Kag2020RNNs,erichson2020lipschitz,rusch2021coupled}. 
The ``formulate in continuous time, and then discretize'' approach \cite{ma2020towards} motivates novel architecture designs before experimentation, and it provides a useful interpretation as a dynamical system. 
This, in turn, has led to gains in reliability and robustness to data perturbations.

Recent efforts have shown how adding noise can also improve stability during training, and consequently improve robustness \cite{liu2019neural}.
In this work, we consider discretizations of the corresponding stochastic differential equations (SDEs) obtained from ODE formulations of RNNs through the addition of a diffusion (noise) term.
We refer to these as \emph{Noisy RNNs} (NRNNs). 
By dropping the noisy elements at inference time, NRNNs become a stochastic learning strategy which, as we shall prove, has a number of important benefits.
In particular, stochastic learning strategies (including dropout) are often used as natural regularizers, favoring solutions in regions of the loss landscape with desirable properties (often improved generalization and/or robustness). 
This mechanism is commonly referred to as \emph{implicit regularization} \cite{MO11-implementing,Mah12,smith2021origin}, differing from \emph{explicit regularization} where the loss is explicitly modified. 
For neural network models, implicit regularization towards wider minima is conjectured to be a prominent ingredient in the success of stochastic optimization \cite{zhang2016understanding, keskar2016large}. Indeed, implicit regularization has been linked to increases in classification margins \cite{poggio2017theory}, which can lead to improved generalization performance \cite{sokolic2017generalization}. 
A common approach to identify and study implicit regularization is to approximate the implicit regularization by an appropriate explicit regularizer \cite{ali2020implicit, camuto2020explicit, gong2020maxup}. 
Doing so, we will see that NRNNs favor wide minima (like SGD);  more stable dynamics; and classifiers with a large classification margin, keeping generalization error small. 

SDEs have also seen recent appearances in \emph{neural SDEs} \cite{tzen2019neural,hodgkinson2020stochastic}, stochastic generalizations of \emph{neural ODEs} \cite{chen2018neural} which can be seen as an analogue of NRNNs for non-sequential data, with a similar relationship to NRNNs as feedforward neural networks do to RNNs. They have been shown to be robust in practice \cite{liu2019neural}. Analogously, we shall show that the NRNN framework leads to more reliable and robust RNN classifiers, whose promise is demonstrated by experiments on benchmark data sets. \\

\textbf{Contributions.} 
For the class of NRNNs (formulated first as a continuous-time model, which is then discretized):

\begin{itemize}[leftmargin=*,topsep=0pt,parsep=0pt,partopsep=0pt]
\item we identify the form of the implicit regularization for NRNNs  through a corresponding (data-dependent) explicit regularizer in the small noise regime (see Theorem \ref{thm_exp_reg_discrete});
\item we focus on its effect in classification tasks,   providing bounds for the classification margin  for the deterministic RNN classifiers (see Theorem \ref{thm_gen_discrete}); in particular, Theorem \ref{thm_gen_discrete} reveals that {\it stable RNN dynamics can lead to large  classification margin};
\item we show that noise injection can also lead to improved stability (see Theorem \ref{prop:Stability}) via a Lyapunov stability analysis of continuous-time NRNNs;
\item we demonstrate via empirical experiments on benchmark data sets that NRNN classifiers are more robust to data perturbations when compared to other recurrent models, while retaining state-of-the-art performance for clean data. Research code is provided here:  \url{https://github.com/erichson/NoisyRNN}.
\\
\end{itemize}

{\bf Notation.} 
We use $\|v\| := \|v\|_2$ to denote the Euclidean norm of the vector $v$, and $\|A\|_2$ and $\|A\|_F$ to denote the spectral norm and Frobenius norm of the matrix $A$, respectively. 
The $i$th element of a vector $v$ is denoted by $v^i$ or $[v]^i$, and the $(i,j)$-entry of a matrix $A$ by $A^{ij}$ or $[A]^{ij}$. For a vector $v = (v^1, \dots, v^d)$, diag($v$) denotes the diagonalization of $v$ with $\diag(v)^{ii} = v^i$. 
$I$ denotes the identity matrix (with dimension clear from context), while superscript $T$ denotes transposition. 
For a matrix $M$, $M^\sym = (M + M^T)/2$ denotes its symmetric part, $\lambda_{\min}(M)$ and $\lambda_{\max}(M)$ denote its minimum and maximum eigenvalue respectively,  $\sigma_{\max}(M)$ denotes its maximum singular value, and $Tr(M)$ denotes its trace. 
For a function 
$f : \RR^n \to \RR^m$ such that each of its first-order partial derivatives (with respect to $x$) exist, $\frac{\partial f}{\partial x} \in \RR^{m \times n}$ is the Jacobian matrix of $f$. For a scalar-valued function $g: \RR^n \to \RR$,
$\nabla_h g$ is the gradient of $g$ with respect to the variable $h \in \RR^n$ and $H_h g$ is the Hessian of $g$ with respect to $h$.

\section{Related Work}
\label{sec:Related}

\textbf{Dynamical Systems and Machine Learning.} 
There are various interesting connections between machine learning and dynamical systems. Formulating machine learning in the framework of continuous-time dynamical systems was recently popularized by \cite{weinan2017proposal}. Subsequent efforts focus on constructing learning models by approximating continuous-time dynamical systems \cite{chen2018neural, kidger2020neural,queiruga2020continuous} and studying them using tools from  numerical analysis \cite{lu2018beyond,yang2019dynamical,zhang2019towards,zhang2019stability}. On the other hand, dynamical systems theory provides useful theoretical tools for analyzing neural networks (NNs), including RNNs \cite{vogt2020lyapunov,engelken2020lyapunov,lim2020understanding,chang2019antisymmetricrnn,erichson2020lipschitz}, and useful principles for designing NNs \cite{haber2017stable,sun2018stochastic}. Other examples of dynamical systems inspired models include the learning of invariant quantities via their Hamiltonian or Lagrangian representations~\cite{lutter2019deep,NEURIPS2019_26cd8eca,chen2019symplectic,zhong2019symplectic,toth2019hamiltonian}. Another class of models is inspired by Koopman theory, yielding models where the evolution operator is linear~\cite{takeishi2017learning,morton2019deep,erichson2019physics,pan2020physics,li2019learning,balakrishnan2020deep,azencot2020forecasting,dogra2020optimizing}.

\textbf{Stochastic Training and Regularization Strategies.} 
Regularization techniques such as noise injection and dropout can help to prevent overfitting in neural networks. Following the classical work \cite{bishop1995training} that studies regularizing effects of noise injection on data, several work studies the effects of noise injection into different parts of networks for various architectures \cite{huang2016deep,noh2017regularizing,liu2019neural,sun2018stochastic,arora2020dropout,jim1996analysis,zaremba2014recurrent, wei2020implicit}. In particular, recently \cite{camuto2020explicit} studies the regularizing effect of isotropic Gaussian noise injection into the layers of feedforward networks. For RNNs, \cite{dieng2018noisin} shows that noise additions on the hidden states outperform Bernoulli dropout in terms of performance and bias, whereas \cite{fraccaro2016sequential} introduces a variant of stochastic RNNs for generative modeling of sequential data.  Some specific formulations of RNNs as SDEs were also considered in Chapter 10 of \cite{mao2007stochastic} and \cite{Das98}. Implicit regularization has also been studied more generally~\cite{MO11-implementing,Mah12,GM14_ICML,DLM19_Exact_TR,smith2021origin}.

\section{Noisy Recurrent Neural Networks} 
\label{sec:NRNN}

We formulate continuous-time recurrent neural networks (CT-RNNs) at full generality as a system of input-driven ODEs: for a terminal time $T > 0$ and an input signal $x = (x_t)_{t \in [0,T]} \in C([0,T]; \mathbb{R}^{d_x})$, the output $y_t   \in  \mathbb{R}^{d_y}$, for $t \in [0,T]$, is a linear map of hidden states $h_t \in \mathbb{R}^{d_h}$~satisfying
\begin{equation}
\label{eq:ContRNN}
\dd h_t = f(h_t,x_t) \dd t,\qquad y_t = V h_t,
\end{equation}
where $V \in \mathbb{R}^{d_y \times d_h}$, and $f:\mathbb{R}^{d_h} \times \mathbb{R}^{d_x} \to \mathbb{R}^{d_h}$ is typically Lipschitz continuous, guaranteeing existence and uniqueness of solutions to (\ref{eq:ContRNN}). 

A natural stochastic variant of CT-RNNs arises by replacing the ODE in (\ref{eq:ContRNN}) by an It\^{o} SDE, that is,
\begin{equation}
\label{NLRNN}
\dd h_t = f(h_t, x_t) \dd t + \sigma(h_t,x_t) \dd B_t, \qquad y_t = V h_t,
\end{equation}
where $\sigma:\mathbb{R}^{d_h} \times \mathbb{R}^{d_x} \to \mathbb{R}^{d_h \times r}$ and $(B_t)_{t \geq 0}$ is an $r$-dimensional Brownian motion. The functions $f,\sigma$ are referred to as the \emph{drift} and \emph{diffusion} coefficients, respectively. Intuitively, (\ref{NLRNN}) amounts to a noisy perturbation of the corresponding deterministic CT-RNN (\ref{eq:ContRNN}).  At full generality, we refer to the system (\ref{NLRNN}) as a \emph{continuous-time Noisy RNN} (CT-NRNN). To guarantee the existence of a unique solution to (\ref{NLRNN}), in the sequel, we assume that $\{f(\cdot,x_t)\}_{t \in [0,T]}$ and $\{\sigma(\cdot, x_t)\}_{t \in [0,T]}$ are uniformly Lipschitz continuous, and $t \mapsto f(h,x_t)$, $t \mapsto \sigma(h, x_t)$ are bounded in $t \in [0,T]$ for each fixed $h \in \mathbb{R}^{d_h}$. 
For further details, see Section B in Supplementary Material ({\bf SM}). 

While much of our theoretical analysis will focus on this general formulation of CT-NRNNs, our empirical and stability analyses focus on the choice of drift function
\begin{equation}
\label{eq:LipschitzRNN}
f(h, x) = A h + a(W h + U x + b),
\end{equation}
where $a:\mathbb{R}\to\mathbb{R}$ is a Lipschitz continuous scalar activation function extended to act on vectors pointwise, $A,W \in \mathbb{R}^{d_h \times d_h}$, $U \in \mathbb{R}^{d_h \times d_x}$ and $b \in \mathbb{R}^{d_h}$. Typical examples of activation functions include $a(x) = \tanh(x)$. The matrices $A,W,U,V,b$ are all assumed to be trainable parameters. This particular choice of drift dates back to the early Cohen-Grossberg formulation of CT-RNNs, and was recently reconsidered in \cite{erichson2020lipschitz}.

\subsection{Noise Injections as Stochastic Learning Strategies}\label{subsect_noiseinjection}

While precise choices of drift functions $f$ are the subject of existing deterministic RNN theory, good choices of the diffusion coefficient $\sigma$ are less clear. Here, we shall consider a parametric class of diffusion coefficients given by:
\begin{equation}\label{experiment_NRNN}
    \sigma(h,x) \equiv \epsilon (\sigma_1 I + \sigma_2 \diag(f(h,x))), 
\end{equation}
where the noise level $\epsilon > 0$ is small, and $\sigma_1 \geq 0$ and $\sigma_2 \geq 0$ are tunable parameters describing the relative strength of additive noise and a multiplicative noise respectively. 

While the stochastic component is an important part of the model, one can  set $\epsilon \equiv 0$ at inference time. 
In doing so, noise injections in NRNNs may be viewed as a learning strategy.
A similar stance is considered in \cite{liu2019neural} for treating neural SDEs.
From this point of view, we may relate noise injections generally to regularization mechanisms considered in previous works. For example, additive noise injection was studied in the context of feedforward NNs in \cite{camuto2020explicit}, in which case a Gaussian noise is injected to the activation function at each layer of the NN. Furthermore, multiplicative noise injections includes stochastic depth and dropout strategies as special cases \cite{lu2018beyond,liu2019neural}. By taking a Gaussian approximation to Bernoulli noise and taking a continuous-time limit, NNs with stochastic dropout can be weakly approximated by an SDE with appropriate multiplicative noise, see \cite{lu2018beyond}. All of these works highlight various advantages of noise injection for training NNs.

\subsection{Numerical Discretizations}
\label{subsect_discretizations}

As in the deterministic case, exact simulation of  the SDE in  (\ref{NLRNN}) is infeasible in practice, and so one must specify a numerical integration scheme. We will focus on the explicit Euler-Maruyama (E-M) integrators \cite{kloeden2013numerical}, which are the stochastic analogues of Euler-type integration schemes for ODEs.

Let $0 \coloneqq t_0 < t_1 < \cdots < t_M \coloneqq T$ be a partition of the interval $[0,T]$. Denote $\delta_m := t_{m+1} - t_m$ for each $m=0,1,\dots,M-1$, and $\delta := (\delta_m)$. The  E-M scheme provides a family (parametrized by $\delta$) of approximations to the solution of the SDE in \eqref{NLRNN}:
\begin{equation}\label{e-m}
    h^{\delta}_{m+1} = h^{\delta}_{m} + f(h^{\delta}_{m},\hat{x}_{m}) \delta_m + \sigma(h^{\delta}_{m},\hat{x}_{m}) \sqrt{\delta_m} \xi_m,
\end{equation}
for $m=0,1,\dots,M-1$, where $(\hat{x}_{m})_{m=0,\dots,M-1}$ is a given sequential data, the $\xi_m \sim \mathcal{N}(0,I)$ are independent $r$-dimensional standard normal random vectors, and $h^{\delta}_0 = h_{0}$. As $\Delta \coloneqq \max_m \delta_m~\to~0$, the family of approximations $(h^{\delta}_{m})$ converges strongly to the It\^o process $(h_{t})$ satisfying (\ref{NLRNN}) (at rate $\mathcal{O}(\sqrt{\Delta})$ when the step sizes are uniform; see Theorem 10.2.2 in \cite{kloeden2013numerical}). See Section C in {\bf SM} for details on the general case.

\section{Implicit Regularization}
\label{sec:ImpReg}

To highlight the advantages of NRNNs over their deterministic counterpart, we show that, under reasonable assumptions, NRNNs exhibit a natural form of \emph{implicit regularization}.
By this, we mean regularization imposed implicitly by the stochastic learning strategy, without explicitly modifying the loss, but that, e.g., may promote flatter minima. 
Our goal is achieved by deriving an appropriate explicit regularizer through a perturbation analysis in the small noise regime. This becomes useful when considering NRNNs as a learning strategy, since we can precisely determine the effect of the noise injection as a regularization mechanism. 

The study for discrete-time NRNNs is of practical interest and is our focus here. Nevertheless, analogous results for continuous-time NRNNs are also valuable for exploring other discretization schemes. For this reason, we also study the continuous-time case in Section E in {\bf SM}. Our analysis covers general NRNNs, not necessarily those with the drift term \eqref{eq:LipschitzRNN} and diffusion term \eqref{experiment_NRNN}, that satisfy the following assumption, which is typically reasonable in practice. We remark that a ReLU activation will violate the assumption. However, RNNs with ReLU activation are less widely used in practice. Without careful initialization \cite{le2015simple,talathi2015improving}, they typically suffer more from exploding gradient problems compared to those with bounded activation functions such as $\tanh$.

\begin{ass}
\label{ass:Smooth}
The drift $f$ and diffusion coefficient $\sigma$ of the SDE in (\ref{NLRNN}) satisfy the following:
\begin{enumerate}[label=(\roman*),leftmargin=*]
    \item for all $t \in [0,T]$ and $x \in \mathbb{R}^{d_x}$, $h \mapsto f(h, x)$ and $h \mapsto \sigma^{ij}(h,x)$ have Lipschitz continuous partial derivatives in each coordinate up to order three (inclusive);
    \item for any $h \in \mathbb{R}^{d_h}$, $t \mapsto f(h, x_t)$ and $t \mapsto \sigma(h, x_t)$ are bounded and Borel measurable on $[0,T]$. 
\end{enumerate}
\end{ass}

We consider a rescaling of the noise $\sigma \mapsto \epsilon \sigma$ in \eqref{NLRNN}, where $\epsilon > 0$ is assumed to be a small parameter, in line with our noise injection strategies in Subsection \ref{subsect_noiseinjection}. 

In the sequel, we let $\bar{h}^{\delta}_m$ denote the hidden states of the corresponding deterministic RNN model, satisfying
\begin{equation}
\label{eq:DeterRNNDiscrete}
\bar{h}_{m+1}^\delta = \bar{h}_m^\delta + \delta_m f(\bar{h}_m^\delta, \hat{x}_m), \quad m = 0,1,\dots,M-1,
\end{equation}
with $\bar{h}^{\delta}_0 = h_0$.  Let $\Delta := \max_{m \in \{0,\dots,M-1\}} \delta_m$, and denote the state-to-state Jacobians by
\begin{equation}
    \hat{J}_m = I+\delta_m \frac{\partial f}{\partial h}(\bar{h}^{\delta}_{m},\hat{x}_{m}).
\end{equation}
For $m,k = 0,\dots,M-1$, also let
\begin{equation}
    \hat{\Phi}_{m,k} = \hat{J}_{m} \hat{J}_{m-1} \cdots \hat{J}_k,
\end{equation}
where the empty product is assumed to be the identity. Note that the $\hat{\Phi}_{m,k}$  are  products of the state-to-state Jacobian matrices, important for analyzing signal propagation in  RNNs \cite{chen2018dynamical}. For the sake of brevity, we denote $f_m = f(\bar{h}^{\delta}_m, \hat{x}_m)$ and $\sigma_m = \sigma(\bar{h}^{\delta}_m,\hat{x}_m)$ for $m = 0,1,\dots,M$.

The following result, which is our first main result, relates the loss function, averaged over realizations of the injected noise, used for training NRNN to that for training deterministic RNN in the small noise regime.

\begin{thm}[Implicit regularization induced by noise injection]
\label{thm_exp_reg_discrete}
Under Assumption \ref{ass:Smooth}, 
\begin{align} \label{exp_reg}
    \mathbb{E} \ell(h^{\delta}_M) &= \ell(\bar{h}^{\delta}_M) + \frac{\epsilon^2}2 [\hat{Q}(\bar{h}^{\delta}) +\hat{R}(\bar{h}^{\delta})]  + \mathcal{O}(\epsilon^3),
\end{align}
as $\epsilon \to 0$, where the terms $\hat{Q}$ and $\hat{R}$  are given by
\begin{align} \label{P_term_discrete}
    \hat{Q}(\bar{h}^{\delta}) &= \nabla l(\bar{h}^{\delta}_M)^T  \sum_{k=1}^M \delta_{k-1} \hat{\Phi}_{M-1,k}  \sum_{m=1}^{M-1} \delta_{m-1} \vecc{v}_{m}, \\
    \hat{R}(\bar{h}^{\delta}) &= \sum_{m=1}^{M} \delta_{m-1} \tr( \sigma_{m-1}^T \hat{\Phi}^T_{M-1,m} H_{\bar{h}^{\delta}}l \ \hat{\Phi}_{M-1,m} \sigma_{m-1}),
    \label{R_term_discrete}
\end{align}
with $\vecc{v}_m$ a vector with the $p$th component:
\begin{equation}
    [v_m]^p = \tr(\sigma_{m-1}^T \hat{\Phi}_{M-2,m}^T H_{\bar{h}^\delta} [f_M]^{p} 
    \hat{\Phi}_{M-2,m}  \sigma_{m-1}),
\end{equation}
for $p=1,\dots, d_h$. Moreover, 
$|\hat{Q}(\bar{h}^{\delta})| \leq C_Q \Delta^2, \ \ |\hat{R}(\bar{h}^{\delta})| \leq C_R \Delta, $
for $C_Q, C_R >0$ independent of $\Delta$.
\end{thm}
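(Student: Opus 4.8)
The plan is to treat the noise level $\epsilon$ as a small parameter and perform a regular (stochastic Taylor) perturbation expansion of the Euler--Maruyama recursion \eqref{e-m} about the deterministic trajectory \eqref{eq:DeterRNNDiscrete}. I would write $h^{\delta}_m = \bar h^{\delta}_m + \epsilon\, h^{(1)}_m + \epsilon^2 h^{(2)}_m + \mathcal O(\epsilon^3)$, where $h^{(1)}_m, h^{(2)}_m$ are random (noise-dependent) with $h^{(1)}_0 = h^{(2)}_0 = 0$. Substituting this ansatz into \eqref{e-m} (with $\sigma \mapsto \epsilon\sigma$), Taylor-expanding $f(\cdot,\hat x_m)$ and $\sigma(\cdot,\hat x_m)$ about $\bar h^{\delta}_m$, and matching powers of $\epsilon$ produces a hierarchy of linear recursions for the $h^{(j)}_m$. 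Assumption \ref{ass:Smooth}(i) (Lipschitz derivatives up to order three) guarantees these expansions are valid with controlled remainders along the trajectory.

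At order $\epsilon$ one obtains the linear recursion $h^{(1)}_{m+1} = \hat J_m h^{(1)}_m + \sqrt{\delta_m}\,\sigma_m \xi_m$, whose solution is exactly the Jacobian-product convolution $h^{(1)}_M = \sum_{k=1}^M \hat\Phi_{M-1,k}\sqrt{\delta_{k-1}}\,\sigma_{k-1}\xi_{k-1}$. At order $\epsilon^2$ one obtains $h^{(2)}_{m+1} = \hat J_m h^{(2)}_m + \tfrac{\delta_m}{2} D^2 f_m(h^{(1)}_m, h^{(1)}_m) + \sqrt{\delta_m}\,(D\sigma_m\, h^{(1)}_m)\xi_m$, where $D^2 f_m$ and $D\sigma_m$ are the second and first derivatives of the drift and diffusion at $\bar h^{\delta}_m$. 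I would then Taylor-expand the loss, $\ell(h^{\delta}_M) = \ell(\bar h^{\delta}_M) + \nabla\ell^{\top}(\epsilon h^{(1)}_M + \epsilon^2 h^{(2)}_M) + \tfrac12 \epsilon^2 (h^{(1)}_M)^{\top} H_{\bar h^{\delta}}\ell\, h^{(1)}_M + \mathcal O(\epsilon^3)$ and take expectations.

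The key simplifications come from the Gaussian structure: since the $\xi_m$ are independent with mean zero, the order-$\epsilon$ term $\nabla\ell^{\top}\mathbb E[h^{(1)}_M]$ vanishes and the martingale term in $h^{(2)}$ has zero mean, so the leading correction is the order-$\epsilon^2$ quantity $\nabla\ell^{\top}\mathbb E[h^{(2)}_M] + \tfrac12\mathbb E[(h^{(1)}_M)^{\top} H_{\bar h^{\delta}}\ell\, h^{(1)}_M]$. Using that $\mathbb E[\xi_i \xi_j^{\top}]$ equals $I$ when $i=j$ and zero otherwise, together with $\mathbb E[\xi^{\top} A\xi] = \tr(A)$, the loss-curvature piece collapses to the single sum $\hat R$ (the loss Hessian contracted against the propagated noise covariance $\sum_m \delta_{m-1}\hat\Phi_{M-1,m}\sigma_{m-1}\sigma_{m-1}^{\top}\hat\Phi_{M-1,m}^{\top}$), while solving the $h^{(2)}$ recursion (the martingale forcing dropping in expectation) turns the quadratic drift forcing into the trace-against-drift-Hessian vectors $\vecc{v}_m$, which after propagation by $\hat\Phi$ and contraction with $\nabla\ell$ yield $\hat Q$; absorbing the factor of two into the overall $\tfrac{\epsilon^2}{2}$ prefactor gives \eqref{exp_reg}.

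For the bounds I would factor the uniform operator-norm estimate $\|\hat\Phi_{m,k}\|_2 \le e^{LT}$ — which follows by submultiplicativity from $\|\hat J_j\|_2 \le 1 + L\delta_j$, with $L$ a Lipschitz bound for $\partial f/\partial h$ — together with boundedness of $\sigma_m$ and of the loss and drift Hessians along the compact deterministic trajectory, out of the sums; the stated powers of $\Delta$ then follow by collecting the explicit step-size factors $\delta_{m-1}$, with $\hat Q$ carrying one additional such factor relative to $\hat R$ through the extra Jacobian-curvature layer in $\vecc{v}_m$, so that $\hat Q$ is of strictly higher order in $\Delta$. I expect the main obstacle to be not the formal expansion but the \emph{uniform} control of the $\mathcal O(\epsilon^3)$ remainder: one must establish $L^p$ moment bounds on $h^{(1)}_m$ and $h^{(2)}_m$ (uniform in $\epsilon$ and in the partition), which follow from Gaussian moment bounds and the uniform boundedness of $\hat\Phi$, and then bound the third-order Taylor remainders via Assumption \ref{ass:Smooth} while ensuring the trajectory stays in a compact set on which those derivatives are bounded. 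This remainder estimate is precisely where the third-order smoothness of Assumption \ref{ass:Smooth}(i) is indispensable.
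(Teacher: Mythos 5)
Your proposal follows essentially the same route as the paper's own proof: the paper sets up exactly your matched-power hierarchy of recursions for $\hat h^{(0)}_m,\hat h^{(1)}_m,\hat h^{(2)}_m$, proves the same second-order Taylor expansion of the loss with an almost-sure $\mathcal{O}(\epsilon^3)$ remainder (controlled by $L^p$ moment bounds using the third-order smoothness, just as you anticipate), and then evaluates $\nabla\ell^{T}\mathbb{E}[\hat h^{(2)}_M]$ and $\tfrac12\mathbb{E}[(\hat h^{(1)}_M)^{T} H_{\bar h^{\delta}}\ell\, \hat h^{(1)}_M]$ via the same Gaussian identities, with the martingale forcing dropping in expectation. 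Your bound argument (uniform control of $\|\hat\Phi_{m,k}\|_2$ by submultiplicativity, then counting the $\delta$-weighted sums, one for $\hat R$ and two for $\hat Q$) likewise matches the ``straightforward bounds'' the paper invokes.
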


If the loss is convex, then $\hat{R}$ is non-negative, but $\hat{Q}$ needs not be. However, $\hat{Q}$ can be made negligible relative to $\hat{R}$ provided that $\Delta$ is taken sufficiently small.
This also ensures that the E-M approximations are accurate. 

To summarize, Theorem \ref{thm_exp_reg_discrete} implies that the injection of noise into the hidden states of deterministic RNN is, on average, approximately equivalent to a regularized objective functional. Moreover, the explicit regularizer is solely determined by the discrete-time flow generated by the Jacobians $\frac{\partial f_{m}}{\partial \bar{h}}(\bar{h}^{\delta}_m)$, the diffusion coefficients $\sigma_{n}$, and the Hessian of the loss function, all evaluated along the dynamics of the deterministic RNN.
We can therefore expect that the use of NRNNs as a regularization mechanism should reduce the state-to-state Jacobians and Hessian of the loss function according to the noise level $\epsilon$. 
Indeed, NRNNs exhibit a smoother Hessian landscape than that of the deterministic counterpart (see Figure 3 in {\bf SM}). 

The Hessian of the loss function commonly appears in implicit regularization analyses, and suggests a preference towards wider minima in the loss landscape. Commonly considered a positive attribute \cite{keskar2016large}, this, in turn, suggests a degree of robustness in the loss to perturbations in the hidden states \cite{yao2018hessian}. 
More interesting, however, is the appearance of the Jacobians, which is indicative of a preference towards slower, more stable dynamics. 
Both of these attributes suggest NRNNs could exhibit a strong tendency towards models which are less sensitive to input perturbations. Overall, we can see that the use of NRNNs as a regularization mechanism reduces the state-to-state Jacobians
and Hessian of the loss function according to the noise level.

\section{Implications in Classification Tasks}
\label{sec:classif}

Our focus now turns to an investigation of the benefits of  NRNNs over their deterministic counterparts for classification tasks. From Theorem \ref{thm_exp_reg_discrete}, it is clear that adding noise to deterministic RNN implicitly regularizes the state-to-state Jacobians. 
Here, we show that doing so also enhances an implicit tendency towards  classifiers with large classification margin.
Our analysis here covers general deterministic RNNs, 
although we also apply our results to obtain explicit expressions for Lipschitz RNNs.

Let $\mathcal{S}_N$ denote a set of training samples $s_n \coloneqq (\vecc{x}_n, y_{n})$ for $n=1,\dots,N$, where each input sequence $\vecc{x}_n = (x_{n,0},x_{n,1},\dots,x_{n,M-1}) \in \mathcal{X} \subset \mathbb{R}^{d_x M}$ has a corresponding class label $y_n \in \mathcal{Y} = \{1,\dots,d_y\}$. Following the statistical learning framework, these samples are assumed to be independently drawn from an underlying probability distribution $\mu$ on the sample space $\mathcal{S} = \mathcal{X} \times \mathcal{Y}$. An RNN-based classifier $g^\delta(\vecc{x})$ is constructed in the usual way by taking 

\begin{equation} \label{det_RNN_disc}
    g^\delta(\vecc{x}) = \mathrm{argmax}_{i=1,\dots,d_y} p^i(V \bar{h}^\delta_M[\vecc{x}]),
\end{equation}

where $p^i(x) = e^{x^i} / \sum_j e^{x^j}$ is the softmax function. Letting $\ell$ denoting the cross-entropy loss, such a classifier is trained from $\mathcal{S}_N$ by minimizing the empirical risk (training error),
$\mathcal{R}_N(g^\delta) \coloneqq \frac{1}{N} \sum_{n=1}^N \ell(g^\delta(\vecc{x}_n), y_n)$,
as a proxy for the true (population) risk (test error), $\mathcal{R}(g^\delta) = \mathbb{E}_{(\vecc{x},y)\sim\mu}\ell(g^\delta(\vecc{x}),y)$, with $(\vecc{x},y) \in \mathcal{S}$.  The measure used to quantify the prediction quality is the generalization error (or estimation error), which is the difference between the empirical risk of the classifier on the training set and the true risk: $\mathrm{GE}(g^\delta) := |\mathcal{R}(g^\delta) - \mathcal{R}_N(g^\delta)|$.

The classifier is a  function of the output of the deterministic RNN, which is an Euler discretization of the ODE \eqref{eq:ContRNN} with step sizes $\delta = (\delta_m)$.  In particular, for the Lipschitz RNN, 
\begin{equation} \label{disc_phi}
    \hat{\Phi}_{m,k} =  \hat{J}_m \hat{J}_{m-1} \cdots \hat{J}_k,  
\end{equation}
where  $\hat{J}_l =  I + \delta_l (A + D_l W)$, with  $D_l^{ij} =  a'([W \bar{h}^{\delta}_l + U \hat{x}_l +  b]^i) e_{ij}$.

In the following, we let $\conv(\mathcal{X})$ denote the convex hull of $\mathcal{X}$.  
We let $\hat{\vecc{x}}_{0:m} := (\hat{x}_0, \dots, \hat{x}_m)$ so that $\hat{\vecc{x}} = \hat{\vecc{x}}_{0:M-1}$, and use the notation $f[\vecc{x}]$ to indicate the dependence of the function $f$ on the vector $\vecc{x}$. Our result will depend on two characterizations of a training sample $s_i = (\vecc{x}_i, y_i)$.

\begin{defn}[Classification Margin] \label{defn_class_margin}
The classification margin of a training sample $s_i = (\vecc{x}_i, y_i)$ measured by the Euclidean metric $d$ is defined as the radius of the largest $d$-metric ball in $\mathcal{X}$ centered at $\vecc{x}_i$ that is contained in the decision region associated with the class label $y_i$, i.e., it is: 
$\gamma^d(s_i) = \sup\{ a: d(\vecc{x}_i, \vecc{x}) \leq a \Rightarrow g^\delta(\vecc{x}) = y_i \ \ \forall \vecc{x}\}.$
\end{defn}

Intuitively, a larger classification margin allows a classifier to associate a larger region centered on a point $\vecc{x}_i$ in the input space to the same class. 
This makes the classifier less sensitive to input perturbations, and a perturbation of $\vecc{x}_i$ is still likely to fall within this region, keeping the classifier prediction. 
In this sense, the classifier becomes more robust. In our case, the networks are trained by a loss (cross-entropy) that promotes separation of different classes in the network output. This, in turn, maximizes a certain notion of score of each training sample.

\begin{defn}[Score] \label{defn_score}
For a  training sample $s_i = (\vecc{x}_i, y_i)$, we define its score as $o(s_i) = \min_{j \neq y_i } \sqrt{2} (e_{y_i} - e_j)^T  S^\delta[\vecc{x}_i] \geq 0,$
where $e_i \in \RR^{d_y}$ is the Kronecker delta vector with  $e_i^i = 1$ and $e_i^j = 0$ for $i\neq j$, $S^\delta[\vecc{x}_i] :=  p(V \bar{h}^\delta_M[\vecc{x}_i ])$ with $\bar{h}^\delta_M[\vecc{x}_i]$ denoting the hidden state of the RNN, driven by the input sequence $\vecc{x}_i$, at terminal index $M$. 
\end{defn}

Recall that the classifier $g^\delta(\vecc{x}) = \arg \max_{i \in 1,\dots, d_y} [S^\delta]^i[\vecc{x}]$,
and the decision boundary between class $i$ and class $j$ in the feature  space is given by the hyperplane $\{ z = S^\delta : z^i = z^j\}$.  A positive score implies that at the network output, classes are separated by  a margin that corresponds to the score. However, a large score may not imply a large classification margin.

Following the approach of \cite{sokolic2017robust, xu2012robustness}, we  obtain the  second main result, providing bounds for classification margin  for the deterministic RNN classifiers $g^\delta$. We also provide a generalization bound in terms of the classification margin under additional assumptions (see Theorem 11 in {\bf SM}).

\begin{thm}
\label{thm_gen_discrete}
Suppose that Assumption \ref{ass:Smooth} holds. 
Assume that the score $o(s_i) > 0 $ and
\begin{equation}
\label{eq:classmarginbound}
    \gamma(s_i) :=  \frac{o(s_i)}{ C \sum_{m=0}^{M-1} \delta_m \sup_{\hat{\vecc{x}} \in \conv(\mathcal{X})}
    \|\hat{\Phi}_{M,m+1}[\hat{\vecc{x}}]\|_2} > 0,
\end{equation}
where $C = \|V\|_2 \left( \max_{m=0,1,\dots,M-1}  \left\| \frac{\partial f(\bar{h}^\delta_m, \hat{x}_m)}{\partial \hat{x}_m}   \right\|_2 \right) > 0$
is independent of $s_i$ (in particular, $C = \|V\|_2 [\max_{m=0,\dots,M-1}  \|D_m U\|_2 ]$ for Lipschitz RNNs),
the $\hat{\Phi}_{m,k}$ are defined in \eqref{disc_phi} and the $\delta_m$ are the step sizes.
Then, the classification margin for the training sample $s_i$:
\begin{equation} \label{ub_classmargin}
\gamma^d(s_i)  \geq \gamma(s_i). 
\end{equation}
\end{thm}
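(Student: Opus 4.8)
The plan is to follow the robustness--margin framework of \cite{sokolic2017robust, xu2012robustness}: I will show that whenever the input perturbation satisfies $\|\vecc{x} - \vecc{x}_i\| \leq \gamma(s_i)$, the classifier decision cannot change, so that $a = \gamma(s_i)$ belongs to the set defining $\gamma^d(s_i)$ in Definition \ref{defn_class_margin}, giving $\gamma^d(s_i) \geq \gamma(s_i)$. The starting observation is that $g^\delta(\vecc{x}) = y_i$ holds precisely when $(e_{y_i} - e_j)^T S^\delta[\vecc{x}] > 0$ for every $j \neq y_i$, since the softmax is strictly monotone and preserves the ordering of the logits. At the base point $\vecc{x}_i$ the definition of the score supplies the available room before any boundary is reached: $(e_{y_i} - e_j)^T S^\delta[\vecc{x}_i] \geq o(s_i)/\sqrt{2}$ for all $j \neq y_i$.

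Next I would control how far the softmax output can move under an input perturbation. For any $j$, Cauchy--Schwarz together with $\|e_{y_i} - e_j\| = \sqrt{2}$ gives $|(e_{y_i} - e_j)^T (S^\delta[\vecc{x}] - S^\delta[\vecc{x}_i])| \leq \sqrt{2}\,\|S^\delta[\vecc{x}] - S^\delta[\vecc{x}_i]\|$. The key elementary fact I would invoke is that the softmax map is $\tfrac12$-Lipschitz in the Euclidean norm: its Jacobian is $\diag(p) - p p^T$, a positive semidefinite matrix whose quadratic form equals the variance of a categorical random variable and is therefore bounded by $\tfrac12$ on the unit sphere. Hence $\|S^\delta[\vecc{x}] - S^\delta[\vecc{x}_i]\| \leq \tfrac12 \|V(\bar{h}^\delta_M[\vecc{x}] - \bar{h}^\delta_M[\vecc{x}_i])\|$.

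The central step is to bound the Lipschitz constant of the input-to-logit map $\vecc{x} \mapsto V \bar{h}^\delta_M[\vecc{x}]$. Differentiating the discrete recursion \eqref{eq:DeterRNNDiscrete} by the chain rule, the sensitivity of the terminal state to the $m$th input block is $\partial \bar{h}^\delta_M / \partial \hat{x}_m = \delta_m\, \hat{\Phi}_{M,m+1}\, \partial f(\bar{h}^\delta_m, \hat{x}_m)/\partial \hat{x}_m$, with $\hat{\Phi}$ the Jacobian products in \eqref{disc_phi}. Integrating along the straight segment from $\vecc{x}_i$ to $\vecc{x}$ --- which remains in $\conv(\mathcal{X})$ by convexity, with all Jacobians well defined and continuous by Assumption \ref{ass:Smooth} --- and using $\|\hat{x}_m - \hat{x}_{i,m}\| \leq \|\vecc{x} - \vecc{x}_i\|$ for each block, I obtain
\begin{equation*}
\|V(\bar{h}^\delta_M[\vecc{x}] - \bar{h}^\delta_M[\vecc{x}_i])\| \leq C \sum_{m=0}^{M-1} \delta_m \sup_{\hat{\vecc{x}} \in \conv(\mathcal{X})} \|\hat{\Phi}_{M,m+1}[\hat{\vecc{x}}]\|_2 \; \|\vecc{x} - \vecc{x}_i\|,
\end{equation*}
where $C = \|V\|_2 \max_m \|\partial f(\bar{h}^\delta_m,\hat{x}_m)/\partial \hat{x}_m\|_2$ absorbs the remaining factors, reducing to $\|V\|_2 \max_m \|D_m U\|_2$ for Lipschitz RNNs since there $\partial f/\partial \hat{x}_m = D_m U$.

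Combining the three estimates, the factor $\tfrac12$ from the softmax and the $\sqrt{2}$ from $\|e_{y_i}-e_j\|$ combine exactly to cancel the $\sqrt{2}$ in the score: for every $j \neq y_i$ and every $\vecc{x}$ with $\|\vecc{x} - \vecc{x}_i\| \leq \gamma(s_i)$,
\begin{equation*}
(e_{y_i}-e_j)^T S^\delta[\vecc{x}] \geq \frac{1}{\sqrt{2}}\left( o(s_i) - C \sum_{m=0}^{M-1}\delta_m \sup_{\hat{\vecc{x}}\in\conv(\mathcal{X})}\|\hat{\Phi}_{M,m+1}[\hat{\vecc{x}}]\|_2\, \|\vecc{x}-\vecc{x}_i\| \right) \geq 0,
\end{equation*}
so the decision stays $y_i$ and Definition \ref{defn_class_margin} yields $\gamma^d(s_i) \geq \gamma(s_i)$. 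I expect the main obstacle to be the central step above: setting up the chain-rule and fundamental-theorem-of-calculus argument rigorously so that the per-block Jacobian norms can be uniformly replaced by their suprema over $\conv(\mathcal{X})$, and tracking the numerical constants (the $\sqrt{2}$ and the $\tfrac12$-Lipschitz softmax bound) so that they assemble into precisely the stated $\gamma(s_i)$ rather than a constant multiple of it.
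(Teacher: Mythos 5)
Your proof is correct and follows essentially the same route as the paper's: the score-plus-robustness argument of Sokolic--Xu, combined with a chain-rule and fundamental-theorem-of-calculus bound on the input-to-output sensitivity in which the products $\hat{\Phi}_{M,m+1}$ are bounded uniformly over $\conv(\mathcal{X})$, yielding exactly the Lipschitz constant $C \sum_{m=0}^{M-1}\delta_m \sup_{\hat{\vecc{x}}\in\conv(\mathcal{X})}\|\hat{\Phi}_{M,m+1}[\hat{\vecc{x}}]\|_2$. The only difference is bookkeeping: you factor out the softmax's $\tfrac12$-Lipschitz constant explicitly and cancel it against the $\sqrt{2}$ factors, whereas the paper bounds the composite Jacobian $\|VE\,\cdots\|_2$ directly, and its constant accounting there is looser --- it asserts $\|v_{y_i j}\|_2 = 1$ (the true value is $2$, since $v_{ij}=\sqrt{2}(e_i-e_j)$) while dropping the softmax Jacobian via $\|E\|_2\le 1$ rather than $\|E\|_2\le\tfrac12$, two inaccuracies that happen to cancel --- so your version lands the stated constant $C$ more carefully.
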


Now,  recalling from Section \ref{sec:ImpReg}, up to $\mathcal{O}(\epsilon^2)$ and under the assumption that $\hat{Q}$ vanishes, the loss minimized by the NRNN classifer is, on average, $\ell(\bar{h}^{\delta}_M) + \epsilon^2 \hat{R}(\bar{h}^{\delta})$, as $\epsilon \to 0$, with regularizer
\begin{equation} \label{regularizer_disc}
    \hat{R}(\bar{h}^{\delta}) = \frac{1}{2} \sum_{m=1}^{M} \delta_{m-1} \|\hat{M}_{M-1} \hat{\Phi}_{M-1,m} \sigma_{m-1}\|_F^2,
\end{equation}
where $\hat{M}_M^T \hat{M}_M \coloneqq H_{\bar{h}^{\delta}_{M}} l$ is  the Cholesky decomposition of the Hessian matrix of the convex cross-entropy loss.  
The appearance of the state-to-state Jacobians in $\Phi_{m,k}$ in both the regularizer (\ref{regularizer_disc}) and the lower bound (\ref{eq:classmarginbound}) suggests that  noise injection implicitly aids generalization performance. 
More precisely, in the small noise regime and on average, NRNNs promote classifiers with large classification margin, an attribute linked to both improved robustness and generalization \cite{xu2012robustness}. In this sense, training with NRNN classifiers is a stochastic strategy to improve generalization over deterministic RNN classifiers, particularly in learning tasks where the given data is corrupted (c.f. the caveats pointed out in \cite{stutz2019disentangling}). 
 
Theorem \ref{thm_gen_discrete} implies that the lower bound for the classification margin is determined by the spectrum of the $\hat{\Phi}_{M-1,m}$. To make the lower bound large, keeping $\delta_m$ and $M$ fixed,  the spectral norm of the $\hat{\Phi}_{M-1,m}$ should be made small. Doing so improves stability of the RNN, but may also lead to vanishing gradients, hindering capacity of the model to learn. 
To maximize the lower bound while avoiding the vanishing gradient problem, one should tune the numerical step sizes $\delta_m$ and noise level $\epsilon$ in NRNN appropriately. RNN architectures for the drift which help to ensure moderate Jacobians (e.g. $\|\hat{\Phi}_{M-1,m}\|_2 \approx 1$ for all $m$  \cite{chen2018dynamical}) also remain valuable in this respect.

\section{Stability and Noise-Induced Stabilization}
\label{sect_stability}
 
Here we obtain sufficient conditions to guarantee stochastic stability of CT-NRNNs. This will also provide another lens to highlight the potential of NRNNs for improved robustness.
A dynamical system is considered \emph{stable} if trajectories which are close to each other initially remain close at subsequent times. As observed in \cite{pascanu2013difficulty,miller2018stable,chang2019antisymmetricrnn}, stability plays an essential role in the study of RNNs to avoid the \emph{exploding gradient problem}, a property of unstable systems where the gradient increases in magnitude with the depth. While gradient clipping during training can somewhat alleviate this issue, better performance and robustness is achieved by enforcing stability in the model itself. 

Our stability analysis will focus on establishing \emph{almost sure exponential stability} (for other notions of stability, see {\bf SM}) for  CT-NRNNs with the drift function (\ref{eq:LipschitzRNN}). To preface the definition, consider initializing the SDE at two different random variables $h_0$ and $h_0' := h_0 + \epsilon_0$, where $\epsilon_0 \in \RR^{d_h}$ is a constant non-random perturbation 
with $\|\epsilon_0\| \leq \delta$. The resulting hidden states, $h_t$ and $h_t'$, are set to satisfy (\ref{NLRNN}) with the same Brownian motion $B_t$, starting from their initial values $h_0$ and $h_0'$, respectively. 
The evolution of $\epsilon_t = h_t' - h_t$ satisfies 
\begin{equation}
    \dd\epsilon_t = A\epsilon_t \dd t + \Delta a_t(\epsilon_t) \dd t  + \Delta \sigma_t(\epsilon_t) \dd B_t, \label{eq_perturb}
\end{equation}
where $\Delta a_t(\epsilon_t) = a(Wh'_t + Ux_t + b) - a(Wh_t + Ux_t + b)$ and $\Delta\sigma_t(\epsilon_t) = \sigma(h_t+\epsilon_t,x_t) - \sigma(h_t,x_t)$.
Since $\Delta a_t(0) = 0$, $\Delta\sigma_t(0) = 0$ for all $t \in [0,T]$, $\epsilon_t = 0$ admits a trivial \emph{equilibrium} for \eqref{eq_perturb}.  
Our objective is to analyze the stability of the solution $\epsilon_t = 0$, that is, to see how the final state $\epsilon_T$ (and hence the output of the RNN) changes for an arbitrarily small initial perturbation $\epsilon_0 \neq 0$. 
To this end, we consider an extension of the Lyapunov exponent to SDEs at the level of sample path \cite{mao2007stochastic}.

\begin{defn}[Almost sure global exponential stability]
The sample (or pathwise) Lyapunov exponent of the trivial solution of \eqref{eq_perturb} is
$\Lambda = \limsup_{t \to \infty} t^{-1} \log \|\epsilon_t\|$. 
The trivial solution $\epsilon_t = 0$ is \emph{almost surely globally exponentially stable} if $\Lambda$ is almost surely negative for all $\epsilon_0 \in \RR^{d_h}$. 
\end{defn}

For the sample Lyapunov exponent $\Lambda(\omega)$, there is a constant $C>0$ and a random variable $0 \leq \tau(\omega) < \infty$ such that for all $t > \tau(\omega)$, $\|\epsilon_t\| = \|h_t'-h_t\| \leq  C e^{\Lambda t}$ almost surely. Therefore, almost sure exponential stability implies that almost all sample paths of (\ref{eq_perturb}) will tend to the equilibrium solution $\epsilon = 0$ exponentially fast. With this definition in tow, we obtain the following stability result.

\begin{thm} 
\label{prop:Stability}
Assume that $a$ is monotone non-decreasing, and
$\sigma_1 \|\epsilon\| \leq  \|\Delta \sigma_t(\epsilon)\|_F  \leq \sigma_2 \|\epsilon\|$ for all nonzero $\epsilon \in \mathbb{R}^{d_h}$, $t \in [0,T]$.
Then for any $\epsilon_0 \in \mathbb{R}^{d_h}$, with probability one,
\begin{equation} \label{ineq_cor}
    \phi + \lambda_{\min}(A^{\sym}) \leq \Lambda \leq \psi + L_a\sigma_{\max}(W) + \lambda_{\max}(A^{\sym}),
\end{equation} 
with $\phi = -\sigma_2^2 + \frac{\sigma_1^2}{2}$ and $\psi = -\sigma_1^2+\frac{\sigma_2^2}{2}$,
where $L_a$ is the Lipschitz constant of $a$. 
\end{thm}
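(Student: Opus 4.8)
The plan is to follow the classical sample-path Lyapunov-exponent technique for It\^o SDEs (in the spirit of the stochastic stabilization theory of \cite{mao2007stochastic}), applied to the perturbation equation \eqref{eq_perturb}. The central object is the scalar process $\log\|\epsilon_t\|^2$. The strategy is to write down its It\^o expansion, bound the resulting drift pathwise using the structural hypotheses, show that the martingale part contributes nothing to the long-time growth rate, and finally extract $\Lambda$ by dividing by $2t$ and sending $t\to\infty$. Throughout I take $\epsilon_0\neq 0$ (the case $\epsilon_0=0$ gives $\epsilon_t\equiv 0$ and $\Lambda=-\infty$, trivially within the bounds).

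First I would apply It\^o's formula to $\log\|\epsilon_t\|^2$. Writing $b_t = A\epsilon_t + \Delta a_t(\epsilon_t)$ and $g_t = \Delta\sigma_t(\epsilon_t)$, this produces
\[ \log\|\epsilon_t\|^2 = \log\|\epsilon_0\|^2 + \int_0^t D_s\,\dd s + M_t, \]
where $D_s = 2\epsilon_s^T b_s/\|\epsilon_s\|^2 + \|g_s\|_F^2/\|\epsilon_s\|^2 - 2\|g_s^T\epsilon_s\|^2/\|\epsilon_s\|^4$ is the drift and $M_t = \int_0^t 2\epsilon_s^T g_s/\|\epsilon_s\|^2\,\dd B_s$ is a continuous local martingale. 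That $\|\epsilon_t\|$ stays away from $0$ for $t>0$ (so $\log\|\epsilon_t\|^2$ is well defined along trajectories) follows from the nondegeneracy implied by the lower bound $\sigma_1\|\epsilon\|\le\|\Delta\sigma_t(\epsilon)\|_F$.

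Next I would bound $D_s$ pathwise, term by term. The linear part is a Rayleigh quotient, $2\epsilon^T A\epsilon/\|\epsilon\|^2 = 2\epsilon^T A^{\sym}\epsilon/\|\epsilon\|^2 \in [2\lambda_{\min}(A^{\sym}),\,2\lambda_{\max}(A^{\sym})]$. For the nonlinear drift I would use a mean-value representation: monotonicity of $a$ gives $\Delta a_t(\epsilon) = \diag(a'(\zeta))\,W\epsilon$ with nonnegative diagonal, which I would use to sign-control $\epsilon^T\Delta a_t(\epsilon)$ for the lower bound, while the Lipschitz constant together with $\|W\epsilon\|\le\sigma_{\max}(W)\|\epsilon\|$ yields $\epsilon^T\Delta a_t(\epsilon)\le L_a\sigma_{\max}(W)\|\epsilon\|^2$, the source of the $L_a\sigma_{\max}(W)$ term in the upper bound. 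The diffusion correction $\|g\|_F^2/\|\epsilon\|^2 - 2\|g^T\epsilon\|^2/\|\epsilon\|^4$ is then sandwiched using the hypotheses $\sigma_1\|\epsilon\|\le\|g\|_F\le\sigma_2\|\epsilon\|$ and the induced control on the radial component $\|g^T\epsilon\|^2/\|\epsilon\|^2$; carrying the factors of $\tfrac12$ that survive after dividing by $2$ produces exactly $\phi=-\sigma_2^2+\tfrac{\sigma_1^2}{2}$ and $\psi=-\sigma_1^2+\tfrac{\sigma_2^2}{2}$. Finally I would dispose of $M_t$: its quadratic variation $\langle M\rangle_t = \int_0^t 4\|g_s^T\epsilon_s\|^2/\|\epsilon_s\|^4\,\dd s$ grows at most linearly (since $\|g^T\epsilon\|^2/\|\epsilon\|^4\le\|g\|_F^2/\|\epsilon\|^2\le\sigma_2^2$), so the strong law for continuous local martingales (equivalently, Mao's exponential martingale inequality plus Borel--Cantelli along $t=n$) gives $M_t/t\to 0$ almost surely. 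Combining the pathwise drift bounds with this and dividing by $2t$ before taking $\limsup_{t\to\infty}$ yields \eqref{ineq_cor}.

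I expect the main obstacle to be the simultaneous control of the two diffusion quadratic forms --- the Frobenius term $\|g\|_F^2/\|\epsilon\|^2$ and the radial term $\|g^T\epsilon\|^2/\|\epsilon\|^4$ --- together with their interaction with $\langle M\rangle_t$, since it is exactly this bookkeeping that generates the asymmetric constants $\phi$ and $\psi$; in particular, obtaining the $-\sigma_1^2$ in the upper bound appears to require that $\sigma_1$ lower-bound the radial component $\|g^T\epsilon\|^2/\|\epsilon\|^2$ and not merely the Frobenius norm, so care is needed to see how the stated hypothesis delivers this. A secondary delicate point is justifying that the nonlinear-drift quadratic form $\epsilon^T\Delta a_t(\epsilon)$ has a favorable sign from monotonicity alone, which is what permits the lower bound to avoid a $-L_a\sigma_{\max}(W)$ contribution.
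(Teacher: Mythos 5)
Your proposal is, in substance, the paper's own proof with the black box opened. The paper proves this theorem by invoking a ``stochastic Lyapunov theorem'' (a combination of Theorems 4.3.3 and 4.3.5 of Mao's book in the case $p=2$) and verifying its three conditions for the choice $V(\epsilon)=\|\epsilon\|^2$: the Rayleigh-quotient bounds on $\epsilon^T A\epsilon$, the bounds $0\le \Delta a_t(\epsilon)^T\epsilon \le L_a\sigma_{\max}(W)\|\epsilon\|^2$, and a sandwich on the diffusion terms, yielding the constants $c_2 = 2\lambda_{\min}(A^{\sym})+\sigma_1^2$, $C_2 = 2\lambda_{\max}(A^{\sym})+2L_a\sigma_{\max}(W)+\sigma_2^2$, $c_3 = 4\sigma_1^2$, $C_3=4\sigma_2^2$, from which the interval for $\Lambda$ is read off. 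Your It\^{o} expansion of $\log\|\epsilon_t\|^2$, the pathwise drift bounds, and the martingale-SLLN (or exponential-martingale-plus-Borel--Cantelli) step are precisely the proof of that cited theorem, and your bookkeeping reproduces the identical constants $\phi$ and $\psi$. So this is essentially the same approach, differing only in that you re-derive the citation inline.

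The two obstacles you flag are real, and it is instructive to see how the paper handles them: it does not resolve them from the main-text hypotheses either, but rather strengthens the hypotheses in its appendix proof. First, the radial component: the upper bound on $\Lambda$ requires $\|\epsilon^T\Delta\sigma_t(\epsilon)\|^2 \ge \sigma_1^2\|\epsilon\|^4$ (condition (iii) of the cited theorem with $c_3 = 4\sigma_1^2$), and the paper simply asserts this follows ``by the conditions on $\Delta\sigma_t$'' --- i.e., it implicitly reads the hypothesis $\sigma_1\|\epsilon\|\le\|\Delta\sigma_t(\epsilon)\|_F$ as a bound on the radial projection, which, as you correctly observe, a Frobenius lower bound does not imply in general. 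Second, the sign of the nonlinear drift term: the appendix version of the theorem assumes $0\le \Delta a_t(\epsilon)^T\epsilon \le L_a\|\epsilon\|^2$ outright rather than deriving it from monotonicity; your mean-value representation does not close this gap, since $\epsilon^T \diag(a'(\zeta))W\epsilon \ge 0$ can fail for a general $W$ even when $a'\ge 0$ (a nonnegative diagonal matrix times $W$ need not be positive semidefinite). Under those strengthened hypotheses your argument goes through verbatim; without them, both your proof and the paper's have the same holes. One small correction: the fact that $\epsilon_t\neq 0$ for all $t>0$ (so that $\log\|\epsilon_t\|^2$ is defined pathwise) is not a consequence of noise nondegeneracy; it is the standard non-attainability-of-zero lemma (Lemma 4.3.2 in Mao), which uses only that the drift and diffusion vanish at $0$ and are Lipschitz.
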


In the special case without noise ($\sigma_1 = \sigma_2 = 0$), we recover case (a) of Theorem 1 in \cite{erichson2020lipschitz}: when $A^\sym$ is negative definite and $\sigma_{\min}(A^\sym) > L_a \sigma_{\max}(W)$, Theorem \ref{prop:Stability} implies that (\ref{NLRNN}) is exponentially stable. 
Most strikingly, and similar to \cite{liu2019neural}, Theorem \ref{prop:Stability} implies that even if the deterministic CT-RNN is not exponentially stable, it can be stabilized through a stochastic perturbation.
Consequently, injecting noise appropriately can improve training performance.

\section{Empirical Results}
\label{sec:experiments}

The evaluation of robustness of neural networks (RNNs in particular) is an often neglected yet crucial aspect. In this section, we investigate the robustness of NRNNs and compare their performance to other recently introduced state-of-the-art models on both clean and corrupted data. 
We refer to Section G in {\bf SM} for further details of our experiments.

Here, we study the sensitivity of different RNN models with respect to a sequence of perturbed inputs during inference time. We consider different types of perturbations: (a) white noise; (b) multiplicative white noise; (c) salt and pepper; and (d) adversarial perturbations. 
To be more concrete, let $x$ be a sequence. The perturbations in consideration are as follows.

\begin{itemize}[leftmargin=*]
	\item Additive \emph{white noise perturbations} are constructed as $\tilde{x} = x + \Delta x$, where the additive noise is drawn from a Gaussian distribution $\Delta x \sim \mathcal{N}(0,\sigma)$. This perturbation strategy emulates measurement errors that can result from data acquisition with poor sensors (where $\sigma$ can be used to vary the strength of these errors). 
	 Multiplicative \emph{white noise perturbations} are constructed as $\tilde{x} = x \cdot \Delta x$, where the additive noise is drawn from a Gaussian distribution $\Delta x \sim \mathcal{N}(1,\sigma_M)$.
	
	\item \emph{Salt and pepper perturbations} emulate defective pixels that result from converting analog signals to digital signals. The noise model takes the form
	$\mathbb{P}(\tilde{X}=X)=1 - \alpha$, and $\mathbb{P}(\tilde{X}=\max) = \mathbb{P}(\tilde{X}=\min)=\alpha / 2,$
	where $\tilde{X}(i,j)$ denotes the corrupted image and $\min$ and $\max$ denote to the minimum and maximum pixel values. The parameter $\alpha$ controls the proportion of defective pixels.
	\item \emph{Adversarial perturbations} are ``worst-case'' non-random perturbations 
	maximizing the loss $\ell(g^\delta(X+\Delta X),y)$ 
	subject to the constraint that the norm of the perturbation $\|\Delta X\| \leq r$. We consider the fast gradient sign method for constructing these perturbations \cite{szegedy2013intriguing}.
\end{itemize}

We consider in addition to the NRNN three other RNNs derived from continuous-time models, including the Lipschitz RNN~\cite{erichson2020lipschitz} (the deterministic counterpart to our NRNN), the coupled oscillatory RNN (coRNN)~\cite{rusch2021coupled} and the antisymmetric RNN~\cite{chang2019antisymmetricrnn}. We also consider the exponential RNN~\cite{lezcano2019cheap}, a discrete-time model that uses orthogonal recurrent weights.
We train each model with the prescribed tuning parameters for the ordered (see Sec.~\ref{sec:orderedMNIST}) and permuted (see \textbf{SM}) MNIST task.
For the Electrocardiogram (ECG) classification task we performed a non-exhaustive hyper-tuning parameter search. 
For comparison, we train all models with hidden-to-hidden weight matrices of dimension $d_h=128$. 
We average the classification performance over ten different seed values. 

\subsection{Ordered Pixel-by-Pixel MNIST Classification}\label{sec:orderedMNIST}

First, we consider the ordered pixel-by-pixel MNIST classification task \cite{le2015simple}.
This task sequentially presents $784$ pixels to the model and uses the final hidden state to predict the class membership probability of the input image. 
In the \textbf{SM} we present additional results for the situation when instead of an ordered sequence a fixed random permutation of the input sequence is presented to the model.

\begin{table*}[!b]
	\caption{Robustness w.r.t. white noise ($\sigma$) and S\&P ($\alpha$) perturbations on the ordered MNIST task.}
	\label{tab:mnist-table}
	\centering
	\scalebox{0.8}{
		\begin{tabular}{l c c c c | c c c c c c}
			\toprule
			Name                  &  clean & $\sigma=0.1$ & $\sigma=0.2$ & $\sigma=0.3$ & $\alpha=0.03$ & $\alpha=0.05$ & $\alpha=0.1$\\
			\midrule 
			
			Antisymmetric RNN~\cite{chang2019antisymmetricrnn}  & 97.5\% & 45.7\% & 22.3\% & 17.0\%  & 77.1\% & 63.9\% & 42.6\%  \\
			
			CoRNN~\cite{rusch2021coupled}  & 99.1\% & 96.6\% & 61.9\% & 32.1\%  & 95.6\% & 88.1\% & 58.9\%  \\
			
			Exponential RNN~\cite{lezcano2019cheap} & 96.7\% & 86.7\% & 58.1\% & 33.3\%  & 83.6\% & 70.7\% & 43.4\%  \\
			
			Lipschitz RNN~\cite{erichson2020lipschitz}   & \textbf{99.2}\% & 98.4\% & 78.9\% & 47.1\%  & 97.6\% & 93.4\% & 73.5\%  \\
			
			NRNN (mult./add. noise: 0.02/0.02)   & 99.1\% & \textbf{98.9}\% & 88.4\% & 62.9\%  & 98.3\% & 95.6\% & 78.7\%  \\		
			
			NRNN (mult./add. noise: 0.02/0.05)   & 99.1\% & \textbf{98.9}\% & \textbf{92.2}\% & \textbf{73.5}\%  & \textbf{98.5}\% & \textbf{97.1}\% & \textbf{85.5}\%  \\
			
			\bottomrule
	\end{tabular}}
\end{table*}

\begin{table*}[!b]
	\caption{Robustness w.r.t. adversarial perturbations on the ordered pixel-by-pixel MNIST task.}
	\label{tab:mnist-table-fsgm}
	\centering
	\scalebox{0.8}{
		\begin{tabular}{l c c c c c c}
			\toprule
			Name                  &  $r=0.01$ & $r=0.05$ & $r=0.1$ & $r=0.15$\\
			\midrule

			Antisymmetric RNN~\cite{chang2019antisymmetricrnn}  & 79.4\% & 24.7\% & 11.4\% & 10.2\%  \\
			
			CoRNN~\cite{rusch2021coupled}  & 97.5\% & 85.5\% & 55.9\% & 35.1\%  \\
			
			Exponential RNN~\cite{lezcano2019cheap} & 94.5\% & 59.3\% & 19.7\% & 14.3\%  \\
			
			Lipschitz RNN~\cite{erichson2020lipschitz}   & 98.1\% & 85.7\% & 58.9\% & 37.1\%  \\
			
			NRNN (mult./add. noise: 0.02/0.02)   & \textbf{98.8}\% & 94.3\% & 79.6\% & 58.3\%  \\		
			
			NRNN (mult./add. noise: 0.02/0.05)   & \textbf{98.8}\% & \textbf{95.5}\% & \textbf{86.8}\% & \textbf{70.6}\%  \\
			
			\bottomrule
	\end{tabular}}
\end{table*}

\begin{figure*}[!b]
	\centering
	\begin{subfigure}[t]{0.49\textwidth}
		\centering
		\begin{overpic}[width=1\textwidth]{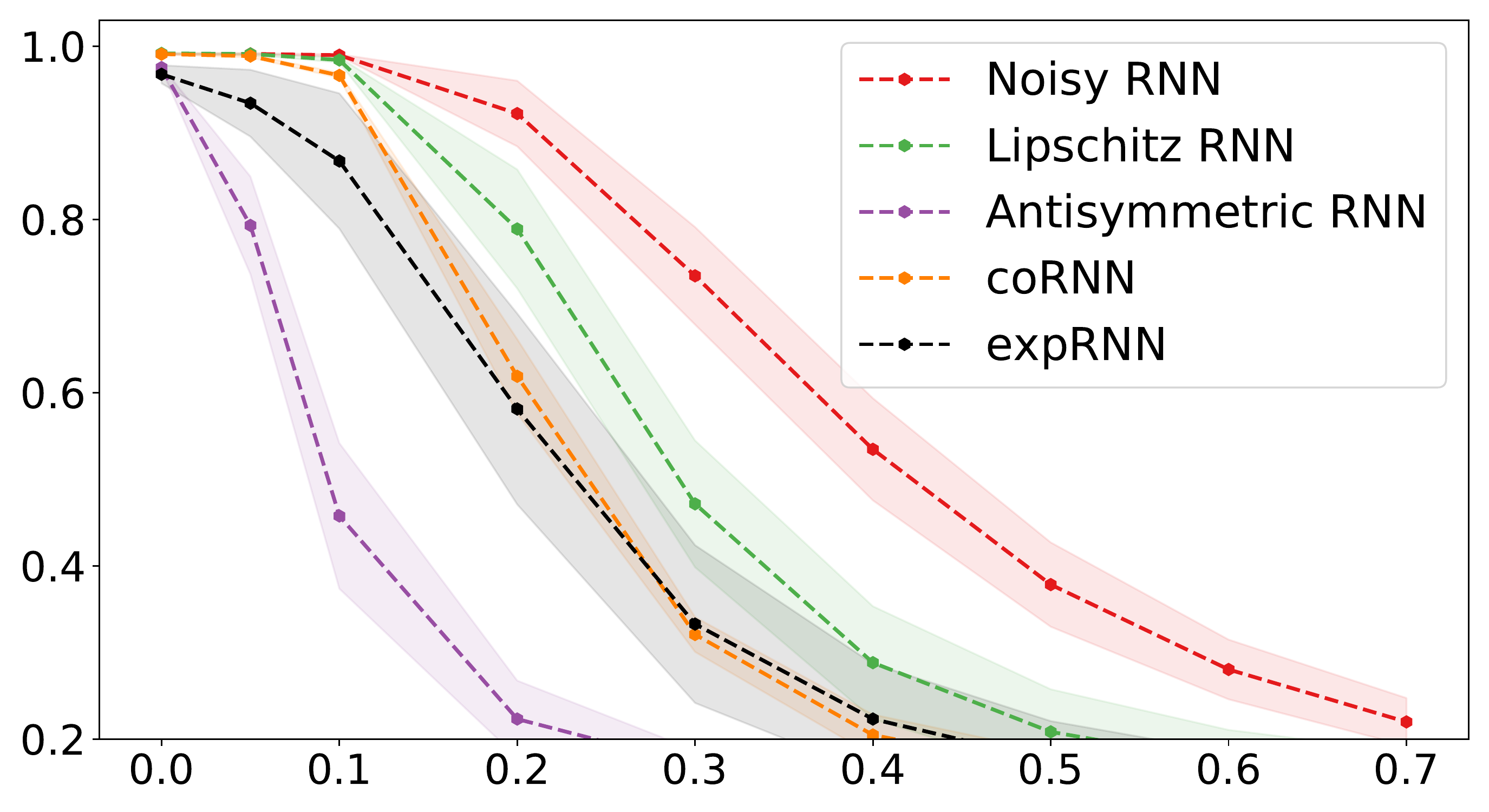}
			\put(-6,15){\rotatebox{90}{\footnotesize test accuracy}}			
			\put(42,-3){\footnotesize {amount of noise}}  	
		\end{overpic}\vspace{+0.2cm}		
		\caption{White noise perturbations.}
	\end{subfigure}
	~
	\begin{subfigure}[t]{0.49\textwidth}
		\centering
		\begin{overpic}[width=1\textwidth]{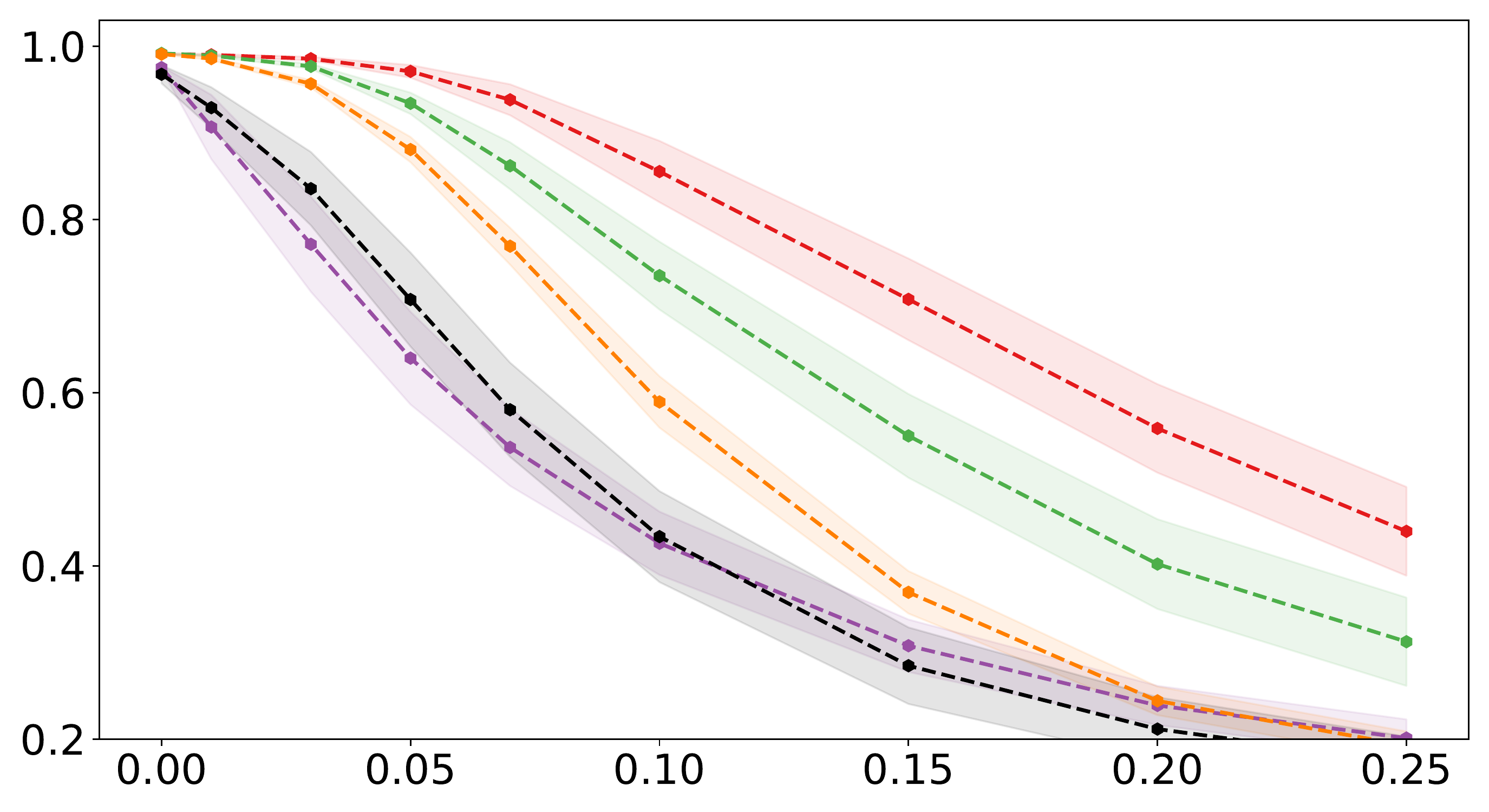} 
			\put(42,-3){\footnotesize {amount of noise}}  		
		\end{overpic}\vspace{+0.2cm}			
		\caption{Salt and pepper perturbations.}
	\end{subfigure}	
	\caption{Test accuracy for the ordered MNIST task as function of the strength of input perturbations.}
	\label{fig:smnist}
\end{figure*}

Table~\ref{tab:mnist-table} shows the average test accuracy (evaluated for models that are trained with 10 different seed values) for the ordered task. Here we present results for white noise and salt and pepper (S\&P) perturbations.
While the Lipschitz RNN performs best on clean input sequences, the NRNNs show an improved resilience to input perturbations. Here, we consider two different configuration for the NRNN. In both cases, we set the multiplicative noise level to $0.02$, whereas we consider the additive noise levels $0.02$ and $0.05$. We chose these configurations as they appear to provide a good trade-off between accuracy and robustness. Note, that the predictive accuracy on clean inputs starts to drop when the noise level becomes too large.

Table~\ref{tab:mnist-table-fsgm} shows the average test accuracy for the ordered MNIST task for adversarial perturbations. Again, the NRNNs show a superior resilience even to large perturbations, whereas the Antisymmetric and Exponential RNN appear to be sensitive even to small perturbations.

Figure~\ref{fig:smnist} summarizes the performance of different models with respect to white noise and salt and pepper perturbations. The colored bands indicate $\pm 1$ standard deviation around the average performance.  In all cases, the NRNN appears to be less sensitive to input perturbations as compared to the other models, while maintaining state-of-the-art performance for clean inputs.

\subsection{Electrocardiogram (ECG) Classification}

Next, we consider the Electrocardiogram (ECG) classification task that aims to discriminate between normal and abnormal heart beats of a patient that has severe congestive heart failure~\cite{goldberger2000physiobank}. We use $500$ sequences of length $140$ for training, $500$ sequences for validation, and $4000$ sequences for testing.

Table~\ref{tab:ecg-table} shows the average test accuracy (evaluated for models that are trained with 10 different seed values) for this task. We present results for additive white noise and multiplicative white noise perturbations.
Here, the NRNN, trained with multiplicative noise level set to $0.03$ and additive noise levels set to $0.06$, performs best both on clean as well as on perturbed input sequences. 

Figure~\ref{fig:ecg} summarizes the performance of different models with respect to additive and multiplicative white noise perturbations. Again, the NRNN appears to be less sensitive to input perturbations as compared to the other models, while achieving state-of-the-art performance for clean inputs.

\begin{table*}[!t]
	\caption{Robustness w.r.t. white ($\sigma$) and multiplicative ($\sigma_M$) noise perturbations on the ECG task.}
	\label{tab:ecg-table}
	\centering
	\scalebox{0.8}{
		\begin{tabular}{l c c c c | c c c c c c}
			\toprule
			Name                  &  clean & $\sigma=0.4$ & $\sigma=0.8$ & $\sigma=1.2$ & $\sigma_M=0.4$ & $\sigma_M=0.8$ & $\sigma_M=1.2$\\
			\midrule 
			
			Antisymmetric RNN~\cite{chang2019antisymmetricrnn}  & 97.1\% & 96.6\% & 91.6\% & 77.0\%  & 96.6\% & 94.6\% & 91.2\%  \\
			
			CoRNN~\cite{rusch2021coupled}  & 97.5\% & 96.8\% & 92.9\% & 87.2\%  & 93.9\% & 85.4\% & 78.4\%  \\
			
			Exponential RNN~\cite{lezcano2019cheap} & 97.4\% & 95.6\% & 86.4\% & 76.7\%  & 95.7\% & 89.4\% & 81.3\%  \\
			
			Lipschitz RNN~\cite{erichson2020lipschitz} & \textbf{97.7}\% & 97.4\% & 95.1\% & 88.9\%  & 97.6\% & 97.0\% & 95.6\%  \\
			
			NRNN (mult./add. noise: 0.03/0.06) & \textbf{97.7}\% & \textbf{97.5}\% & \textbf{96.3}\% & \textbf{92.6}\%  & \textbf{97.7}\% & \textbf{97.3}\% & \textbf{96.5}\%  \\

			\bottomrule
	\end{tabular}}
\end{table*}

\begin{figure*}[!t]
	\centering
	\begin{subfigure}[t]{0.49\textwidth}
		\centering
		\begin{overpic}[width=1\textwidth]{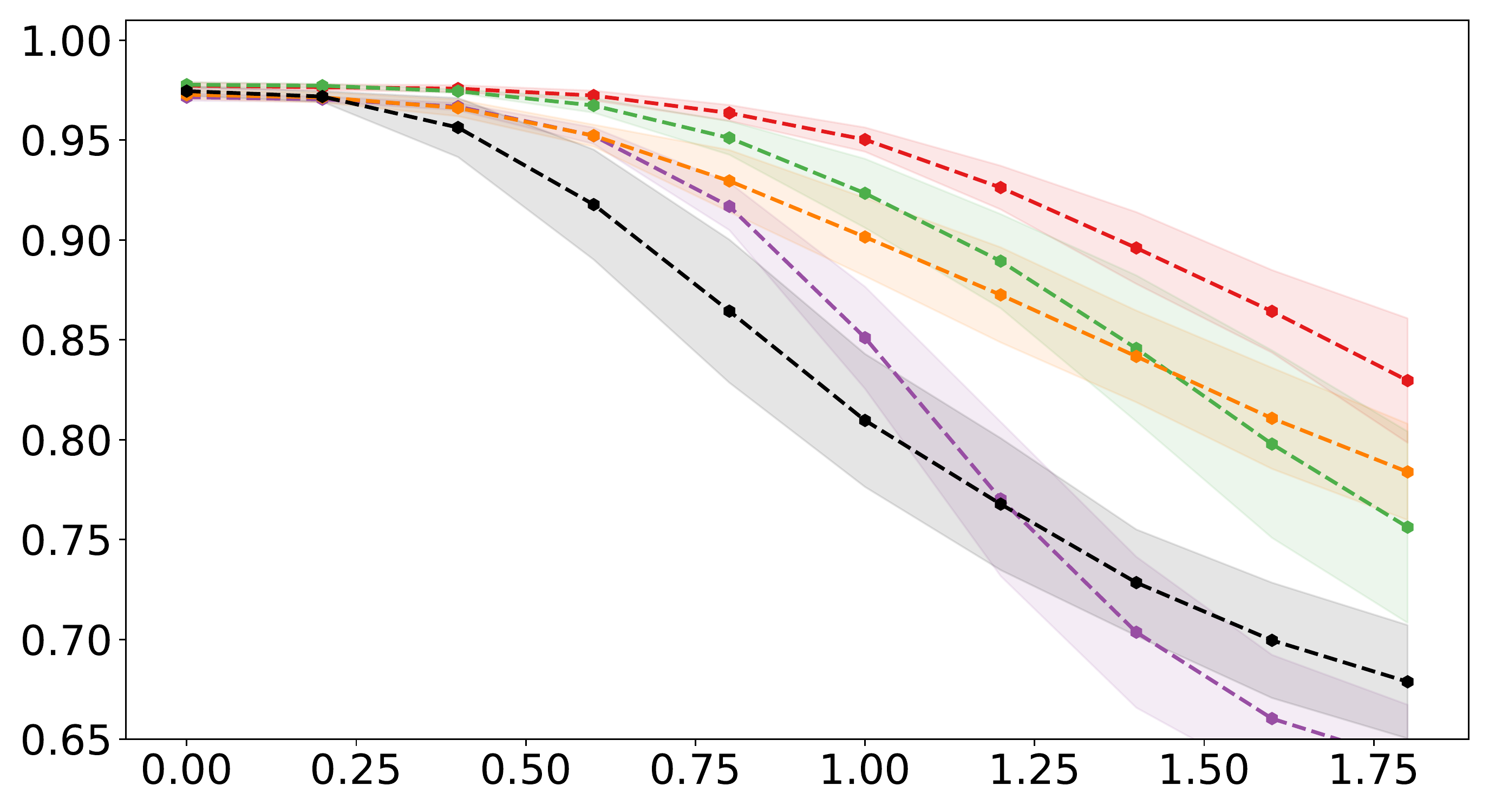}
			\put(-6,15){\rotatebox{90}{\footnotesize test accuracy}}			
			\put(42,-3){\footnotesize {amount of noise}}  	
		\end{overpic}\vspace{+0.2cm}		
		\caption{Additive white noise perturbations.}
	\end{subfigure}
	~
	\begin{subfigure}[t]{0.49\textwidth}
		\centering
		\begin{overpic}[width=1\textwidth]{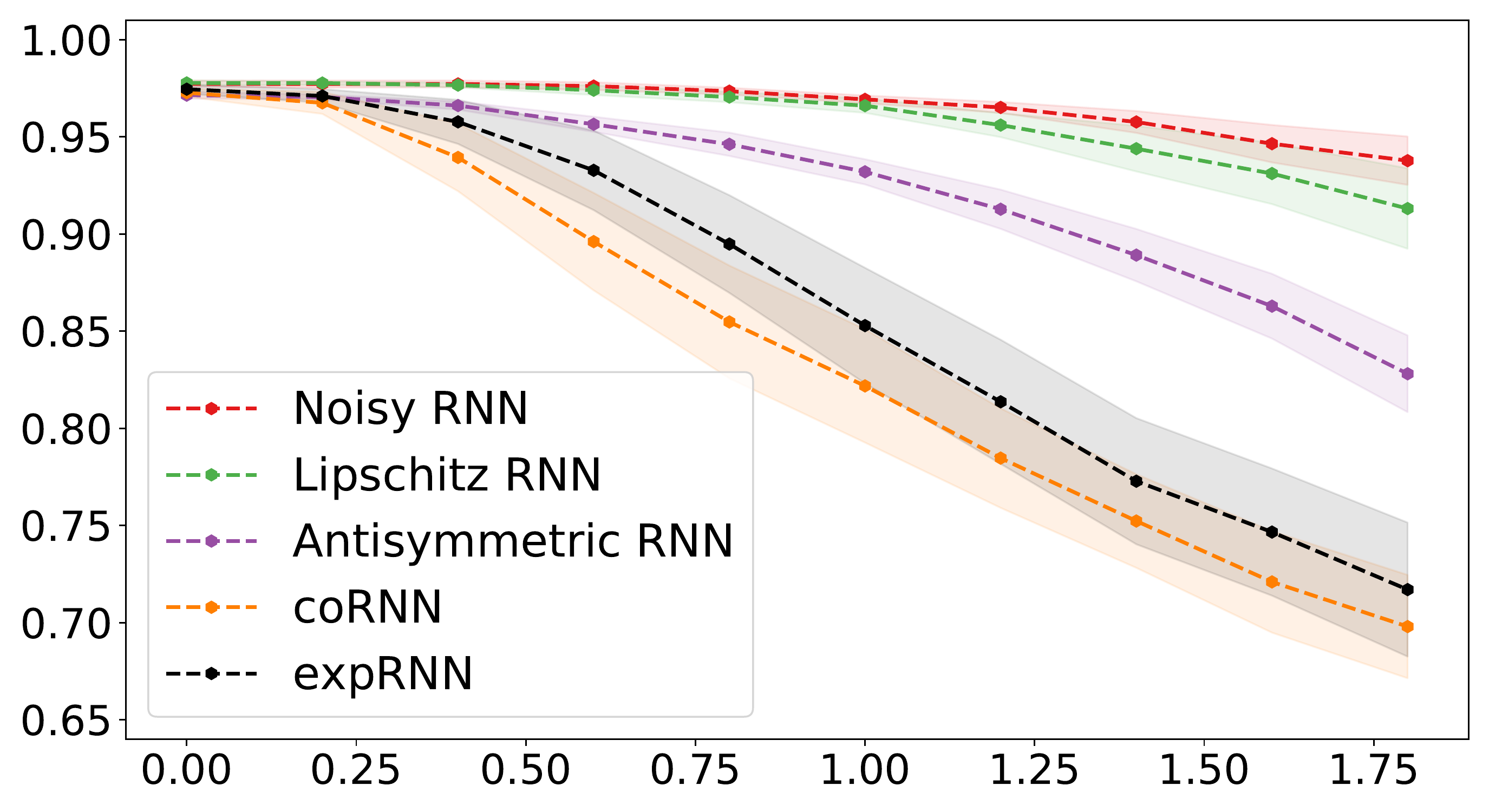} 
			\put(42,-3){\footnotesize {amount of noise}}  		
		\end{overpic}\vspace{+0.2cm}			
		\caption{Multiplicative white noise perturbations.}
	\end{subfigure}	
	\caption{Test accuracy for the ECG task as function of the strength of input perturbations.}
	\label{fig:ecg}
\end{figure*}

\section{Conclusion}
\label{sec:conclusion}

In this paper we provide a thorough theoretical analysis of RNNs trained by injecting noise into the hidden states. Within the framework of SDEs, we study the regularizing effects of general noise injection schemes. 
The experimental results are in agreement with our theory and its implications, finding that Noisy RNNs achieve superior robustness to input perturbations, while maintaining state-of-the-art generalization performance.
We believe our framework can be used to guide the principled design of a class of reliable and robust RNN classifiers.
As our work is mainly theoretical, it does not present any foreseeable societal consequence.

Our work opens up a range of interesting future directions. In particular, for deterministic RNNs, it was shown that the models learn optimally near the edge of stability \cite{chen2018dynamical}. One could extend these analyses to NRNNs with the ultimate goal of improving their performance. On the other hand, as discussed in Section \ref{sec:classif}, although the noise is shown here to implicitly stabilize RNNs, it could negatively impact capacity for long-term memory \cite{miller2018stable,zhao2020rnn}. 
Providing analyses to account for this and the implicit bias due to the stochastic optimization procedure \cite{smith2021origin,emami2021implicit} is the subject of future work.
%

\section*{Acknowledgements}
We are grateful for the generous support from Amazon AWS.
S. H. Lim would like to acknowledge  Nordita Fellowship 2018-2021 for providing support of this work.
N. B. Erichson, L. Hodgkinson, and M. W. Mahoney would like to acknowledge the IARPA (contract W911NF20C0035), ARO, NSF, and and ONR via its BRC on RandNLA for providing partial support of this work.
Our conclusions do not necessarily reflect the position or the policy of our sponsors, and no official endorsement should be inferred.


\bibliographystyle{plain}
\bibliography{ref_RNN}

\newpage

\begin{center}
    {\Large \bf  Supplementary Material (SM)}
\end{center}

\appendix

\section{Notation and Background}
\label{app_sect_notation_background}

We begin by introducing some notations that will be used in this {\bf SM}.

\begin{itemize}
\item $\| \cdot \|_F$ denotes Frobenius norm, $\|\cdot\|_p$ denote $p$-norm ($p>0$) of a vector/matrix (in particular,  $\|v\| := \|v\|_2$ denotes Euclidean norm of the vector $v$ and $\|A\|_2$ denotes the spectral norm of the matrix $A$). 
\item The $i$th element of a vector $v$ is denoted as $v^i$ or $[v]^i$ and the $(i,j)$-entry of a matrix $A$ is denoted as $A^{ij}$ or $[A]^{ij}$. 
\item $I$ denotes identity matrix (the dimension should be clear from the context).
\item $\tr$ denotes trace, the superscript $T$ denotes transposition, and $\RR^+ := (0,\infty)$. 
\item For a function 
$f : \RR^n \to \RR^m$ such that each of its first-order partial derivatives (with respect to $x$) exist on $\RR^n$, $\frac{\partial f}{\partial x} \in \RR^{m \times n}$ denotes the Jacobian matrix of $f$. 
\item For a scalar-valued function $g: \RR^n \to \RR$,
$\nabla_h g$ denotes gradient of $g$ with respect to the variable $h \in \RR^n$ and $H_h g$ denotes Hessian of $g$ with respect to $h$.
\item The notation a.s. means $\mathbb{P}$-almost surely and $\mathbb{E}$ is expectation with respect to $\mathbb{P}$, where $\mathbb{P}$ is an underlying probability measure. 
\item For a matrix $M$,  $M^\sym = (M + M^T)/2$ denote its symmetric part, $\lambda_{\min}(M)$ and $\lambda_{\max}(M)$ denote its minimum and maximum eigenvalue  respectively, and $\sigma_{\min}(M)$ and $\sigma_{\max}(M)$ denote its minimum and maximum singular value respectively. 
\item For a vector $v = (v^1, \dots, v^d)$, diag($v$) denotes the diagonal matrix with the $i$th diagonal entry equal $v^i$.
\item $\boldsymbol{1}$ denotes a vector with all entries equal to one. 
\item $e_{ij}$ denotes the Kronecker delta.
\item $\mathcal{C}(I;J)$ denotes the space of continuous $J$-valued functions defined on $I$.
\item $\mathcal{C}^{2,1}(\mathcal{D} \times I; J)$ denotes  the space of all $J$-valued functions $V(x, t)$ defined on $\mathcal{D} \times I$ which are continuously twice differentiable in $x \in \mathcal{D}$ and
once differentiable in $t \in I$.
\end{itemize}
 
Next, we recall the RNN models considered in the main paper. 

\noindent {\bf Continuous-Time NRNNs.}  For a terminal time $T > 0$ and an input signal $x = (x_t)_{t \in [0,T]} \in \mathcal{C}([0,T]; \mathbb{R}^{d_x})$, the output $y_t   \in  \mathbb{R}^{d_y}$, for $t \in [0,T]$, is a linear map of the hidden states $h_t \in \mathbb{R}^{d_h}$ satisfying the It\^{o} stochastic differential equation (SDE):
\begin{equation}
\label{app_NLRNN}
\dd h_t = f(h_t, x_t) \dd t + \sigma(h_t,x_t) \dd B_t, \qquad y_t = V h_t,
\end{equation}
where $V \in \mathbb{R}^{d_y \times d_h}$,  $f:\mathbb{R}^{d_h} \times \mathbb{R}^{d_x} \to \mathbb{R}^{d_h}$, $\sigma:\mathbb{R}^{d_h} \times \mathbb{R}^{d_x} \to \mathbb{R}^{d_h \times r}$ and $(B_t)_{t \geq 0}$ is an $r$-dimensional Wiener process. 

In particular, as an example and for empirical experiments, we focus on the choice of drift function:
\begin{equation}
\label{eq:app_LipschitzRNN}
f(h, x) = A h + a(W h + U x + b),
\end{equation}
where $a:\mathbb{R}\to\mathbb{R}$ is a Lipschitz continuous scalar activation function (such as $\tanh$) extended to act on vectors pointwise, $A,W \in \mathbb{R}^{d_h \times d_h}$, $U \in \mathbb{R}^{d_h \times d_x}$ and $b \in \mathbb{R}^{d_h}$, and the choice of diffusion coefficient: 
\begin{equation}\label{app_experiment_NRNN}
    \sigma(h,x) = \epsilon (\sigma_1 I + \sigma_2 \diag(f(h,x))), 
\end{equation}
where the noise level $\epsilon > 0$ is small, and $\sigma_1 \geq 0$ and $\sigma_2 \geq 0$ are tunable parameters describing the relative strength of additive noise and a multiplicative noise respectively. \\

We consider the following NRNN models by discretizing the SDE \eqref{app_NLRNN}, as discussed in detail in the main paper.   

\noindent {\bf Discrete-Time NRNNs.} Let $0 \coloneqq t_0 < t_1 < \cdots < t_M \coloneqq T$ be a partition of the interval $[0,T]$. Denote $\delta_m := t_{m+1} - t_m$ for each $m=0,1,\dots,M-1$, and $\delta := (\delta_m)$. The  Euler-Mayurama (E-M) scheme provides a family (parametrized by $\delta$) of approximations to the solution of the SDE in \eqref{app_NLRNN}:
\begin{equation}\label{app_e-m}
    h^{\delta}_{m+1} = h^{\delta}_{m} + f(h^{\delta}_{m},\hat{x}_{m}) \delta_m + \sigma(h^{\delta}_{m},\hat{x}_{m}) \sqrt{\delta_m} \xi_m,
\end{equation}
for $m=0,1,\dots,M-1$, where $(\hat{x}_{m})_{m=0,\dots,M-1}$ is a given sequential data, the $\xi_m \sim \mathcal{N}(0,I)$ are independent $r$-dimensional standard normal random vectors, and $h^{\delta}_0 = h_{0}$. Eq. \eqref{app_e-m} describes the update equation of our NRNN models, an example of which is when $f$ and $\sigma$ are taken to be \eqref{eq:app_LipschitzRNN} and \eqref{app_experiment_NRNN} respectively (see also the experiments in the main paper). In the special case when $\epsilon := 0$ in this example, we recover the Lipschitz RNN of \cite{erichson2020lipschitz}.  

It is worth mentioning that while higher-order integrators are also possible to consider, the presence of It\^{o} white noise poses a significant challenge over the standard ODE case.  
Generally speaking, implementations of  higher-order schemes require additional computational effort which may outweigh the benefit of using them. For instance, in implicit E-M schemes the zero of a nonlinear equation has to be determined in each time step \cite{hutzenthaler2015numerical}. In Milstein and stochastic Runge-Kutta schemes, there is an extra computational cost in simulating the L\'evy area \cite{malham2010introduction}. Similar challenges arise for other multistep schemes and higher-order schemes.\\ 

\noindent {\bf Organizational Details.} This {\bf SM} is organized as follows.

\begin{itemize}
    \item In Section \ref{app_sec:Conditions}, we provide results that guarantee existence and uniqueness of solutions to the SDE defining our continuous-time NRNNs. 
    \item In Section \ref{app_sect_discretization}, we provide results that guarantee stability and convergence of our discrete-time NRNNs. These results are in fact  very general and may be of independent interest.
    \item In Section \ref{app_sect_impl_reg}, we provide results  on implicit regularization due to noise injection in both continuous-time and discrete-time NRNNs, in particular the proof of Theorem 1 in the main paper.
    \item In Section \ref{app_sect_class_margins}, we provide some background and results to study classification margin and generalization bound of the corresponding discrete-time deterministic RNNs, in particular the proof of Theorem 2 in the main paper. 
    \item  In Section \ref{app_sec:StabilityAppendix}, we discuss stability  of continuous-time NRNNs and the noise-induced stabilization phenomenon, and provide conditions that guarantee almost sure exponential stability of the NRNNs, in particular the proof of Theorem 3 in the main paper. 
    \item In Section \ref{app_sect_experiment}, we provide details on the empirical results in the main paper and additional results.
\end{itemize}

\section{Existence and Uniqueness of Solutions}
\label{app_sec:Conditions}
 
Essential to any discussion concerning SDEs is the existence and uniqueness of solutions --- in this case, we are interested in strong solutions \cite{karatzas1998brownian}.

In the following, we fix a complete filtered probability space $(\Omega, \mathcal{F}, (\mathcal{F}_t)_{t \geq 0}, \mathbb{P})$ which satisfies the usual conditions \cite{karatzas1998brownian} and on which there is defined an  $r$-dimensional Wiener process $(B_t)_{t \geq 0}$. We also fix a $T>0$ and denote $f(h_t, t) := f(h_t, x_t)$, $\sigma(h_t,t) := \sigma(h_t,x_t)$ to emphasize the explicit dependence of the functions on time $t$ through the input  $x_t$.  

We start with the following assumptions on the SDE \eqref{app_NLRNN}.
\begin{ass} \label{ass_gen}
(a) (Global Lipschitz condition) The  coefficients $f$ and $\sigma$ are $L$-Lipschitz, i.e., there exists a constant $L > 0$ such that 
\begin{equation} \label{ass_Lip}
\|f(h,t)-f(h',t)\| + \|\sigma(h,t) - \sigma(h',t)\|_F \leq L \|h-h'\|
\end{equation}
for all $h,h' \in \RR^{d_h}$ and $t \in [0,T]$.\\
(b) (Linear growth condition)
$f$ and $\sigma$ satisfy the following linear growth condition, i.e., there exists a constant $K > 0$  such that
\begin{equation}
    \|f(h,t)\|^2 + \|\sigma(h,t)\|_F^2  \leq K (1+\|h\|^2)
\end{equation}
for all $h \in \RR^{d_h}$ and $t \in [0,T]$.
\end{ass}

Under Assumption \ref{ass_gen}, it is a standard result from stochastic analysis   that the SDE \eqref{app_NLRNN} has a unique  solution (which is a continuous and adapted process $(h_t)_{t \in [0,T]}$ satisfying the integral equation $h_t = h_0 + \int_0^t f(h_s,s) ds + \int_0^t \sigma(h_s,s) dB_s$)
for every initial value $h_0 \in \RR^{d_h}$, for  $t \in [0,T]$ (see, for instance, Theorem 3.1 in Section 2.3 of \cite{mao2007stochastic}). The uniqueness is in the sense that for any other solution $h'_t$ satisfying the SDE, 
\begin{equation}
    \mathbb{P}[h_t = h_t' \  \text{for all } t \in [0,T]] = 1.
\end{equation}

For our purpose, the following conditions suffice to satisfy Assumption \ref{ass_gen}.

\begin{ass} \label{ass}
The function $a: \RR \to \RR$ is an activation function (i.e., a non-constant and Lipschitz continuous function), and $\sigma$ is  $L_\sigma$-Lipschitz for some $L_\sigma > 0$.
\end{ass}

\begin{lem}
Consider the SDE \eqref{app_NLRNN} defining our CT-NRNN. Then, under Assumption \ref{ass}, Assumption \ref{ass_gen}  is satisfied.
\end{lem}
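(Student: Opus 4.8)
The plan is to verify that Assumption \ref{ass} implies both parts of Assumption \ref{ass_gen}, namely the global Lipschitz condition \eqref{ass_Lip} and the linear growth condition, for the specific drift \eqref{eq:app_LipschitzRNN} and diffusion \eqref{app_experiment_NRNN}. Since these coefficients are built from affine maps composed with the activation $a$, everything reduces to controlling the Lipschitz and growth behavior of $a$ and propagating it through the matrix operations. The hypothesis already hands us that $a$ is Lipschitz (with some constant $L_a$) and that $\sigma$ is $L_\sigma$-Lipschitz, so the diffusion Lipschitz bound is essentially assumed; the work is concentrated on the drift and on the growth estimates.

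First I would establish the Lipschitz property of $f$. Writing $f(h,t) = Ah + a(Wh + Ux_t + b)$, the difference $f(h,t) - f(h',t) = A(h-h') + [a(Wh+Ux_t+b) - a(Wh'+Ux_t+b)]$. I would bound the first term by $\|A\|_2\|h-h'\|$ and the second, using that $a$ is applied coordinatewise and is $L_a$-Lipschitz, by $L_a\|W(h-h')\| \leq L_a\|W\|_2\|h-h'\|$. Adding the assumed $L_\sigma$-Lipschitz bound for $\sigma$, this yields \eqref{ass_Lip} with $L = \|A\|_2 + L_a\|W\|_2 + L_\sigma$. Here I must note that the bound should be uniform over $t \in [0,T]$, which holds because $x_t$ enters only through the fixed argument $Ux_t + b$ that cancels in the difference, so the constant $L$ is independent of $t$.

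Next I would handle the linear growth condition. For the drift, $\|f(h,t)\| \leq \|A\|_2\|h\| + \|a(Wh+Ux_t+b)\|$, and a Lipschitz scalar function satisfies $|a(z)| \leq |a(0)| + L_a|z|$, so applied coordinatewise $\|a(Wh+Ux_t+b)\| \leq \sqrt{d_h}|a(0)| + L_a(\|W\|_2\|h\| + \|Ux_t+b\|)$. Using boundedness of $x_t$ on the compact interval $[0,T]$ (guaranteed by the continuity assumption on the input signal, so $\sup_{t}\|Ux_t+b\| < \infty$), this gives a bound of the form $C_1 + C_2\|h\|$, and squaring produces the required $K(1+\|h\|^2)$ form via the elementary inequality $(a+b)^2 \leq 2a^2 + 2b^2$. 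For the diffusion, a Lipschitz function also has at most linear growth, so $\|\sigma(h,t)\|_F \leq \|\sigma(0,t)\|_F + L_\sigma\|h\|$, and $\sup_t\|\sigma(0,t)\|_F$ is finite by boundedness of $t \mapsto \sigma(0,x_t)$; squaring again gives the quadratic growth bound. Combining the two squared bounds yields the single constant $K$.

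I expect the only genuine subtlety to be bookkeeping about uniformity in $t$: one must explicitly invoke that the input $x = (x_t)_{t\in[0,T]}$ is continuous on a compact interval, hence bounded, so that the constant terms arising from $Ux_t + b$ and from $\sigma(0,x_t)$ are finite and can be absorbed into $L$ and $K$ uniformly. Everything else is routine application of the triangle inequality, submultiplicativity of the spectral norm, and the scalar inequality $|a(z)| \leq |a(0)| + L_a|z|$; there is no hard analytic obstacle. I would close by asserting that, with $L$ and $K$ chosen as above, both parts of Assumption \ref{ass_gen} hold, completing the verification.
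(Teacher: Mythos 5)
Your proposal is correct and follows essentially the same route as the paper's proof: decompose the drift difference as $A(h-h') + [a(Wh+Ux_t+b)-a(Wh'+Ux_t+b)]$, use the Lipschitz property of $a$ and submultiplicativity to get the global Lipschitz bound (with the diffusion bound assumed), then obtain linear growth from the Lipschitz structure together with boundedness of the $t$-dependent terms via continuity of $x$ on the compact interval $[0,T]$. The only differences are cosmetic — you take $L$ as a sum rather than a max, bound $\|a(Wh+Ux_t+b)\|$ directly via $|a(z)|\leq|a(0)|+L_a|z|$ rather than through $f(0,t)$, and use $(a+b)^2 \leq 2a^2+2b^2$ where the paper splits into the cases $\|h\|\geq 1$ and $\|h\|<1$.
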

\begin{proof}
Note that  $f(h,t) = Ah + a(Wh + Ux_t + b)$, where $a$ is an activation function. For any $t \in [0,T]$,
\begin{align}
    \|f(h,t)-f(h',t)\| &\leq \|A(h-h')\| + \|a(Wh+Ux_t+b) - a(Wh' + Ux_t + b)\| \\
    &\leq \|A\| \|h-h'\| + L_a \|W(h-h')\| \\
    &\leq (\|A\| + L_a\|W\|) \|h-h'\|,  
\end{align}
for all $h, h' \in \RR^{d_h}$, where $L_a > 0$ is the Lipschitz constant of the (non-constant) activation function $a$. Therefore, the condition (a) in Assumption \ref{ass_gen} is satisfied since by our assumption $\sigma$ is $L_{\sigma}$-Lipschitz for some constant $L_\sigma > 0$. In this case one can take $L = \max(\|A\| + L_a\|W\|, L_\sigma)$ in Eq. \eqref{ass_Lip}. 

Since $f$ and $\sigma$ are $L$-Lipschitz, they  satisfy the linear growth condition (b) in Assumption \ref{ass_gen}. Indeed, if $f$ is $L$-Lipschitz, then for $t \in [0,T]$,
\begin{align}
\|f(h,t)\| &= \|f(h,t) - f(0,t) + f(0,t)\| \leq L\|h\| + \|f(0,t)\| \leq L\|h\|+C_f,
\end{align}
for some constant $C_f\in (0,\infty)$, where we have used the fact that $f(0,t) = a(Ux_t+b)$ is bounded for $t \in [0,T]$ (since continuous functions on compact sets are bounded). So,
\begin{align}
\|f(h,t)\|^2 &\leq (L\|h\| + C_f)^2 \leq L^2\|h\|^2 + 2LC_f \|h\| + C_f^2.
\end{align}
For $\|h\| \geq 1$, we have:
\begin{equation}
    \|f(h,t)\|^2 \leq (L^2 + 2L C_f)\|h\|^2 + C_f^2 \leq (L^2 + 2L C_f + C_f^2) (1+\|h\|^2).
\end{equation}
For $\|h\| < 1$, we have:
\begin{equation}
    \|f(h,t)\|^2 \leq (L^2 + 2L C_f)\|h\| + C_f^2 \leq  (L^2 + 2L C_f) + C_f^2 \leq (L^2 + 2L C_f + C_f^2)(1+\|h\|^2).
\end{equation}
Choosing $K= L^2 + 2LC_f + C_f^2$ gives us the linear growth condition for $f$.

Similarly, one can show that $\sigma$ satisfies the linear growth condition. The proof is done.
\end{proof}

Throughout the paper, we work with SDEs satisfying Assumption \ref{ass}. The following additional assumption on the SDEs will be needed and  invoked.

\begin{ass}\label{ass_add}
For $t \in [0,T]$,
the partial derivatives of the coefficients $f^i(h,t)$, $\sigma^{ij}(h,t)$ with respect to $h$ up to order three (inclusive) exist. Moreover, the coefficients $f^i(h,t)$, $\sigma^{ij}(h,t)$ and all these partial derivatives  are: 
\begin{itemize}
\item[(i)] bounded and Borel measurable in $t$, for  fixed $h \in \RR^{d_h}$; 
\item[(ii)] Lipschitz continuous in $h$, for  fixed $t \in [0,T]$.
\end{itemize}
\end{ass}

In particular, Assumption \ref{ass_add} implies that these partial derivatives (with respect to $h$) of $f$ and $\sigma$ satisfy (a)-(b) in Assumption \ref{ass_gen}. Assumption \ref{ass_add} holds for SDEs with commonly used activation functions such as hyperbolic tangent. We remark that Assumption \ref{ass}-\ref{ass_add} may be weakened in various directions (for instance, to  locally Lipschitz coefficients)  but for the purpose of this paper we need not go beyond these assumptions.

\section{Stability and Convergence of the Euler-Maruyama Schemes} 
\label{app_sect_discretization}

We provide stability and strong convergence results for the explicit Euler-Mayurama (E-M) approximations of the SDE \eqref{app_NLRNN}, which is  time-inhomogeneous due to the dependence of the drift and possibly diffusion coefficient on a time-varying input, here. Intuitively, strong convergence results ensure that the approximated path follows the continuous path accurately, in contrast to weak convergence results which can only guarantee this at the level of probability distribution. The latest version of strong convergence results for time-homogeneous SDEs can be found in \cite{fang2019adaptive,fang2020adaptive}. The results for our time-inhomogeneous SDEs can be obtained by adapting the proof in \cite{fang2019adaptive} without much difficulty. Since we  cannot find them in the literature, we provide them in this section. 

First, we recall the discretization scheme.
Let $0 := t_0 < t_1 < \cdots < t_M :=T$ and $t_{m+1} = t_m + \delta_m$, for $m=0,1,\dots,M-1$ and some time step $\delta_m  >  0$.
Note that we work at full generality here since  the step sizes $\delta_m$ are not necessarily uniform and may even depend on the numerical solution, i.e., $\delta_m = \delta(h^{\delta}_{m})$ (see Example \ref{ex_adaptive_scheme}). The general results will be of independent interest, in particular for further explorations in designing other variants of NRNNs. 

For $m=0,1,\dots,M-1$, consider
\begin{equation} 
    h^{\delta}_{m+1} = h^{\delta}_{m} +     f(h^{\delta}_{m}, \hat{x}_{m})  \delta_m  +  \sigma(h^{\delta}_{m},\hat{x}_{m}) \Delta B_m,
\end{equation}
where $\Delta B_m := B_{t_{m+1}}-B_{t_{m}}$, $(\hat{x}_{m})_{m=0,1,\dots,M-1}$ is a given input sequential data, and $h^{\delta}_{0} = h_0$.

Let $\underline{t} = \max\{t_m: t_m \leq t\}$, $m_t = \max\{m: t_m \leq t\}$ for the nearest time point before time $t$, and its index. Denote the piecewise constant interpolant process $\bar{h}_t = h^{\delta}_{\underline{t}}$. It is convenient to use continuous-time approximations, so  we consider the continuous interpolant that satisfies:
\begin{equation} \label{interpolant}
   h^{\delta}_t = h^{\delta}_{\underline{t}} + f(h^{\delta}_{\underline{t}},x_{\underline{t}}) (t-\underline{t}) + \sigma(h^{\delta}_{\underline{t}},x_{\underline{t}}) (B_t - B_{\underline{t}}),
\end{equation}
so that $h^{\delta}_t$ is the solution of the SDE:
\begin{equation}
    dh^{\delta}_t = f(h^{\delta}_{\underline{t}},x_{\underline{t}}) dt + \sigma(h^{\delta}_{\underline{t}},x_{\underline{t}}) dB_t = f(\bar{h}_t, x_t) dt + \sigma(\bar{h}_t, x_t) dB_t.
\end{equation}

We make the following assumptions about the time step. 

\begin{ass} \label{ass_timestep}
The (possibly adaptive) time step function $\delta: \RR^{d_h} \to \RR^+$ is continuous and strictly positive, and there exist constants $\alpha,\beta > 0$ such that for all $h \in \RR^{d_h}$, $\delta$ satisfies
\begin{equation}
    \langle h, f(h,t) \rangle + \frac{1}{2} \delta(h) \|f(h,t)\|^2 \leq \alpha \|h\|^2 + \beta
\end{equation}
for every $t \in [0,T]$.
\end{ass}

Note that if another time step function $\delta^\epsilon(h)$ is smaller than $\delta(h)$, then $\delta^\epsilon(h)$ also satisfies Assumption \ref{ass_timestep}.

A simple adaptation of the proof of Theorem 2.1.1 in \cite{fang2019adaptive} to our case of time-inhomogeneous SDE gives the following result. 

\begin{prop}[Finite-time stability] Under Assumption \ref{ass_gen} and Assumption \ref{ass_timestep}, $T$ is a.s. attainable (i.e., for $\omega \in \Omega$, $\mathbb{P}[\exists N(\omega) < \infty$ s.t. $t_{N(\omega)} \geq T] = 1$) and for all $p > 0$ there exists a constant $C > 0$ (depending on only $p$ and $T$) such that 
\begin{equation}
    \mathbb{E}\left[\sup_{t \in [0,T]} \|h^{\delta}_t\|^p \right] \leq C. 
\end{equation}
\end{prop}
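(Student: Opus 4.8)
The plan is to control the numerical solution through a single-step (conditional) second-moment estimate, in which the left-hand side of Assumption \ref{ass_timestep} is exactly the quantity that appears, then to bootstrap to arbitrary $p>0$ and to the supremum in time via a localization argument combined with the Burkholder--Davis--Gundy (BDG) inequality. First I would compute, for one step $h^{\delta}_{m+1} = h^{\delta}_m + f_m \delta_m + \sigma_m \Delta B_m$ (writing locally $f_m := f(h^\delta_m, \hat x_m)$, $\sigma_m := \sigma(h^\delta_m,\hat x_m)$, $\Delta B_m := B_{t_{m+1}} - B_{t_m}$), the conditional expectation of $\|h^\delta_{m+1}\|^2$ given $\mathcal{F}_{t_m}$. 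The martingale increments $2\langle h^\delta_m, \sigma_m \Delta B_m\rangle$ and $2\delta_m\langle f_m,\sigma_m\Delta B_m\rangle$ vanish in conditional expectation, and $\mathbb{E}[\|\sigma_m \Delta B_m\|^2 \mid \mathcal{F}_{t_m}] = \|\sigma_m\|_F^2 \delta_m$, leaving
\[
\mathbb{E}[\|h^\delta_{m+1}\|^2 \mid \mathcal{F}_{t_m}] = \|h^\delta_m\|^2 + 2\delta_m\left(\langle h^\delta_m, f_m\rangle + \tfrac12 \delta_m \|f_m\|^2\right) + \delta_m\|\sigma_m\|_F^2.
\]
The middle term is precisely the left-hand side of Assumption \ref{ass_timestep} (with $\delta(h^\delta_m)=\delta_m$), hence bounded by $2\delta_m(\alpha\|h^\delta_m\|^2+\beta)$, while the linear growth bound of Assumption \ref{ass_gen}(b) controls $\|\sigma_m\|_F^2 \leq K(1+\|h^\delta_m\|^2)$. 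Crucially, both bounds hold uniformly in $t\in[0,T]$ because $f(0,t)$ and $\sigma(0,t)$ are bounded on $[0,T]$ (as shown by the earlier lemma), so all constants are independent of the time inhomogeneity.

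This yields a discrete Gronwall inequality $\mathbb{E}[\|h^\delta_{m+1}\|^2 \mid \mathcal{F}_{t_m}] \leq (1+c\,\delta_m)\|h^\delta_m\|^2 + c'\delta_m$ with $c,c'$ independent of the partition; iterating while $t_m \leq T$ and using $\prod(1+c\delta_m)\leq e^{cT}$ gives $\mathbb{E}[\|h^\delta_m\|^2] \leq (1+\|h_0\|^2)e^{cT}$ uniformly in the mesh. To upgrade to general $p>0$ and to bring the supremum inside the expectation, I would pass to the continuous interpolant $h^\delta_t$ of \eqref{interpolant}, apply It\^o's formula to $(1+\|h^\delta_t\|^2)^{p/2}$, and estimate the drift terms by the same combination of Assumption \ref{ass_timestep} and linear growth (now evaluated at $\bar h_t$), bounding the stochastic-integral contribution to $\sup_{t\le T}$ through BDG; a final use of linear growth and Young's inequality closes the resulting self-referential estimate to produce $C=C(p,T)$.

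The main obstacle, and the only genuine departure from the fixed-step analysis, is establishing that $T$ is almost surely attainable in finitely many steps despite the adaptivity of $\delta$: the danger is that the steps $\delta(h^\delta_m)$ shrink so rapidly that $t_m \uparrow t_\infty \leq T$ with the step count $N=\infty$. I would rule this out by localization. For $R>0$ let $\tau_R$ be the first index with $\|h^\delta_m\| \geq R$, carry the moment estimate up to $\tau_R$, and note that on the event $\{t_\infty \leq T\}$ the continuity and strict positivity of $\delta$ force $\|h^\delta_m\|\to\infty$ along the steps, since otherwise $\delta$ would be bounded below on a compact set and only finitely many steps could fit before $t_\infty$. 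The uniform second-moment bound, applied on $\{t_m < t_\infty\}$, contradicts such blow-up with probability one, giving $\mathbb{P}[N<\infty]=1$. Letting $R\to\infty$ removes the localization and, together with the BDG step, delivers the stated bound on $\mathbb{E}[\sup_{t\in[0,T]}\|h^\delta_t\|^p]$. This is exactly the structure of the proof of Theorem 2.1.1 in \cite{fang2019adaptive}, the adaptation amounting only to taking every estimate uniformly over $t\in[0,T]$, which is automatic because the time dependence enters solely through the bounded, Borel-measurable input $x_t$.
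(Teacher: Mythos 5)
Your proposal is correct and follows essentially the same route as the paper: the paper's entire proof is the remark that one adapts the proof of Theorem 2.1.1 in \cite{fang2019adaptive} to the time-inhomogeneous case, and your sketch (one-step conditional second-moment estimate in which Assumption \ref{ass_timestep} bounds exactly the term $\langle h, f\rangle + \tfrac12\delta\|f\|^2$, discrete Gronwall, localization via $\tau_R$ to rule out infinitely many shrinking steps before $T$, then It\^o plus Burkholder--Davis--Gundy on the continuous interpolant to reach general $p$ and the supremum) is precisely that adaptation. Your closing observation, that the time dependence enters only through the uniform-in-$t$ linear growth and Lipschitz bounds of Assumption \ref{ass_gen}, is exactly what makes the adaptation routine, as the paper implicitly claims.
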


This is the discrete-time analogue of the result that $\mathbb{E}\left[\sup_{t \in [0,T]} \|h_t\|^p \right] < \infty$ for all $p>0$, which can be proven by simply adapting the proof of Lemma 2.1.1. in \cite{fang2019adaptive}.

In the case where the time step is adaptive,  we take the following lower bound on the time step to bound the expected number of time steps (how quickly $\delta(h) \to 0$ as $\|h\| \to 0$). 

\begin{ass} \label{ass_lb_timestep}
There exist constants $a,b,q > 0$ such that the adaptive time step function satisfies:
\begin{equation}
    \delta(h) \geq \frac{1}{a\|h\|^q  + b}. 
\end{equation}
\end{ass}

Next, we provide strong convergence result for the numerical approximation with the time step $\delta$. When the time step $\delta$ is adaptive, one needs to rescale the time step function by a small scalar-valued magnitude $\epsilon > 0$ and then consider the limit as $\epsilon \to 0$. Following \cite{fang2019adaptive}, we make the following assumption. 

\begin{ass} \label{ass_rescaled}
The rescaled time step function $\delta^\epsilon$ satisfies
\begin{equation}
\epsilon \min(T,\delta(h)) \leq \delta^\epsilon(h) \leq \min(\epsilon T, \delta(h)),
\end{equation}
where $\delta$ satisfies Assumption \ref{ass_timestep}-\ref{ass_lb_timestep}.
\end{ass}

Under this additional assumption, we have the following convergence result, which can be proven by adapting the proof of Theorem 2.1.2 in \cite{fang2019adaptive} to our time-inhomogeneous SDE case. The proof is based on the argument used for the uniform time step analysis (see Theorem 2.2 in \cite{higham2002strong}), taking into account the adaptive nature of the time step appropriately.

\begin{thm}[Strong convergence] 
Let the SDE \eqref{app_NLRNN} satisfy Assumption \ref{ass_gen} and the time step function satisfy Assumption \ref{ass_rescaled}. Then, for all $p>0$, 
\begin{equation}
    \lim_{\epsilon \to 0} \mathbb{E} \left[ \sup_{t \in [0,T]} \|h^{\delta^\epsilon}_t - h_t \|^p \right] = 0,
\end{equation}
where $h^{\delta}_t$ is the continuous interpolant satisfying \eqref{interpolant} and $h_t$ satisfies the SDE \eqref{app_NLRNN}.
\end{thm}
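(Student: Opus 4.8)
The plan is to adapt the strong-convergence argument of Fang and Giles (Theorem 2.1.2 in \cite{fang2019adaptive}), which itself rests on the classical uniform-mesh analysis (Theorem 2.2 in \cite{higham2002strong}), to the present time-inhomogeneous, adaptive-mesh setting. Write $E_t := h^{\delta^\epsilon}_t - h_t$ for the error between the continuous interpolant \eqref{interpolant} and the true solution of \eqref{app_NLRNN}. Both processes solve It\^o SDEs on $[0,T]$ — the interpolant with coefficients frozen at $(\bar h_t, x_{\underline t})$ and the true process with coefficients $(h_t, x_t)$ — so subtracting their integral forms gives
\[
E_t = \int_0^t [f(\bar h_s, x_{\underline s}) - f(h_s, x_s)] \, \dd s + \int_0^t [\sigma(\bar h_s, x_{\underline s}) - \sigma(h_s, x_s)] \, \dd B_s .
\]
Since only the limit (and not a rate) is required, the whole argument reduces to three ingredients: uniform-in-$\epsilon$ moment bounds, a one-step frozen-coefficient estimate, and a Gronwall closure.

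First I would record the moment bounds. The Finite-time stability Proposition supplies a constant $C$, independent of $\epsilon$, with $\mathbb{E}[\sup_{t \in [0,T]} \|h^{\delta^\epsilon}_t\|^p] \leq C$ for every $p > 0$, and its continuous-time analogue gives the same for $\sup_{t \in [0,T]} \|h_t\|^p$. These guarantee that all higher-order terms produced below are finite and controlled uniformly in $\epsilon$. Next, the one-step estimate: because the rescaling in Assumption \ref{ass_rescaled} forces $\delta^\epsilon(h) \leq \epsilon T$ for all $h$, the maximal mesh $\Delta^\epsilon := \max_m \delta^\epsilon_m \leq \epsilon T$ shrinks \emph{deterministically}. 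Using the explicit interpolant \eqref{interpolant}, the one-step displacement $\bar h_t - h^{\delta^\epsilon}_t = h^{\delta^\epsilon}_{\underline t} - h^{\delta^\epsilon}_t$ is bounded, via the linear growth condition (Assumption \ref{ass_gen}(b)), the moment bounds, and the Burkholder--Davis--Gundy inequality applied to $B_t - B_{\underline t}$, by a quantity of order $(\epsilon T)^{p/2}$; hence $\mathbb{E}[\sup_{t \in [0,T]} \|\bar h_t - h^{\delta^\epsilon}_t\|^p] \to 0$ as $\epsilon \to 0$.

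I would then close with Gronwall. Taking $\sup_{r \le t}$ and expectations in the error equation, applying H\"older to the drift integral and Burkholder--Davis--Gundy to the diffusion integral, and splitting $\bar h_s - h_s = (\bar h_s - h^{\delta^\epsilon}_s) + E_s$ before invoking the global Lipschitz property (uniformly in $t$, Assumption \ref{ass_gen}(a)), yields
\[
\mathbb{E}[\sup_{r \le t} \|E_r\|^p] \le C_1 \, \mathbb{E}[\sup_{r \le T} \|\bar h_r - h^{\delta^\epsilon}_r\|^p] + C_2 \int_0^t \mathbb{E}[\sup_{u \le s}\|E_u\|^p] \, \dd s .
\]
The time-inhomogeneity is absorbed into the constants because the Lipschitz bound is uniform in $t$; the residual error from freezing the input at $x_{\underline s}$ within a step is controlled separately by the uniform continuity of $t \mapsto x_t$ on $[0,T]$ together with the boundedness/measurability in Assumption \ref{ass_gen}, which is the only genuinely new feature relative to the autonomous case. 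Gronwall's inequality then gives $\mathbb{E}[\sup_{t \le T}\|E_t\|^p] \le C_1 e^{C_2 T} \, \mathbb{E}[\sup_{r\le T}\|\bar h_r - h^{\delta^\epsilon}_r\|^p]$, which vanishes by the one-step estimate.

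The hard part is the adaptivity of the mesh. Unlike the uniform-step analysis, the step sizes $\delta^\epsilon(h^{\delta^\epsilon}_m)$ are random and correlated with the driving Brownian motion, and $T$ is reached only after a random number $N(\omega)$ of steps. The rescaling (Assumption \ref{ass_rescaled}) is exactly the device that tames this: its upper cap $\delta^\epsilon \le \epsilon T$ makes the mesh shrink deterministically so the one-step estimate is uniform, while its lower bound (inherited from Assumption \ref{ass_lb_timestep}) together with the moment bounds keeps the expected number of steps finite, so the stopping-time and summation manipulations underlying the frozen-coefficient estimate remain valid. Carefully carrying this random-mesh bookkeeping through — rather than the essentially classical Lipschitz--Gronwall estimates — is where the real work lies, precisely as in \cite{fang2019adaptive}.
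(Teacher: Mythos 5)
Your proposal is correct and takes essentially the same route as the paper: the paper's proof is exactly an adaptation of Theorem 2.1.2 of \cite{fang2019adaptive} (which itself rests on the uniform-mesh argument of Theorem 2.2 in \cite{higham2002strong}) to the time-inhomogeneous, adaptive-step setting, and that is precisely what you sketch --- uniform-in-$\epsilon$ moment bounds, a one-step frozen-coefficient estimate exploiting the deterministic cap $\delta^\epsilon \le \epsilon T$ from Assumption \ref{ass_rescaled}, a Lipschitz--Gronwall closure, and the input-freezing term handled via uniform continuity of $t \mapsto x_t$. No substantive differences to report.
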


In particular, the non-adaptive time stepping scheme  satisfies the above assumptions. Therefore, stability and strong convergence of the schemes are guaranteed by the above results.

Under stronger assumptions on the drift $f$ we can obtain the order of strong convergence for the numerical schemes; see Theorem 2.1.3 in \cite{fang2019adaptive} for the case of time-homogeneous SDEs. This result can be adapted to our case to obtain order-$\frac{1}{2}$ strong convergence, which is also obtained in the special case when the step sizes are uniform (see Theorem 10.2.2 in \cite{kloeden2013numerical}). 

\begin{thm}[Strong convergence rate] Assume that $f$ satisfies the following one-sided Lipschitz condition, i.e., there exists a constant $\alpha > 0$ such that for all $h, h' \in \RR^{d_h}$, 
\begin{equation}
    \langle h-h', f(h,t)-f(h',t) \rangle \leq \alpha \|h-h'\|^2
\end{equation}
for all $t \in [0,T]$, and the following locally polynomial growth Lipschitz condition, i.e., there exists $\gamma, \mu, q > 0 $ such that for all $h, h' \in \RR^{d_h}$, 
\begin{equation}
    \|f(h,t)-f(h',t)\| \leq (\gamma  (\|h\|^q + \|h'\|^q) + \mu ) \|h-h'\|,
\end{equation}
for all $t \in [0,T]$. Moreover, assume that $\sigma$ is globally Lipschitz and the time step function satisfies Assumption \ref{ass_rescaled}. Then, for all $p > 0$, there exists a constant $C>0$ such that 
\begin{equation}
    \mathbb{E}\left[ \sup_{t \in [0,T]} \|h_t^{\delta^\epsilon}-h_t\|^p \right] \leq C \epsilon^{p/2}.
\end{equation}
\end{thm}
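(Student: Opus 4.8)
The plan is to adapt the standard Euler--Maruyama error analysis for one-sided Lipschitz drifts (as in \cite{higham2002strong,fang2019adaptive}) to the present time-inhomogeneous setting, the point being that all the structural constants ($\alpha$, $\gamma$, $\mu$, $q$, and the Lipschitz constant $L_\sigma$ of $\sigma$) are uniform in $t\in[0,T]$, so that the time dependence entering through the fixed continuous input path $x$ is harmless. Write $e_t := h_t^{\delta^\epsilon} - h_t$; subtracting the SDE \eqref{app_NLRNN} from the continuous interpolant \eqref{interpolant} gives
\begin{equation}
  \dd e_t = \big(f(\bar h_t, x_t) - f(h_t,x_t)\big)\,\dd t + \big(\sigma(\bar h_t,x_t)-\sigma(h_t,x_t)\big)\,\dd B_t,
\end{equation}
where $\bar h_t = h^{\delta^\epsilon}_{\underline t}$ is the piecewise-constant interpolant. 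Since the Lyapunov (Jensen) inequality reduces the claim for all $p>0$ to the case $p\ge2$, I would fix such a $p$ and apply It\^o's formula to $\|e_t\|^p$, regularising with $(\|e_t\|^2+\rho)^{p/2}$ and letting $\rho\downarrow0$ to avoid the singularity at the origin.

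The key step is to route the drift difference through the one-sided Lipschitz bound. Adding and subtracting $f(h_t^{\delta^\epsilon},x_t)$, I write
\begin{equation}
  f(\bar h_t,x_t)-f(h_t,x_t) = \big(f(h_t^{\delta^\epsilon},x_t)-f(h_t,x_t)\big) + \big(f(\bar h_t,x_t)-f(h_t^{\delta^\epsilon},x_t)\big).
\end{equation}
Pairing the first bracket with $e_t$ and invoking the one-sided Lipschitz hypothesis yields $\langle e_t,\, f(h_t^{\delta^\epsilon},x_t)-f(h_t,x_t)\rangle \le \alpha\|e_t\|^2$, which is exactly the term that Gr\"onwall can absorb. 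The second bracket is controlled by the locally polynomial-growth Lipschitz hypothesis, giving the factor $\big(\gamma(\|\bar h_t\|^q+\|h_t^{\delta^\epsilon}\|^q)+\mu\big)\,\|\bar h_t - h_t^{\delta^\epsilon}\|$, which I then split by Young's inequality so as to decouple the high-moment polynomial factor from the small factor $\|\bar h_t - h_t^{\delta^\epsilon}\|$. For the diffusion, global Lipschitz continuity of $\sigma$ gives $\|\sigma(\bar h_t,x_t)-\sigma(h_t,x_t)\|_F \le L_\sigma(\|e_t\| + \|\bar h_t - h_t^{\delta^\epsilon}\|)$, and the stochastic-integral contribution together with the running supremum is handled by the Burkholder--Davis--Gundy inequality.

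Two ingredients make the remainder terms vanish at the stated rate. First, I would establish the one-step estimate $\mathbb{E}\big[\sup_{t\in[0,T]}\|h_t^{\delta^\epsilon}-\bar h_t\|^{p'}\big] \le C\epsilon^{p'/2}$ for every $p'>0$: on $[\underline t,t]$ the interpolant differs from $\bar h_t$ only by $f(\bar h_t,x_t)(t-\underline t) + \sigma(\bar h_t,x_t)(B_t-B_{\underline t})$, and Assumption \ref{ass_rescaled} forces $t-\underline t \le \delta^\epsilon \le \epsilon T$ while the Brownian increment contributes an $\mathcal{O}(\sqrt{\epsilon})$ factor in $L^{p'}$, the polynomial growth of $f$ and $\sigma$ being absorbed by the uniform moment bound. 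Second, since the remark following Assumption \ref{ass_timestep} shows that each $\delta^\epsilon$ satisfies that assumption with the same constants, the finite-time stability result yields $\mathbb{E}[\sup_t\|h_t^{\delta^\epsilon}\|^{p'}]\le C$ uniformly in $\epsilon$, and the analogous continuous-time bound gives $\mathbb{E}[\sup_t\|h_t\|^{p'}]<\infty$; these control the polynomial factors $\|\bar h_t\|^q$ and $\|h_t^{\delta^\epsilon}\|^q$ after the Young/H\"older split. Assembling the bounds, taking expectations, pulling the supremum inside via BDG, and applying Gr\"onwall's lemma then yields $\mathbb{E}[\sup_{t\le T}\|e_t\|^p]\le C\epsilon^{p/2}$, and Jensen extends it to all $p>0$.

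The main obstacle is the interplay between the loss of global Lipschitz continuity of the drift and the need for a sharp half-order rate. Because $f$ only satisfies a polynomial-growth Lipschitz bound, every remainder term carries a factor of the form $\gamma(\|\bar h_t\|^q+\|h_t^{\delta^\epsilon}\|^q)+\mu$, and one must split the relevant expectations with H\"older/Young exponents chosen so that simultaneously (i) the high-moment factors are tamed by the uniform moment bounds and (ii) the companion small factor retains the full $\epsilon^{p/2}$ rate rather than a degraded power. Getting these exponents to close at once --- while keeping every constant uniform in $\epsilon$ and in $t\in[0,T]$, and accommodating the adaptive, data-dependent nature of $\delta^\epsilon$ through Assumption \ref{ass_rescaled} --- is the delicate bookkeeping that the argument of \cite{fang2019adaptive} is engineered to perform, and the adaptation to the time-inhomogeneous case reduces to verifying that these constants survive the uniformity in $t$.
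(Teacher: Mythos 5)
Your proposal follows the same route as the paper: the paper gives no self-contained proof of this theorem, deferring instead to Theorem 2.1.3 of \cite{fang2019adaptive} (itself built on the uniform-step argument of \cite{higham2002strong}), and your outline --- pairing the error with the drift difference through the one-sided Lipschitz condition, splitting the polynomial-growth remainder by Young/H\"older, controlling the martingale part by Burkholder--Davis--Gundy, invoking uniform moment bounds for both the numerical and true solutions, and closing with Gr\"onwall --- is precisely that argument, together with the correct observation that time-inhomogeneity is harmless because $\alpha,\gamma,\mu,q,L_\sigma$ are uniform in $t\in[0,T]$.

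One intermediate step is misstated, though. The claimed one-step estimate $\mathbb{E}\bigl[\sup_{t\in[0,T]}\|h^{\delta^\epsilon}_t-\bar h_t\|^{p'}\bigr]\le C\epsilon^{p'/2}$ is false in general: already for the trivial SDE $h_t=B_t$ on a grid with roughly $1/\epsilon$ intervals of length at most $\epsilon T$, the supremum of the increments $B_t-B_{\underline t}$ behaves like $\sqrt{\epsilon\log(1/\epsilon)}$, so the left-hand side is of order $(\epsilon\log(1/\epsilon))^{p'/2}$; a logarithmic factor is unavoidable once the supremum sits inside the expectation. Fortunately, your argument never actually needs this uniform-in-time version. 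The quantity $\|\bar h_s-h^{\delta^\epsilon}_s\|$ enters only through time integrals: in the drift remainder after the Young split, and in the quadratic-variation term $\bigl(\int_0^T\|\sigma(\bar h_s,x_s)-\sigma(h_s,x_s)\|_F^2\,\dd s\bigr)^{p/2}$ produced by BDG. After Jensen's inequality these are controlled by $\int_0^T\mathbb{E}\|\bar h_s-h^{\delta^\epsilon}_s\|^{p''}\,\dd s$-type quantities, so the pointwise-in-time bound $\sup_{t\in[0,T]}\mathbb{E}\|h^{\delta^\epsilon}_t-\bar h_t\|^{p'}\le C\epsilon^{p'/2}$ --- which \emph{is} true, by exactly the computation you describe, since for each fixed $t$ only a single increment of length at most $\epsilon T$ is involved --- suffices. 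Replace your lemma by this pointwise form (as is done in \cite{fang2019adaptive,higham2002strong}) and the proof closes at the stated rate $\epsilon^{p/2}$ with no logarithmic loss.
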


Lastly, it is worth mentioning the following adaptive scheme, which may be a useful option when designing NRNNs.

\begin{ex}[Adaptive E-M] \label{ex_adaptive_scheme}
Under the same setup as the classical E-M setting, we may also introduce an adaptive step size scheme through a sequence of random vectors $d_m$. In this case,
\begin{equation}
h^{\delta}_{m+1} = h^{\delta}_{m} + d_m \odot f(h^{\delta}_{m}, \hat{x}_{m}) \Delta t_m + \sigma(h^{\delta}_{m}, \hat{x}_m)(\Delta t_m)^{1/2} \xi_m,\end{equation}
where $\odot$ denotes the pointwise (Hadamard) product, and each $d_n$ may be dependent on $h^{\delta}_{i}$ for $i \leq m$. Provided that $d_m \to (1,\dots,1)$ uniformly almost surely as $\Delta t~\to~0$, one could also obtain the same convergence as the classical E-M case. The adaptive setting allows for potentially better approximations by shrinking step sizes in places where the solution changes rapidly. An intuitive explanation for the instability of the standard E-M approximation of SDEs is that there is always a very small probability of a large Brownian increment which causes the approximation to produce a solution with undesirable growth. Using an adaptive time step eliminates this problem. Moreover, this scheme includes, in appropriate sense, the stochastic depth in \cite{lu2018beyond} (see page 9 there) and the dropout in \cite{liu2019neural} as special cases upon choosing an appropriate $d_m$ and $\sigma$. 

In particular, one can consider the  following drift-tamed E-M scheme, where all components, $d_m^i$, of the elements of the sequence are generated as a function of $h^{\delta}_{m}$, i.e.,
    \begin{equation}
        d_m = \frac{1}{\max{\{1, c_1  \|h^{\delta}_{m}\|+ c_2}\}} \boldsymbol{1},
    \end{equation}
for some $c_1, c_2 >0$. In this way, the drift term is ``tamed'' by a solution-dependent multiplicative factor no larger than one, which prevents the hidden state in the next time step from becoming too large.  This adaptive scheme is related to the one introduced in \cite{hutzenthaler2012strong} to provide an explicit numerical method that would display strong convergence in circumstances where the standard E-M method does not. Under certain conditions strong convergence of this scheme can be proven (even for SDEs with superlinearly growing drift coefficients). Other adaptive schemes include the increment-tamed scheme of \cite{hutzenthaler2015numerical} and many others.
\end{ex}

\section{Implicit Regularization in NRNNs}
\label{app_sect_impl_reg}
As discussed in the main paper, although the learning is carried out in discrete time, it is worth studying the continuous-time setting. The results for the continuous-time case may provide alternative perspectives and, more importantly, will be useful as a reference for exploring other discretization schemes for the CT-NRNNs. In  Subsection \ref{app_subset_impl_reg_cont_time}, we study implicit regularization for the continuous-time NRNNs. In Subsection \ref{app_subset_impl_reg_disc_time}, we study implicit regularization for discrete-time NRNNs and comment on the difference between the continuous-time and discrete-time case. 

We remark that the approach presented here is standard in showing implicit regularization. The essence of the approach is to view NRNN as a training scheme for the deterministic RNN.  Also, note that had one attempted to conduct an analysis based on NRNN directly, the resulting bound would be stochastic due to the presence of the diffusion term and it is not clear  how this bound helps explaining implicit regularization.

\subsection{Continuous-Time Setting}
\label{app_subset_impl_reg_cont_time}

\noindent {\bf Main Result and Discussions.} For the sake of brevity, we denote $f_t(\cdot) := f(\cdot, x_t)$ and $\sigma_t(\cdot) := \sigma(\cdot, x_t)$ for $t \in [0,T]$ in the following. 

To begin, consider the process $(\bar{h}_t)_{t \in [0,T]}$ satisfying the following initial value problem (IVP):
\begin{equation}
    d\bar{h}_t = f_t(\bar{h}_t) dt, \ \ \bar{h}_0 = h_0. 
\end{equation}
Let $\Psi$ denote the unique fundamental matrix satisfying the following properties: for $0\leq s \leq u \leq t \leq T$,
\begin{itemize}
    \item[(a)] \begin{align} \label{ivp}
        \frac{\partial\Psi(t,s)}{\partial t}=\frac{\partial f_{t}}{\partial \bar{h}}(\bar{h}_{t})\Psi(t,s);  \hspace{0.2cm} \frac{\partial\Psi(t,s)}{\partial s}=-\Psi(t,s)\frac{\partial f_{t}}{\partial \bar{h}}(\bar{h}_{t});
    \end{align}
    \item[(b)] $\Psi(t,s)=\Psi(t,u)\Psi(u,s)$;
    \item[(c)] $\Psi(t,s)=\Psi^{-1}(s,t)$;
    \item[(d)] $\Psi(s,s)=I$.
\end{itemize}

Also, let $\Sigma(t,s) := \Psi(t,s)\sigma_s(\bar{h}_s)$ for $0 \leq s \leq t \leq T$.

The following result links the expected loss function used for training CT-NRNNs to that for training deterministic CT-RNNs when the noise amplitude is small.

\begin{thm}[Explicit regularization induced by noise injection for CT-NRNNs]
\label{prop_explreg_CT}
Under Assumption A in the main paper, 
\begin{equation} \label{exp_reg}
    \mathbb{E} \ell(h_T) = \ell(\bar{h}_T) + \frac{\epsilon^2}2 [Q(\bar{h}) + R(\bar{h})] + \mathcal{O}(\epsilon^3),
\end{equation}
as $\epsilon \to 0$, where $Q$ and $R$ are given by
\begin{align} 
    \label{Q_term}
    Q(\bar{h}) &= (\nabla l(\bar{h}_T))^T \int_0^T \dd s \  \Psi(T,s) \int_0^s \dd u \  v(u) +  (\nabla l(\bar{h}_T))^T \int_0^T \dd s \  w(s),  \\
    R(\bar{h}) &=  \int_0^T  \dd s \   \tr(\Sigma(T,s) \Sigma(T,s)^\top \nabla^2 \ell(\bar{h}_T)), \label{R_term}
\end{align}
with $v(u)$ a vector with the $p$th component ($p=1,2,\dots,d_h)$:
\begin{equation}
    v^p(u) = tr(\Sigma(s,u) \Sigma^T(s,u) \nabla^2 [f_s]^p(\bar{h}_s)),
\end{equation}
and $w(s)$ a vector with the $q$th component ($q=1,2,\dots,d_h$):
\begin{equation}
    w^q(s) = \sum_{k=1}^r \sum_{j,l=1}^{d_h} \Psi_{k,l}^{qj}(T,s) \partial_l \sigma_s^{jk}(\bar{h}_s) \sigma_s^{lk}(\bar{h}_s).
\end{equation}
\end{thm}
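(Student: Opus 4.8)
The plan is to carry out a small-noise (regular) perturbation expansion of the solution $h_t$ of the rescaled SDE $\mathrm{d}h_t = f_t(h_t)\,\mathrm{d}t + \epsilon\sigma_t(h_t)\,\mathrm{d}B_t$ about the deterministic trajectory $\bar h_t$, and then push this expansion through the loss $\ell$. Concretely, I would posit the ansatz $h_t = \bar h_t + \epsilon h_t^{(1)} + \epsilon^2 h_t^{(2)} + O(\epsilon^3)$, substitute into the SDE, Taylor-expand $f_t$ and $\sigma_t$ about $\bar h_t$ (legitimate pathwise under the third-order smoothness in Assumption A), and match powers of $\epsilon$. The $O(\epsilon^0)$ balance reproduces the IVP for $\bar h_t$; the $O(\epsilon)$ balance gives the linear SDE $\mathrm{d}h_t^{(1)} = \frac{\partial f_t}{\partial \bar h}(\bar h_t) h_t^{(1)}\,\mathrm{d}t + \sigma_t(\bar h_t)\,\mathrm{d}B_t$ with $h_0^{(1)} = 0$; and the $O(\epsilon^2)$ balance gives a second linear SDE for $h_t^{(2)}$ whose drift forcing is the Hessian quadratic form $\tfrac12\sum_p e_p (h_t^{(1)})^\top \nabla^2[f_t]^p(\bar h_t) h_t^{(1)}$ and whose diffusion forcing is $\frac{\partial\sigma_t}{\partial\bar h}(\bar h_t)[h_t^{(1)}]$, again with zero initial condition.

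Next I would solve both linear SDEs by variation of constants using the fundamental matrix $\Psi$, which by property (a) is precisely the state-transition matrix of the linearized drift; this gives $h_T^{(1)} = \int_0^T \Psi(T,s)\sigma_s(\bar h_s)\,\mathrm{d}B_s = \int_0^T \Sigma(T,s)\,\mathrm{d}B_s$ and a corresponding iterated-integral representation for $h_T^{(2)}$. I would then Taylor-expand $\ell(h_T) = \ell(\bar h_T) + \nabla\ell(\bar h_T)^\top(h_T - \bar h_T) + \tfrac12 (h_T-\bar h_T)^\top\nabla^2\ell(\bar h_T)(h_T-\bar h_T) + O(\|h_T - \bar h_T\|^3)$ and take expectations. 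Since $h_T^{(1)}$ is a mean-zero Itô integral, the $O(\epsilon)$ term drops, leaving two contributions at order $\epsilon^2$: the term $\nabla\ell(\bar h_T)^\top \mathbb{E}[h_T^{(2)}]$, which I would evaluate using the Itô isometry $\mathbb{E}[h_s^{(1)}(h_s^{(1)})^\top] = \int_0^s \Sigma(s,u)\Sigma(s,u)^\top\,\mathrm{d}u$ to produce the iterated $v$-integral in $Q$, together with the $w$-contribution arising from the quadratic-covariation interaction between the diffusion forcing $\frac{\partial\sigma}{\partial\bar h}[h^{(1)}]$ and $h^{(1)}$ itself; and the term $\tfrac12\mathbb{E}[(h_T^{(1)})^\top\nabla^2\ell(\bar h_T) h_T^{(1)}] = \tfrac12\int_0^T \tr(\Sigma(T,s)\Sigma(T,s)^\top\nabla^2\ell(\bar h_T))\,\mathrm{d}s$, which is exactly $\tfrac12 R(\bar h)$.

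The delicate bookkeeping step, and what I expect to be the main obstacle, is computing $\mathbb{E}[h_T^{(2)}]$ correctly: the purely drift-driven part of $h_T^{(2)}$ yields the $\int_0^T\Psi(T,s)\int_0^s v(u)\,\mathrm{d}u\,\mathrm{d}s$ piece in a straightforward way, but the diffusion forcing is built from $h^{(1)}$, which is itself a stochastic integral against the same Brownian motion, so extracting its mean contribution requires careful tracking of the Itô terms and their quadratic variation, and this is the source of the $w$ term. The other substantive task is the rigorous justification that the remainder is genuinely $O(\epsilon^3)$. For this I would define the residual $\rho_t^\epsilon = h_t - \bar h_t - \epsilon h_t^{(1)} - \epsilon^2 h_t^{(2)}$, derive the SDE it satisfies, and apply a Gronwall/Burkholder--Davis--Gundy argument together with the uniform moment bounds $\mathbb{E}\sup_{t\le T}\|h_t\|^p < \infty$ (available from the existence/uniqueness theory and the strong-convergence estimates established earlier) to control $\mathbb{E}\sup_{t\le T}\|\rho_t^\epsilon\|^2$; combining this with the Lipschitz control of the first three derivatives of $\ell$, $f$, and $\sigma$ from Assumption A transfers the error bound through the Taylor expansion of $\ell$ and closes the argument.
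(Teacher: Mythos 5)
Your overall strategy coincides with the paper's own proof: the hierarchy you obtain by matching powers of $\epsilon$ is exactly the paper's system for $h^{(0)}_t,h^{(1)}_t,h^{(2)}_t$; the variation-of-constants representations $h^{(1)}_T=\int_0^T\Sigma(T,s)\,\mathrm{d}B_s$ and the corresponding formula for $h^{(2)}_T$ are the ones the paper uses; the second-order Taylor expansion of $\ell$ plus a Gronwall/Burkholder--Davis--Gundy control of the residual is precisely the paper's moment-bound lemma, pathwise-expansion lemma, and loss-expansion theorem; and your identification of $\tfrac12\mathbb{E}\bigl[(h^{(1)}_T)^{\top}\nabla^2\ell(\bar h_T)h^{(1)}_T\bigr]$ with $\tfrac12 R(\bar h)$ via the It\^{o} isometry, and of the drift-forcing part of $\mathbb{E}[h^{(2)}_T]$ with the $v$-integral in $Q$, is identical to the paper's.

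The gap is exactly at the step you flag as ``the main obstacle'' and then dispose of with a phrase rather than an argument: the $w$ term. The diffusion forcing enters $h^{(2)}_T$ only through
\begin{equation*}
\int_0^T \Psi(T,s)\,\Phi^{(2)}_s\bigl(\bar h_s,h^{(1)}_s\bigr)\,\mathrm{d}B_s,
\qquad
\Phi^{(2)}_s(\bar h_s,h^{(1)}_s)=\sum_i \frac{\partial\sigma_s}{\partial h^i}(\bar h_s)\,[h^{(1)}_s]^i ,
\end{equation*}
whose integrand is adapted and square-integrable (by your own moment bounds), so this term is a mean-zero martingale and contributes \emph{nothing} to $\mathbb{E}[h^{(2)}_T]$. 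There is no ``quadratic-covariation interaction'' in the expectation of a single It\^{o} integral with adapted integrand; quadratic covariation produces expectations only for \emph{products} of two stochastic integrals, which is not what appears here. Carried out as you describe, your computation yields $\mathbb{E}[h^{(2)}_T]=\tfrac12\int_0^T\Psi(T,s)\int_0^s v(u)\,\mathrm{d}u\,\mathrm{d}s$ and hence $w\equiv 0$, i.e.\ it does not reproduce the statement you were asked to prove. The paper's proof manufactures the nonzero $w$ by a very specific manipulation you do not have: after substituting $h^{(1)}_s=\int_0^s\Sigma(s,u)\,\mathrm{d}B_u$ into the stochastic-integral part of $h^{(2)}_T$, it performs a ``change of variable'' replacing the inner integral $\int_0^s(\cdot)\,\mathrm{d}B_u$ by $\tfrac12\int_0^T(\cdot)\,\mathrm{d}B_u$, thereby rewriting the iterated integral as half of a product of two stochastic integrals over the full interval, whose expectation is then evaluated by moment formulas for stochastic integrals. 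Whatever one makes of that step (your martingale observation is in genuine tension with it, and note that the paper's own discrete-time Theorem 1 --- the Euler--Maruyama discretization of this very SDE --- has no such term, a discrepancy the paper itself remarks on), it is the sole source of $w$, and your proposal contains no substitute for it; so as written your plan proves a different statement, namely \eqref{exp_reg} with $Q$ consisting of the $v$-integral alone.
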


Therefore, to study the difference between the CT-NRNNs and their deterministic version, it remains to investigate the role of $Q$ and $R$ in Theorem \ref{prop_explreg_CT}. 
If the Hessian is positive semi-definite, then $R(\bar{h})$ is also positive semi-definite and thus a viable regularizer. On the other hand, $Q(\bar{h})$ need not be non-negative.  However, by assuming that $\nabla^2 f$ and $\nabla \sigma^{ij}$ are small (that is, $f$ is approximately linear and $\sigma$ relatively independent of $\bar{h}$), then $Q$ can be perceived negligible and we may focus predominantly on $R$.  An argument of this kind was used in \cite{camuto2020explicit} in the context of Gauss-Newton Hessian approximations. In particular, $Q=0$ for linear NRNNs with additive noise. Therefore, Theorem \ref{prop_explreg_CT} essentially tells us that injecting noise to deterministic RNN is approximately equivalent to considering a  regularized objective functional. Moreover, the explicit regularizer is solely determined by the flow generated by the Jacobian $\frac{\partial f_{t}}{\partial \bar{h}}(\bar{h}_{t})$, the diffusion coefficient $\sigma_t$ and the Hessian of the loss function, all evaluated along the dynamics of the deterministic RNN.  

Under these assumptions, ignoring higher-order terms and bounding the Frobenius inner product in (\ref{R_term}), we can interpret training with CT-NRNN as an approximation of the following optimal control problem \cite{weinan2017proposal} with the running cost $C(t) \coloneqq \frac{1}{2} \tr( \sigma_t(\bar{h}_t)^T \Psi(T,t)^T \nabla^2 \ell(\bar{h}_t) \Psi(T,t) \sigma_t(\bar{h}_t))$: 
\begin{align}
    &\min \mathbb{E}_{(\vecc{x},y) \sim \mu} \left[ \ell(\bar{h}_T) +  \epsilon^2 \int_0^T C(t) dt \right] \\ 
    &\ \text{ s.t. } \dd \bar{h}_t = f_t(\bar{h}_t) \dd t, \ t \in [0,T], \ \bar{h}_0 = h_0,
\end{align}
where $(\vecc{x} := (x_t)_{t \in [0,T]}, y)$ denotes a training example drawn from the distribution $\mu$ and the minimization is with respect to the parameters (controls) in the corresponding deterministic RNN. On the other hand, we can interpret training with the deterministic RNN as the above optimal control problem with zero running cost or regularization.
Note that if the Hessian matrix is symmetric positive semi-definite, then $C(t)$ is a quadratic form with the associated metric tensor $M_t^T M_t \coloneqq \nabla^2 \ell(\bar{h}_t)$ and   
\begin{equation} \label{cost_bound}
C(t) = \frac{1}{2} \langle \Psi(T,t) \sigma_t, \Psi(T,t) \sigma_t \rangle_{M_t} = \frac{1}{2} \| M_t \Psi(T,t) \sigma_t \|_F^2   \leq  \frac{1}{2} \|\sigma_t\|_{F}^2 \|M_t\|_F^2  \|\Psi(T,t) \|_F^2.
\end{equation} 

Overall, we can see that the use of NRNNs as a regularization mechanism reduces the fundamental matrices $\Psi(T,s)$ according to the magnitude of the elements of $\sigma_t$. \\

\noindent {\bf Proof of Theorem \ref{prop_explreg_CT}.} Next, we prove Theorem \ref{prop_explreg_CT}. We will need some auxiliary results before doing so.

For a small perturbation
parameter $\epsilon>0$, the hidden states now satisfy the SDE
\[
\dd h_{t}=f_{t}(h_{t})\dd t+\epsilon\sigma_{t}(h_{t})\dd B_{t},
\]
where we have used the shorthand $f_{t}(\cdot)=f(\cdot,x_{t})$ and $\sigma_t(\cdot) = \sigma(\cdot, x_t)$. To
investigate the effect of the perturbation, consider the following hierarchy of differential equations:
\begin{align}
\dd h_{t}^{(0)} & =f_{t}(h_{t}^{(0)})\dd t, \label{hie1_ct}\\
\dd h_{t}^{(1)} & =\frac{\partial f_{t}}{\partial h}(h_{t}^{(0)})h_{t}^{(1)}\dd t+\sigma_{t}(h_{t}^{(0)})\dd B_{t},  \label{hie2_ct} \\
\dd h_{t}^{(2)} & =\frac{\partial f_{t}}{\partial h}(h_{t}^{(0)})h_{t}^{(2)}\dd t+\Phi_{t}^{(1)}(h_{t}^{(0)},h_{t}^{(1)})\dd t+\Phi_{t}^{(2)}(h_{t}^{(0)},h_{t}^{(1)})\dd B_{t}, \label{hie3_ct}
\end{align}
with $h_{0}^{(0)}=h_{0}$, $h_{0}^{(1)}=0$, and $h_{0}^{(2)}=0$,
and where
\begin{align}
\Phi_{t}^{(1)}(h_{0},h_{1}) & =\frac{1}{2}\sum_{i,j}\frac{\partial^{2}f_{t}}{\partial h^{i}\partial h^{j}}(h_{0})h_{1}^{i}h_{1}^{j}\\
\Phi_{t}^{(2)}(h_{0},h_{1}) & =\sum_{i}\frac{\partial\sigma_{t}}{\partial h^{i}}(h_{0})h_{1}^{i}.
\end{align}
In the sequel, we will suppose Assumption A in the main paper (which is equivalent to Assumption \ref{ass_gen} and Assumption \ref{ass_add}) holds. Under this assumption,
each of these initial value problems have a unique solution for $t\in[0,T]$.
The processes $h_{t}^{(0)}$,
$h_{t}^{(1)}$ and $h_{t}^{(2)}$ denote the zeroth-, first-, and
second-order terms in an expansion of $h_{t}$ about $\epsilon=0$.
This can be easily seen using Kunita's theory of stochastic flows.
In particular, by Theorem 3.1 in \cite{kunita1984stochastic}, letting $h_{\epsilon,t}^{(1)}=\frac{\partial h_{t}}{\partial\epsilon}$,
we find that
\[
\dd h_{\epsilon,t}^{(1)}=\frac{\partial f_{t}}{\partial h}(h_{t})h_{\epsilon,t}^{(1)}\dd t+\left(\sigma_{t}(h_{t})+\epsilon\Phi_{t}^{(2)}(h_{t},h_{\epsilon,t}^{(1)})\right)\dd B_{t},
\]
and so we find that $h_{0,t}^{(1)}=h_{t}^{(1)}$. Similarly, $h_{\epsilon,t}^{(2)}=\frac{\partial^{2}h_{t}}{\partial\epsilon^{2}}=\frac{\partial h_{\epsilon,t}^{(1)}}{\partial\epsilon}$
can be shown to satisfy
\begin{align*}
\dd h_{\epsilon,t}^{(2)} & =\frac{\partial f_{t}}{\partial h}(h_{t})h_{\epsilon,t}^{(2)}\dd t+2\Phi_{t}^{(1)}(h_{t},h_{\epsilon,t}^{(1)})\dd t  +\left(2\Phi_{t}^{(2)}(h_{t},h_{\epsilon,t}^{(1)})+\epsilon\sum_{k}[h_{\epsilon,t}^{(1)}]^{k}\frac{\partial}{\partial h^{k}}\Phi_{t}^{(2)}(h_{t},h_{\epsilon,t}^{(1)})\right)\dd B_{t}.
\end{align*}
This equation is obtained by applying Theorem 3.1 in \cite{kunita1984stochastic} to find
the first derivative of the system $(h_{t},h_{\epsilon,t}^{(1)})$
with respect to $\epsilon$ and projecting to the second coordinate.
Taking $\epsilon=0$, we find that $h_{0,t}^{(2)}=2h_{t}^{(2)}$.
Therefore, informally, a pathwise second-order Taylor expansion about
$\epsilon=0$ reveals that $h_{t}=h_{t}^{(0)}+\epsilon h_{t}^{(1)}+\epsilon^{2}h_{t}^{(2)}+\mathcal{O}(\epsilon^{3})$.
To formalize this statement, we will later bound the third-order error
term in Lemma \ref{lem_expansion}. 

While the equation for $h_{t}^{(0)}$  is
not explicitly solvable, both $h_{t}^{(1)}$ and $h_{t}^{(2)}$ are.
In particular, for $t\in[0,T]$ (see Eq. (4.28) in  \cite{sarkka2019applied}):
\begin{align}
h_{t}^{(1)} & =\int_{0}^{t}\Psi(t,s)\sigma_{s}(h_{s}^{(0)})\dd B_{s} = \int_0^t \Sigma(t,s) \dd B_s, \label{eq:Ht1sol}\\
h_{t}^{(2)} & =\int_{0}^{t}\Psi(t,s)\Phi_s^{(1)}(h_s^{(0)},h_s^{(1)})\label{eq:Ht2sol}\dd s+\int_{0}^{t}\Psi(t,s)\Phi_s^{(2)}(h_s^{(0)},h_s^{(1)})\dd B_s.
\end{align}

The key result needed to prove Theorem \ref{prop_explreg_CT} is contained  in the following theorem. In the sequel, big $\mathcal{O}$ notation
is to be understood in the almost sure sense. 

\begin{thm}\label{thm_sdeexpansion}
For a scalar-valued loss function $\ell\in\mathcal{C}^{2}(\mathbb{R}^{d_{h}})$,
for $t\in[0,T]$,
\begin{align*}
\ell(h_{t}) & =\ell(h_{t}^{(0)})+\epsilon\nabla\ell(h_{t}^{(0)})\cdot h_{t}^{(1)} + \epsilon^{2}\left(\nabla\ell(h_{t}^{(0)})\cdot h_{t}^{(2)}+\frac{1}{2}(h_{t}^{(1)})^{\top}\nabla^{2}\ell(h_{t}^{(0)})(h_{t}^{(1)})\right)+\mathcal{O}(\epsilon^{3}),
\end{align*}
as $\epsilon\to0$.
\end{thm}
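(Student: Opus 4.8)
The plan is to compose two Taylor expansions --- the pathwise expansion of the flow $h_t$ in powers of $\epsilon$, and the second-order Taylor expansion of the loss $\ell$ about the unperturbed trajectory $h_t^{(0)}$ --- and then to collect terms by order in $\epsilon$. The starting point is the pathwise expansion
\begin{equation*}
h_t = h_t^{(0)} + \epsilon\, h_t^{(1)} + \epsilon^2\, h_t^{(2)} + r_t^\epsilon,
\end{equation*}
which is precisely what the Kunita stochastic-flow computation preceding the statement is designed to yield: differentiating the flow $\epsilon \mapsto h_t$ once and twice at $\epsilon = 0$ identifies $\partial_\epsilon h_t|_{\epsilon=0} = h_t^{(1)}$ and $\partial_\epsilon^2 h_t|_{\epsilon=0} = 2 h_t^{(2)}$, so that $h_t^{(1)}$ and $h_t^{(2)}$ are the first two coefficients of a Taylor series in $\epsilon$. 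First I would invoke Lemma \ref{lem_expansion} to make this rigorous, i.e. to guarantee $\|r_t^\epsilon\| = \mathcal{O}(\epsilon^3)$ almost surely, uniformly in $t \in [0,T]$.

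Next I would Taylor-expand the loss. Writing $\Delta_t := h_t - h_t^{(0)}$ and using the integral form of the second-order Taylor remainder for $\ell \in \mathcal{C}^2$,
\begin{equation*}
\ell(h_t) = \ell(h_t^{(0)}) + \nabla\ell(h_t^{(0)})^\top \Delta_t + \Delta_t^\top \Big( \int_0^1 (1-s)\, \nabla^2\ell(h_t^{(0)} + s\Delta_t)\, \dd s \Big) \Delta_t.
\end{equation*}
Since $\Delta_t = \epsilon h_t^{(1)} + \mathcal{O}(\epsilon^2)$ vanishes as $\epsilon \to 0$ and $\nabla^2\ell$ is continuous, the bracketed matrix converges almost surely to $\tfrac12\nabla^2\ell(h_t^{(0)})$. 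Substituting the flow expansion into the linear term gives $\epsilon\, \nabla\ell(h_t^{(0)})^\top h_t^{(1)} + \epsilon^2\, \nabla\ell(h_t^{(0)})^\top h_t^{(2)} + \mathcal{O}(\epsilon^3)$, while the quadratic term contributes $\tfrac12\epsilon^2 (h_t^{(1)})^\top \nabla^2\ell(h_t^{(0)}) h_t^{(1)}$ up to a lower-order correction. Collecting the $\epsilon^0$, $\epsilon^1$, and $\epsilon^2$ contributions then reproduces the claimed identity.

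The hard part will be controlling the remainder in the pathwise $\mathcal{O}(\epsilon^3)$ sense, which has two independent sources. The first is the flow remainder $r_t^\epsilon$; bounding it is exactly the content of Lemma \ref{lem_expansion}, and I expect this to rest on Gr\"onwall-type estimates for the affine SDEs defining $h_t^{(1)}, h_t^{(2)}$ together with Burkholder--Davis--Gundy moment bounds, using Assumption \ref{ass_add} for the smoothness and growth of $f,\sigma$ and their derivatives up to order three. The second source is the Taylor remainder of $\ell$: here there is a genuine regularity subtlety, since for merely $\mathcal{C}^2$ loss the convergence of the bracketed Hessian integral to $\tfrac12\nabla^2\ell(h_t^{(0)})$ is only $o(1)$, so the quadratic correction is a priori $o(\epsilon^2)$ rather than $\mathcal{O}(\epsilon^3)$. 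To obtain the stated cubic remainder one should either strengthen to $\ell \in \mathcal{C}^3$ with locally bounded third derivatives, or read $\mathcal{O}(\epsilon^3)$ as the leading correction beyond the displayed $\epsilon^2$ term. I would also note that this subtlety is benign for the downstream result (Theorem \ref{prop_explreg_CT}): after taking expectations the odd-order It\^o terms vanish because $h_t^{(1)} = \int_0^t \Sigma(t,s)\,\dd B_s$ is a mean-zero martingale, so the effective expansion proceeds in powers of $\epsilon^2$.
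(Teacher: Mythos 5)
Your proposal follows essentially the same route as the paper: the paper establishes the pathwise expansion with controlled remainder (Lemma \ref{lem_expansion}) and then simply states that the theorem follows by taking a second-order Taylor expansion of $\ell$ about $\epsilon = 0$, which is exactly your argument. Your regularity caveat is well taken and is in fact a point the paper's one-line proof glosses over: with $\ell$ merely $\mathcal{C}^{2}$ the Taylor remainder is only $o(\epsilon^{2})$ rather than $\mathcal{O}(\epsilon^{3})$, so the stated cubic rate implicitly requires a Lipschitz Hessian or $\ell\in\mathcal{C}^{3}$ --- harmless for the smooth cross-entropy losses used downstream, but worth noting.
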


We now prove Theorem \ref{thm_sdeexpansion}.  The proof relies on two lemmas. The first bounds the solutions $h^{(i)}$ over $[0,T]$.

\begin{lem}\label{lem_bound}
For any $p>0$, $\sup_{s\in[0,T]}\|h_{s}^{(0)}\|^{p}<\infty$ and
$\mathbb{E}\sup_{s\in[0,T]}\|h_{s}^{(i)}\|^{p}<\infty$ for $i=1,2$.
\end{lem}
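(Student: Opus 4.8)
The plan is to establish the three bounds in sequence, exploiting the fact that each equation in the hierarchy \eqref{hie1_ct}--\eqref{hie3_ct} is \emph{linear} in the variable being solved for, with coefficients that become controllable once the lower-order terms are under control. I would first handle $h^{(0)}$: since \eqref{hie1_ct} is a deterministic ODE whose right-hand side is Lipschitz and of linear growth under Assumption A, its solution is continuous on the compact interval $[0,T]$ and hence bounded; equivalently $\|h_t^{(0)}\| \le \|h_0\| + \int_0^t K(1+\|h_s^{(0)}\|)\,ds$, and Gronwall's inequality gives $\sup_{s\in[0,T]}\|h_s^{(0)}\| \le (\|h_0\|+KT)e^{KT} < \infty$, so every power is finite. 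The crucial consequence is that $\{h_s^{(0)}: s\in[0,T]\}$ is a compact subset of $\RR^{d_h}$, so that the Jacobian $J_t := \frac{\partial f_t}{\partial h}(h_t^{(0)})$, the diffusion $\sigma_t(h_t^{(0)})$, and the first and second partial derivatives of $f$ and $\sigma$ entering $\Phi^{(1)},\Phi^{(2)}$ are all bounded along the trajectory by constants depending only on $T$ and $h_0$; here I use that, by Assumption A, these derivatives are Lipschitz, hence continuous, hence bounded on compacts.

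Next I would bound the moments of $h^{(1)}$, for which it suffices to treat $p\ge 1$, the case $0<p<1$ following by the Lyapunov (Jensen) inequality from $p=1$. Writing \eqref{hie2_ct} in integral form and using the triangle inequality, $\sup_{u\le t}\|h_u^{(1)}\| \le C_J\int_0^t \sup_{r\le s}\|h_r^{(1)}\|\,ds + \sup_{u\le t}\|M_u\|$, where $M_u = \int_0^u \sigma_s(h_s^{(0)})\,dB_s$ is a genuine martingale. Since $\sigma_s(h_s^{(0)})$ is bounded, the Burkholder--Davis--Gundy inequality yields $\mathbb{E}\sup_{u\le T}\|M_u\|^p \le C_p\,\mathbb{E}\big(\int_0^T \|\sigma_s(h_s^{(0)})\|_F^2\,ds\big)^{p/2} < \infty$. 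Raising to the $p$th power, taking expectations, and applying Hölder to the drift integral, the function $\phi(t) := \mathbb{E}\sup_{u\le t}\|h_u^{(1)}\|^p$ satisfies $\phi(t) \le C_1\int_0^t \phi(s)\,ds + C_2$ for finite constants, whence $\phi(T)\le C_2 e^{C_1 T} < \infty$ by Gronwall.

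Finally, $h^{(2)}$ is handled by the identical template, the only new ingredient being that the inhomogeneous forcing terms in \eqref{hie3_ct} are controlled by the \emph{already-established} moments of $h^{(1)}$: by the boundedness from the first step, $\|\Phi_s^{(1)}(h_s^{(0)},h_s^{(1)})\| \le C\|h_s^{(1)}\|^2$ and $\|\Phi_s^{(2)}(h_s^{(0)},h_s^{(1)})\|_F \le C\|h_s^{(1)}\|$, so the drift contribution $\int_0^T \|\Phi_s^{(1)}\|\,ds$ has finite $p$th moment (via the $2p$th moment of $h^{(1)}$) and the martingale $\int_0^\cdot \Phi_s^{(2)}\,dB_s$ has finite $p$th moment of its supremum (via BDG and the $p$th moment of $h^{(1)}$). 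The same Gronwall argument then closes the estimate for $\phi_2(t) := \mathbb{E}\sup_{u\le t}\|h_u^{(2)}\|^p$.

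The main technical care is the bookkeeping of exponents: the quadratic term $\Phi^{(1)}$ forces one to invoke the $h^{(1)}$ bound at exponent $2p$ rather than $p$, so it is essential that the second step delivers the bound for \emph{all} $p>0$ before attacking $h^{(2)}$; a bound for a single $p$ would not suffice. A secondary subtlety worth flagging is that one should not apply BDG directly to the explicit representation $h_t^{(1)} = \int_0^t \Psi(t,s)\sigma_s\,dB_s$ from \eqref{eq:Ht1sol}, since the $t$-dependence of the integrand $\Psi(t,s)$ means it is not a martingale in $t$; working from the SDE form, or equivalently factoring $\Psi(t,s)=\Psi(t,0)\Psi(0,s)$ via property (b) to expose a genuine martingale, circumvents this cleanly.
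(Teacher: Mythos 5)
Your proof is correct and follows essentially the same route as the paper's: Gronwall for $h^{(0)}$, then the SDE integral form plus the Burkholder--Davis--Gundy inequality and Gronwall for $\mathbb{E}\sup_{s\in[0,T]}\|h_s^{(i)}\|^p$, with the small-$p$ cases recovered by Jensen/H\"older (the paper works with $p\geq 2$ and invokes H\"older for $p\in(0,2)$; you work with $p\geq 1$, an immaterial difference). Your two added refinements --- the exponent bookkeeping showing that the quadratic term $\Phi^{(1)}$ forces the use of the $2p$th moment of $h^{(1)}$ when treating $h^{(2)}$ (a step the paper compresses to ``repeating this same approach''), and the warning that BDG must be applied to the SDE form rather than to the non-martingale representation $\int_0^t \Psi(t,s)\sigma_s\,\dd B_s$ --- are both correct and consistent with what the paper actually does.
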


\begin{proof}

For $s\in[0,T]$,  $h_{s}^{(0)}=h_{0}+\int_{0}^{s}f_{u}(h_{u}^{(0)})\dd u$,
so recalling $(x+y)^{2}\leq2x^{2}+2y^{2}$ and $\|f_{t}(h)\|^{2}\leq K(1+\|h\|^{2})$,
\begin{align*}
\|h_{s}^{(0)}\|^{2} & \leq2\|h_{0}\|^{2}+2\int_{0}^{s}\|f_{u}(h_{u}^{(0)})\|^{2}\dd u\\
 & \leq2\left(\|h_{0}\|^{2}+K^{2}s+K^{2}\int_{0}^{s}\|h_{u}^{(0)}\|^{2}\dd u\right).
\end{align*}
Therefore, by Gronwall's inequality,
\begin{align*}
\|h_{s}^{(0)}\|^{2} & \leq2(\|h_{0}\|^{2}+K^{2}s)e^{2K^{2}s}\\
 & \leq2(\|h_{0}\|^{2}+K^{2}T)e^{2K^{2}T}<+\infty,
\end{align*}
and so $\sup_{s\in[0,T]}\|h_{s}^{(0)}\|<\infty$. Similarly, for $s\in[0,T]$,
\[
h_{s}^{(1)}=\int_{0}^{s}\frac{\partial f_{u}}{\partial h}(h_{u}^{(0)})h_{u}^{(1)}\dd u+\int_{0}^{s}\sigma_{u}(h_{u}^{(0)})\dd B_{u}.
\]
Therefore, for $p\geq2$ (since $(x+y)^{p}\leq2^{p-1}(x^{p}+y^{p})$ by
Jensen's inequality):
\begin{align*}
\|h_{s}^{(1)}\|^{p} & \leq2^{p-1}\int_{0}^{s}\left\lVert \frac{\partial f_{u}}{\partial h}(h_{u}^{(0)})\right\rVert ^{p}\| h_{u}^{(1)}\| ^{p}\dd u +2^{p-1}\left\lVert \int_{0}^{s}\sigma_{u}(h_{u}^{(0)})\dd B_{u}\right\rVert ^{p}.
\end{align*}
Because the It\^o integral is a continuous martingale, the Burkholder-Davis-Gundy
inequality (see Theorem 3.28 in \cite{karatzas1998brownian}) implies that for positive constants
$C_{p}$ depending only on $p$ (but not necessarily the same in each
appearance),
\begin{align*}
\mathbb{E}\sup_{s\in[0,T]}\|h_{s}^{(1)}\|^{p} & \leq C_{p}\int_{0}^{s}\mathbb{E}\sup_{s\in[0,u]}\|h_{s}^{(1)}\|^{p}\dd u+C_{p}\left(\int_{0}^{t}\|\sigma_{u}(h_{u}^{(0)})\|^{2}\dd u\right)^{p/2}
\end{align*}
An application of Gronwall's inequality yields
\[
\mathbb{E}\sup_{s\in[0,T]}\|h_{s}^{(1)}\|^{p}\leq C_{p}\left(\int_{0}^{T}\|\sigma_{u}(h_{u}^{(0)})\|^{2}\dd u\right)^{p/2}e^{C_{p}T}.
\]
Therefore, $\mathbb{E}\sup_{s\in[0,T]}\|h_{s}^{(1)}\|^{p}<\infty$
for all $p\geq2$. The $p\in(0,2)$ case follows from H\"{o}lder's inequality.
Repeating this same approach for $h_{s}^{(2)}$ completes the proof.
\end{proof}

The second of our two critical lemmas provides a pathwise expansion
of $h_{t}$ about $\epsilon$ in the vein of \cite{blagoveshchenskii1961some}. Doing so characterizes the response of the NRNN hidden states to small noise perturbations at the sample path level. It can be seen as a strengthening of Theorem 2.2 in  \cite{freidlin1998random} for our time-inhomogeneous SDEs.

\begin{lem}\label{lem_expansion}
For a fixed $\epsilon_{0}>0$, and any $0<\epsilon\leq\epsilon_{0}$,
with probability one,
\[
h_{t}=h_{t}^{(0)}+\epsilon h_{t}^{(1)}+\epsilon^{2}h_{t}^{(2)}+\epsilon^{3}R_{3}^{\epsilon}(t),
\]
where for any $p > 0$, \begin{equation}\label{sense2}\sup_{\epsilon\in(0,\epsilon_{0})}\mathbb{E}\sup_{t\in[0,T]}\|R_{3}^{\epsilon}(t)\|^{p}<\infty.\end{equation}\
\end{lem}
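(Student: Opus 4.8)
The plan is to establish the pathwise expansion $h_t = h_t^{(0)} + \epsilon h_t^{(1)} + \epsilon^2 h_t^{(2)} + \epsilon^3 R_3^\epsilon(t)$ by \emph{defining} the remainder through the equation itself, and then bounding it uniformly. Concretely, I would set $R_3^\epsilon(t) := \epsilon^{-3}(h_t - h_t^{(0)} - \epsilon h_t^{(1)} - \epsilon^2 h_t^{(2)})$ and derive the SDE that $R_3^\epsilon$ satisfies by subtracting the defining IVPs (\ref{hie1_ct})--(\ref{hie3_ct}) from the SDE for $h_t$. The key maneuver is a Taylor expansion of the coefficients $f_t$ and $\sigma_t$ about $h_t^{(0)}$: writing $f_t(h_t) = f_t(h_t^{(0)}) + \frac{\partial f_t}{\partial h}(h_t^{(0)})(h_t - h_t^{(0)}) + \tfrac12(h_t-h_t^{(0)})^\top \nabla^2 f_t(\cdots)(h_t-h_t^{(0)})$ with an intermediate-point remainder (and similarly for $\sigma_t$), and substituting the known expansions of the lower-order terms, the quantity $R_3^\epsilon$ is seen to satisfy a linear-in-$R_3^\epsilon$ SDE whose drift and diffusion coefficients have an $R_3^\epsilon$-independent forcing term built out of $h^{(0)}, h^{(1)}, h^{(2)}$ together with the third-order Taylor remainders.

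Once this SDE is in hand, the estimate (\ref{sense2}) follows from a standard moment-bound argument. First I would apply the elementary inequality $(x+y)^p \leq 2^{p-1}(x^p + y^p)$ to split the drift and diffusion contributions, then control the diffusion term via the Burkholder-Davis-Gundy inequality exactly as in the proof of Lemma \ref{lem_bound}, and finally close the loop with Gronwall's inequality to obtain $\mathbb{E}\sup_{t\in[0,T]} \|R_3^\epsilon(t)\|^p \leq C e^{CT}$ with a constant $C$ that does not depend on $\epsilon \in (0,\epsilon_0)$. This uniformity in $\epsilon$ is the whole point of (\ref{sense2}) and is what licenses the informal Taylor reasoning preceding the lemma. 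The moment bounds of Lemma \ref{lem_bound} on $h^{(0)}, h^{(1)}, h^{(2)}$ feed directly into bounding the forcing term, and the Lipschitz continuity of the third-order derivatives from Assumption \ref{ass:Smooth} (equivalently Assumption \ref{ass_add}) guarantees the Taylor remainders are controlled by the already-bounded increments. The $p \in (0,2)$ case follows from the $p \geq 2$ case by H\"older's inequality.

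The main obstacle is handling the $\epsilon$-dependence of the Taylor remainder terms uniformly. The intermediate points in the Lagrange form of the remainder depend on both $\epsilon$ and $\omega$, so I must argue that the third-order Taylor coefficients, evaluated at these points, are bounded uniformly in $\epsilon \in (0,\epsilon_0)$; this is where the uniform Lipschitz bounds on the third-order partials (which imply the relevant derivatives grow at most linearly) together with the uniform-in-$\epsilon$ moment bounds on $h_t$ itself become essential. A cleaner route, following \cite{blagoveshchenskii1961some} and \cite{freidlin1998random}, is to avoid the Lagrange remainder altogether and instead write the remainder in integral (Hadamard) form, so that the coefficients appear integrated against $R_3^\epsilon$ and the lower-order terms multiplicatively; this keeps the resulting SDE genuinely linear in $R_3^\epsilon$ and makes the Gronwall step immediate. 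I would then need to verify that the forcing term has finite $p$th moments uniformly in $\epsilon$, which reduces to combining Lemma \ref{lem_bound} with the polynomial growth of the derivatives of $f$ and $\sigma$ guaranteed by the assumptions. Granting this, the expansion and the uniform bound (\ref{sense2}) follow.
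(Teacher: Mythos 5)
Your overall skeleton matches the paper's proof: you define $R_{3}^{\epsilon}(t):=\epsilon^{-3}(h_{t}-h_{t}^{(0)}-\epsilon h_{t}^{(1)}-\epsilon^{2}h_{t}^{(2)})$, subtract the defining equations, and close with the Burkholder--Davis--Gundy inequality, the moment bounds of Lemma \ref{lem_bound}, Gronwall's inequality, and H\"older's inequality for $p\in(0,2)$. However, your central step, as described, fails. If you Taylor expand $f_{t}(h_{t})$ about $h_{t}^{(0)}$ with a second-order remainder (Lagrange or Hadamard form), that remainder is evaluated along the segment between $h_{t}^{(0)}$ and $h_{t}$, and since $h_{t}-h_{t}^{(0)}=\epsilon h_{t}^{(1)}+\epsilon^{2}h_{t}^{(2)}+\epsilon^{3}R_{3}^{\epsilon}(t)$, it contains $R_{3}^{\epsilon}$ itself. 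So the forcing is \emph{not} $R_{3}^{\epsilon}$-independent, contrary to your claim: after dividing by $\epsilon^{3}$, the equation for $R_{3}^{\epsilon}$ acquires cross terms of the form $\epsilon\,\nabla^{2}f\,[h_{t}^{(1)},R_{3}^{\epsilon}(t)]$, whose coefficient $\epsilon\|h_{t}^{(1)}\|$ is random and unbounded, and quadratic terms of the form $\epsilon^{3}(R_{3}^{\epsilon})^{\top}\nabla^{2}f\,R_{3}^{\epsilon}$. Neither is compatible with the ``standard'' Gronwall closure in the norm $\mathbb{E}\sup_{t}\|\cdot\|^{p}$: the cross term leads to integrands like $\mathbb{E}[\|h_{s}^{(1)}\|^{p}\|R_{3}^{\epsilon}(s)\|^{p}]$, which do not factor (Cauchy--Schwarz or H\"older drags in the $2p$-th moment of $R_{3}^{\epsilon}$, so the recursion never closes), and the quadratic term destroys linearity outright. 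Your Hadamard-form fallback removes only the intermediate-point issue, which was never the real obstruction; the same two defects persist.

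The missing idea is the specific splitting the paper uses, which keeps every $R_{3}^{\epsilon}$-dependent term with a \emph{deterministic constant} coefficient and every $R_{3}^{\epsilon}$-free term of size $O(\epsilon^{3})$ in $L^{p}$. Writing $h_{s}^{\epsilon}:=h_{s}^{(0)}+\epsilon h_{s}^{(1)}+\epsilon^{2}h_{s}^{(2)}$, decompose the drift discrepancy as
\begin{equation*}
\bigl[f_{s}(h_{s})-f_{s}(h_{s}^{\epsilon})\bigr]
+\Bigl[f_{s}(h_{s}^{\epsilon})-f_{s}(h_{s}^{(0)})-\epsilon\tfrac{\partial f_{s}}{\partial h}(h_{s}^{(0)})h_{s}^{(1)}-\epsilon^{2}\tfrac{\partial f_{s}}{\partial h}(h_{s}^{(0)})h_{s}^{(2)}-\epsilon^{2}\Phi_{s}^{(1)}(h_{s}^{(0)},h_{s}^{(1)})\Bigr],
\end{equation*}
and analogously for the diffusion, comparing $\sigma_{s}(h_{s})$ with $\sigma_{s}(h_{s}^{(0)}+\epsilon h_{s}^{(1)})$. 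The first bracket is controlled by global Lipschitz continuity of $f$ alone: it is bounded by $L_{f}\epsilon^{3}\|R_{3}^{\epsilon}(s)\|$, a genuinely linear term with constant coefficient, requiring no second derivatives at all. The second bracket contains no $R_{3}^{\epsilon}$ whatsoever: it is a Taylor expansion in the scalar parameter $\epsilon$ along the truncated curve $h_{s}^{\epsilon}$, and Lipschitz continuity of the derivatives of $f$ together with Lemma \ref{lem_bound} bounds its $p$-th moments by $K\epsilon^{3p}$ uniformly in $\epsilon\leq\epsilon_{0}$ (same for the diffusion pieces, after BDG). With this decomposition, Gronwall's inequality closes immediately and yields $\mathbb{E}\sup_{t\in[0,T]}\|R_{3}^{\epsilon}(t)\|^{p}\leq Ke^{KT}$ uniformly in $\epsilon$, which is \eqref{sense2}. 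In short: replace ``expand $f(h_{t})$ about $h_{t}^{(0)}$'' with ``compare $h_{t}$ to $h_{t}^{\epsilon}$ by Lipschitz continuity, then expand $f(h_{t}^{\epsilon})$,'' and your argument goes through.
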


\begin{proof}
It suffices to show that $\sup_{\epsilon\in(0,\epsilon_{0})}\mathbb{E}\sup_{t\in[0,T]}\|R_{3}^{\epsilon}(t)\|^{p}< \infty$
for $p\geq2$ --- the $p\in(0,2)$ case follows from H\"{o}lder's inequality.
In the sequel, we shall let $K$ denote a finite number (not necessarily
the same in each appearance) depending only on $f,\sigma,T,\epsilon_{0}$,
and $p$, and therefore independent of $t,\epsilon$. 

For $\epsilon>0$, let $h_{t}^{\epsilon}=h_{t}^{(0)}+\epsilon h_{t}^{(1)}+\epsilon^{2}h_{t}^{(2)}$
and $R_{3}(t)=\epsilon^{-3}(h_{t}-h_{t}^{\epsilon})$, where $h_{t},h_{t}^{(1)},h_{t}^{(2)}$
are coupled together through the same Brownian motion. Then
\begin{align*}
\epsilon^{3}R_{3}(t) & =\int_{0}^{t}\bigg(f_{s}(h_{s})-f_{s}(h_{s}^{(0)})-\epsilon\frac{\partial f_{s}}{\partial h}(h_{s}^{(0)})h_{s}^{(1)} -\epsilon^{2}\frac{\partial f_{s}}{\partial h}(h_{s}^{(0)})h_{s}^{(2)}-\epsilon^{2}\Phi_{s}^{(1)}(h_{s}^{(0)},h_{s}^{(1)})\bigg)\dd s\\
 &\quad  \ \  +\epsilon\int_{0}^{t}\sigma_{s}(h_{s})-\sigma_{s}(h_{s}^{(0)})-\epsilon\Phi_{s}^{(2)}(h_{s}^{(0)},h_{s}^{(1)})\dd B_{s}.
\end{align*}
To simplify, we decompose $\epsilon^{3}R_{3}(t)$ into the sum of
four random variables $\theta_{i}(t)$, $i=1,\dots,4$, given by
\begin{align*}
\theta_{1}(t) & =\int_{0}^{t}\left[f_{s}(h_{s})-f_{s}(h_{s}^{\epsilon})\right]\dd s\\
\theta_{2}(t) & =\int_{0}^{t}\bigg[f_{s}(h_{s}^{\epsilon})-f_{s}(h_{s}^{(0)})-\epsilon\frac{\partial f_{s}}{\partial h}(h_{s}^{(0)})h_{s}^{(1)}-\epsilon^{2}\frac{\partial f_{s}}{\partial h}(h_{s}^{(0)})h_{s}^{(2)}-\epsilon^{2}\Phi_{s}^{(1)}(h_{s}^{(0)},h_{s}^{(1)})\bigg]\dd s\\
\theta_{3}(t) & =\epsilon\int_{0}^{t}\left[\sigma_{s}(h_{s})-\sigma_{s}(h_{s}^{(0)}+\epsilon h_{s}^{(1)})\right]\dd B_{s}\\
\theta_{4}(t) & =\epsilon\int_{0}^{t}\left[\sigma_{s}(h_{s}^{(0)}+\epsilon h_{s}^{(1)})-\sigma_{s}(h_{s}^{(0)})-\epsilon\Phi_{s}^{(2)}(h_{s}^{(0)},h_{s}^{(1)})\right]\dd B_{s}.
\end{align*}
Beginning with the more straightforward terms $\theta_{1}(t)$, $\theta_{3}(t)$,
by Lipschitz continuity of $f$,
\[
\left\lVert f_{s}(h_{s})-f_{s}(h_{s}^{\epsilon})\right\rVert \leq L_{f}\epsilon^{3}\left\lVert R_{3}(s)\right\rVert ,
\]
and so
\[
\mathbb{E}\sup_{s\in[0,t]}\left\lVert \theta_{1}(s)\right\rVert ^{p}\leq K\epsilon^{3p}\int_{0}^{t}\mathbb{E}\sup_{s\in[0,u]}\left\lVert R_{3}(s)\right\rVert \dd u.
\]
In the same way, $\| \sigma_{s}(h_{s})-\sigma_{s}(h_{s}^{(0)}+\epsilon h_{s}^{(1)})\| \leq L_{\sigma}\epsilon^{2}\| h_{s}^{(2)}+\epsilon R_{3}(s)\| $.
Recall that $(\int_{0}^{t}g(s)\dd s)^{p}\leq t^{p-1}\int_{0}^{t}g(s)^{p}\dd s$
by Jensen's inequality. Now, $\theta_{3}(s)$ is a continuous martingale,
and hence, the Burkholder-Davis-Gundy inequality (see Theorem 3.28 in \cite{karatzas1998brownian}) implies that for
some constant $C_{p}>0$ depending only on $p$,
\begin{align*}
\mathbb{E}\sup_{s\in[0,t]}\|\theta_{3}(s)\|^{p} & \leq\epsilon^{p}C_{p}\left(\int_{0}^{t}\mathbb{E}\left\lVert \sigma_{s}(h_{s})-\sigma_{s}(h_{s}^{(0)}+\epsilon h_{s}^{(1)})\right\rVert ^{2}\dd s\right)^{p/2}\\
 & \leq C_{p}L_{\sigma}^{p}\epsilon^{3p}\left(\int_{0}^{t}\mathbb{E}\| h_{s}^{(2)}+\epsilon R_{3}(s)\| ^{2}\dd s\right)^{p/2}\\
 & \leq C_{p}L_{\sigma}^{p}\epsilon^{3p}T^{p/2-1}\int_{0}^{t}\mathbb{E}\| h_{s}^{(2)}+\epsilon R_{3}(s)\| ^{p}\dd s\\
 & \leq C_{p}L_{\sigma}^{p}\epsilon^{3p}2^{p-1}T^{p/2-1}\left(\int_{0}^{t}\mathbb{E}\| h_{s}^{(2)}\|^{p}\dd s+\epsilon^{p}\int_{0}^{t}\mathbb{E}\| R_{3}(s)\| ^{p}\dd s\right).
\end{align*}
From Lemma \ref{lem_bound}, it follows that
\[
\mathbb{E}\sup_{s\in[0,t]}\|\theta_{3}(s)\|^{p}\leq K\epsilon^{3p}\left(1+\epsilon^{p}\int_{0}^{t}\mathbb{E}\sup_{s\in[0,u]}\| R_{3}(s)\| ^{p}\dd u\right).
\]
Treating the $\theta_{2}$ term next, for each $s\in[0,t]$, by Taylor's
theorem, there exists some $\epsilon_{s}\in(0,\epsilon)$ such that
\begin{align*}
&f_{s}(h_{s}^{\epsilon})-f_{s}(h_{0})  -\epsilon\frac{\partial f_{s}}{\partial h}(h_{0})h_{1} =\epsilon^{2}\frac{\partial f_{s}}{\partial h}(h_{s}^{\epsilon_{s}})h_{2}+\epsilon^{2}\Phi_{s}^{(1)}(h_{s}^{\epsilon_{s}},h_{1}).
\end{align*}
Therefore, by Lipschitz continuity of the derivatives of $f$, 
\begin{align*}
\theta_{2}(t) & =\epsilon^{2}\int_{0}^{t}\bigg(\frac{\partial f_{s}}{\partial h}(h_{s}^{\epsilon_{s}})h_{2}+\Phi_{s}^{(1)}(h_{s}^{\epsilon_{s}},h_{1})-\frac{\partial f_{s}}{\partial h}(h_{0})h_{2}-\Phi_{s}^{(1)}(h_{0},h_{1})\bigg)\dd s\\
 & \leq K\epsilon^{2}\int_{0}^{t}\|h_{s}^{\epsilon_{s}}-h_{0}\|\dd s\\
 & \leq K\epsilon^{3}\int_{0}^{t}\|h_{s}^{(1)}\|+\epsilon\|h_{s}^{(2)}\|\dd s.
\end{align*}
From Lemma \ref{lem_bound}, it follows that
\[
\mathbb{E}\sup_{s\in[0,T]}\|\theta_{2}(s)\|^{p}\leq K\epsilon^{3p}.
\]
Similarly, by Taylor's theorem, there exists $\epsilon_{s}\in(0,\epsilon)$
such that
\[
\sigma_{s}(h_{s}^{(0)}+\epsilon h_{s}^{(1)})-\sigma_{s}(h_{s}^{(0)})=\epsilon\Phi_{s}^{(2)}(h_{s}^{(0)}+\epsilon_{s}h_{s}^{(1)},h_{s}^{(1)}),
\]
and so for $p\geq2$, by the Burkholder-Davis-Gundy inequality and
Lipschitz continuity of the derivatives of $\sigma$,
\begin{align*}
\mathbb{E}\sup_{s\in[0,t]}\|\theta_{4}(s)\|^{p} & \leq C_{p}\epsilon^{2p}\left(\int_{0}^{t}\mathbb{E}\left\lVert \Phi_{s}^{(2)}(h_{s}^{(0)}+\epsilon_{s}h_{s}^{(1)},h_{s}^{(1)})-\Phi_{s}^{(2)}(h_{s}^{(0)},h_{s}^{(1)})\right\rVert ^{2}\dd s\right)^{p/2}\\
 & \leq KC_{p}\epsilon^{3p}\left(\int_{0}^{t}\mathbb{E}\|h_{s}^{(1)}\|^{2}\dd s\right)^{p/2}\\
 & \leq KC_{p}T^{p/2}\epsilon^{3p}\mathbb{E}\sup_{s\in[0,T]}\|h_{s}^{(1)}\|^{p}\leq K\epsilon^{3p}.
\end{align*}
Combining estimates for $\theta_{1},\theta_{2},\theta_{3},\theta_{4}$,
\begin{align*}
\mathbb{E}\sup_{s\in[0,t]}\|R_{3}(s)\|^{p} & =4^{p-1}\epsilon^{-3p}\bigg(\mathbb{E}\sup_{s\in[0,t]}\|\theta_{1}(s)\|^{p}+\mathbb{E}\sup_{s\in[0,t]}\|\theta_{2}(s)\|^{p} + \mathbb{E}\sup_{s\in[0,t]}\|\theta_{3}(s)\|^{p}+\mathbb{E}\sup_{s\in[0,T]}\|\theta_{4}(s)\|^{p}\bigg)\\
 & \leq K\left(1+\int_{0}^{t}\mathbb{E}\sup_{s\in[0,u]}\left\lVert R_{3}(s)\right\rVert ^{p}\dd u\right),
\end{align*}
and so by Gronwall's inequality, $\mathbb{E}\sup_{s\in[0,t]}\|R_{3}(s)\|^{p}\leq Ke^{Kt}$.
Since $K$ is independent of $t\leq T$ and $\epsilon\leq\epsilon_{0}$,
it follows that
\[
\sup_{\epsilon\in(0,\epsilon_{0})}\mathbb{E}\sup_{s\in[0,t]}\|R_{3}(s)\|^{p}\leq Ke^{Kt}<+\infty,
\]
and the result follows. 
\end{proof}

We remark that perturbative techniques such as the one used to obtain Theorem \ref{lem_expansion} are standard in the theory of stochastic flows.

Theorem \ref{thm_sdeexpansion} now follows in a straightforward fashion from Lemma \ref{lem_expansion} by taking a second-order Taylor expansion of $\ell(h_t^{(0)}+\epsilon h_t^{(1)} + \epsilon^2 h_t^{(2)} + \mathcal{O}(\epsilon^3))$ about $\epsilon = 0$. 

We are now in a position to prove Theorem \ref{prop_explreg_CT} using Theorem \ref{thm_sdeexpansion}.

\begin{proof}[Proof of Theorem \ref{prop_explreg_CT}]
From Theorem \ref{thm_sdeexpansion}, we have, upon taking expectation:
\begin{equation}
    \mathbb{E} \ell(h_t) = \ell(h_t^{(0)}) + \epsilon (\nabla_{h^{(0)}} \ell)^T \mathbb{E} h^{(1)}_t +   \epsilon^2 \left( (\nabla_{h^{(0)}} \ell)^T \mathbb{E} h_t^{(2)} +   \frac{1}{2} \mathbb{E} (h_t^{(1)})^T  (H_{h^{(0)}} \ell) h_t^{(1)} \right) + \mathcal{O}(\epsilon^3),
\end{equation}
for $t \in [0,T]$, as $\epsilon \to 0$, where $H_{h^{(0)}}$ denotes Hessian operator and the $h_t^{(i)}$ satisfy Eq. \eqref{hie1_ct}-\eqref{hie3_ct}.

Since $\nabla f_t$ and its derivative are bounded and are thus Lipschitz continuous, by  Picard's theorem the IVP has a unique solution. Moreover, it follows from our assumptions that the solution to the IVP is  square-integrable (i.e., $\int_0^t \|\Psi(t,s)\|_F^2 ds < \infty$ for any $t \in [0,T]$). Therefore, the solution $h_t^{(1)}$ to Eq. \eqref{hie2_ct} can be uniquely represented as the following It\^o integral:
\begin{equation} \label{eq_aux}
    h_t^{(1)} = \int_0^t \Psi(t,s) \sigma(h_s^{(0)},s) dB_s,
\end{equation}
where $\Psi(t,s)$ is the (deterministic) fundamental matrix solving the IVP \eqref{ivp}. We have $\mathbb{E} h^{(1)}_t = 0$ and
\begin{equation}
    \mathbb{E} \|h_t^{(1)}\|^2 = \int_0^t \|\Psi(t,s) \sigma(h_s^{(0)},s) \|_F^2 ds < \infty. 
\end{equation}

Similar argument together with Assumption \ref{ass_add} shows that the solution $h_t^{(2)}$ to Eq. \eqref{hie3_ct} admits the following unique integral representation, with the $i$th component:
\begin{align}
    h_t^{(2)i} &= \frac{1}{2} \int_0^t  \Psi^{ij}(t,s) [h_s^{(1)}]^l \frac{\partial^2 b^j}{\partial[h_s^{(0)}]^l \partial [h_s^{(0)}]^k} [h_s^{(1)}]^k ds +  \int_0^t \Psi^{ij}(t,s) \frac{\partial \sigma^{jk}}{\partial [h_s^{(0)}]^l} [h_s^{(1)}]^l   dB_s^k,
\end{align}
where the last integral above is a uniquely defined It\^o integral.

Plugging  Eq. \eqref{eq_aux} into the above expression and then taking expectation, we have:
\begin{align}
    \mathbb{E} h_t^{(2)i} &= \frac{1}{2} \mathbb{E} \int_0^t  ds \Psi^{ij}(t,s) \frac{\partial^2 b^j}{\partial[h_s^{(0)}]^l \partial [h_s^{(0)}]^k} \int_0^s dB^{l_2}_{u_1} \int_0^s dB_{u_2}^{k_2} \Psi^{l l_1}(s,u_1) \sigma^{l_1 l_2} \sigma^{k_1 k_2} \Psi^{k k_1}(s,u_2) \nonumber \\  
    &\quad \ \  + \frac{1}{2} \mathbb{E}  \int_0^t  dB_s^k \Psi^{ij}(t,s)  \frac{\partial \sigma^{jk}}{\partial [h_s^{(0)}]^l} \int_0^t dB_u^{l_2}   \Psi^{l l_1}(s,u) \sigma^{l_1 l_2}(h_u^{(0)},u), \label{eq_50}
\end{align}
where we have performed change of variable to arrive at the last double integral above. 

Using the semigroup property of  $\Psi$, we have $\Psi(t,s) = \Psi(t,0)\Psi^{-1}(s,0)$ for any $s \leq t$ (and so $\Psi^{ij}(t,s) = \Psi^{ij_1}(t,0) (\Psi^{-1})^{j_1 j}(s,0)$ and 
$\Psi^{ll_1}(s,u) = \Psi^{ll_2}(s,0) (\Psi^{-1})^{l_2 l_1}(u,0)$ etc.). Using this property in \eqref{eq_50} and then evaluating the resulting expression using properties of moments of stochastic integrals (applying Eq. (5.7) and Proposition 4.16 in \cite{gall2016brownian} -- note that It\^o isometry follows from Eq. (5.7) there), we obtain $(\nabla_{h^{(0)}} \ell)^T \mathbb{E} h_t^{(2)} = Q(h^{(0)})$, where $Q$ satisfies Eq. \eqref{Q_term}. 

Similarly, plugging Eq. \eqref{eq_aux} into $\mathbb{E} h_t^{(1)T}(H_{h^{(0)}}\ell)h_t^{(1)}$, and then proceeding as above and applying the cyclic property of trace, give $\frac{1}{2} \mathbb{E} (h_t^{(1)})^T  (H_{h^{(0)}} \ell) h_t^{(1)} = R(h^{(0)})$, where $R$ satisfies Eq. \eqref{R_term}. The proof is done.
\end{proof}

\subsection{Discrete-Time Setting: Proof of Theorem 1 in the Main Paper}
\label{app_subset_impl_reg_disc_time}

The goal in this subsection is to prove Theorem 1 in the main paper,  the discrete-time analogue of Theorem \ref{prop_explreg_CT}. We recall the theorem in the following.

\begin{thm}[Explicit regularization induced by noise injection for discrete-time NRNNs -- Theorem 1 in the main paper] 
\label{app_thm_exp_reg_discrete}
Under Assumption A in the main paper, 
\begin{align} \label{exp_reg}
    \mathbb{E} \ell(h^{\delta}_M) &= \ell(\bar{h}^{\delta}_M) + \frac{\epsilon^2}2 [\hat{Q}(\bar{h}^{\delta}) +\hat{R}(\bar{h}^{\delta})]  + \mathcal{O}(\epsilon^3),
\end{align}
as $\epsilon \to 0$, where the terms $\hat{Q}$ and $\hat{R}$  are given by
\begin{align} \label{P_term_discrete}
    \hat{Q}(\bar{h}^{\delta}) &= (\nabla l(\bar{h}^{\delta}_M))^T  \sum_{k=1}^M \delta_{k-1} \hat{\Phi}_{M-1,k}  \sum_{m=1}^{M-1} \delta_{m-1} \vecc{v}_{m}, \\
    \hat{R}(\bar{h}^{\delta}) &= \sum_{m=1}^{M} \delta_{m-1} \tr( \sigma_{m-1}^T \hat{\Phi}^T_{M-1,m} H_{\bar{h}^{\delta}}l \ \hat{\Phi}_{M-1,m} \sigma_{m-1}),
    \label{R_term_discrete}
\end{align}
with $\vecc{v}_m$ a vector with the $p$th component $(p=1,\dots, d_h)$:
\[
    [v_m]^p = \tr(\sigma_{m-1}^T \hat{\Phi}_{M-2,m}^T H_{\bar{h}^\delta} [f_M]^{p} 
    \hat{\Phi}_{M-2,m}  \sigma_{m-1}).
\]

Moreover,
\begin{equation}
|\hat{Q}(\bar{h}^{\delta})| \leq C_Q \Delta^2, \ \ |\hat{R}(\bar{h}^{\delta})| \leq C_R \Delta,    
\end{equation}
for $C_Q, C_R >0$ independent of $\Delta$.
\end{thm}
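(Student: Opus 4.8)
The plan is to mirror, step for step, the continuous-time argument behind Theorem \ref{prop_explreg_CT}, with the fundamental matrix $\Psi$ replaced by the Jacobian products $\hat{\Phi}_{m,k}$ and the It\^o integrals replaced by discrete noise sums. Rescaling the diffusion by $\epsilon$, I would posit the pathwise ansatz $h^{\delta}_m = \bar{h}^{\delta}_m + \epsilon h^{(1)}_m + \epsilon^2 h^{(2)}_m + \mathcal{O}(\epsilon^3)$, substitute it into the E-M recursion \eqref{app_e-m}, and Taylor-expand $f$ and $\sigma$ about $\bar{h}^{\delta}_m$ (legitimate since Assumption \ref{ass:Smooth} supplies three Lipschitz derivatives). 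Collecting powers of $\epsilon$ yields a discrete hierarchy: the $\epsilon^0$ equation reproduces the deterministic recursion \eqref{eq:DeterRNNDiscrete}; the $\epsilon^1$ equation is the linear recursion
\begin{equation*}
h^{(1)}_{m+1} = \hat{J}_m h^{(1)}_m + \sqrt{\delta_m}\,\sigma_m \xi_m, \qquad h^{(1)}_0 = 0;
\end{equation*}
and the $\epsilon^2$ equation reads $h^{(2)}_{m+1} = \hat{J}_m h^{(2)}_m + \delta_m \Phi^{(1)}_m + \sqrt{\delta_m}\,\Phi^{(2)}_m \xi_m$, where $\Phi^{(1)}_m$ contracts the Hessian of $f$ against $h^{(1)}_m$ and $\Phi^{(2)}_m = \frac{\partial\sigma}{\partial h}(\bar{h}^{\delta}_m)h^{(1)}_m$.

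Second, I would solve these linear recursions by discrete variation of constants, obtaining $h^{(1)}_M = \sum_{m=1}^M \sqrt{\delta_{m-1}}\,\hat{\Phi}_{M-1,m}\sigma_{m-1}\xi_{m-1}$ and an analogous two-term expression for $h^{(2)}_M$. Taking a second-order Taylor expansion of $\ell$ about $\bar{h}^{\delta}_M$ and then expectations, the $\mathcal{O}(\epsilon)$ term vanishes since $\mathbb{E}h^{(1)}_M = 0$. The quadratic term $\tfrac12\mathbb{E}[(h^{(1)}_M)^T H_{\bar{h}^{\delta}}\ell\, h^{(1)}_M]$ collapses, by independence of the $\xi_m$ and $\mathbb{E}[\xi_m\xi_m^T]=I$, to the single diagonal sum $\tfrac12\hat{R}(\bar{h}^{\delta})$ of \eqref{R_term_discrete}. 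The term $\nabla\ell\cdot\mathbb{E}h^{(2)}_M$ produces $\tfrac12\hat{Q}(\bar{h}^{\delta})$: the martingale-difference contribution $\Phi^{(2)}_m\xi_m$ averages to zero (as $\Phi^{(2)}_m$ depends only on $\xi_0,\dots,\xi_{m-1}$), while the drift contribution $\delta_m\Phi^{(1)}_m$ pairs the Hessian of $f$ against the covariance $\mathbb{E}[h^{(1)}(h^{(1)})^T]$, itself a $\hat{\Phi}\sigma\sigma^T\hat{\Phi}^T$ sum; rearranging the resulting double sum delivers the $\vecc{v}_m$-form of \eqref{P_term_discrete}.

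Third --- and this is the main obstacle --- I would make the remainder rigorous by establishing the discrete analogue of Lemma \ref{lem_expansion}: a uniform-in-$\epsilon$ bound $\sup_{0<\epsilon\le\epsilon_0}\mathbb{E}\max_{m\le M}\|R^{\epsilon}_3(m)\|^p < \infty$ for the defect $R^{\epsilon}_3(m) = \epsilon^{-3}(h^{\delta}_m - \bar{h}^{\delta}_m - \epsilon h^{(1)}_m - \epsilon^2 h^{(2)}_m)$. In discrete time the It\^o and Burkholder--Davis--Gundy machinery of the continuous proof is replaced by elementary moment estimates on the Gaussian increments $\xi_m$ together with a \emph{discrete} Gronwall inequality applied to the recursion satisfied by $R^{\epsilon}_3$; this first requires discrete $L^p$ bounds on $h^{(1)}_m$ and $h^{(2)}_m$ (the analogue of Lemma \ref{lem_bound}). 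The bookkeeping is more delicate than in the continuous case precisely because the noise enters multiplicatively through $\sigma(h^{\delta}_m,\hat{x}_m)$, so one must control moments of products of $\sigma$-evaluations with the increments, while the Taylor remainders of $f$ and $\sigma$ are handled by Lipschitz continuity of their derivatives under Assumption \ref{ass:Smooth}. Once this bound holds, interchanging expectation with the Taylor expansion of $\ell$ is justified and the $\mathcal{O}(\epsilon^3)$ claim follows.

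Finally, the size bounds follow by crude magnitude counting. Under Assumption \ref{ass:Smooth} the deterministic trajectory $\bar{h}^{\delta}_m$ stays bounded (linear growth together with finite $M$), so $\|\hat{\Phi}_{M-1,m}\|_2$, $\|\sigma_{m-1}\|_F$, $\|H_{\bar{h}^{\delta}}\ell\|$, $\|\nabla\ell\|$, and the Hessians of $f$ are all bounded by constants independent of $\Delta$. Since $\hat{R}$ is a single sum of $M$ terms each carrying one factor $\delta_{m-1}\le\Delta$, one gets $|\hat{R}|\le C_R\Delta$; since $\hat{Q}$ is a double sum carrying two such factors, $|\hat{Q}|\le C_Q\Delta^2$, with $C_Q,C_R$ depending on $M$ and the uniform bounds but not on $\Delta$. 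This gives the stated $\Delta$-scaling and, in particular, the negligibility of $\hat{Q}$ relative to $\hat{R}$ as $\Delta\to0$.
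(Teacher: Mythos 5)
Your proposal follows essentially the same route as the paper's proof: the same hierarchy of recursions in powers of $\epsilon$ (deterministic, first-order, second-order), the same discrete variation-of-constants solution $h^{(1)}_M = \sum_m \sqrt{\delta_{m-1}}\,\hat{\Phi}_{M-1,m}\sigma_{m-1}\xi_{m-1}$, the same Gaussian-moment computations identifying $\hat{R}$ from the quadratic term and $\hat{Q}$ from $\nabla\ell\cdot\mathbb{E}h^{(2)}_M$ (with the martingale-difference terms vanishing in expectation, which is exactly the paper's observation that $\hat{Q}$ loses its dependence on $\partial\sigma/\partial h$), and the same strategy of controlling the remainder by a discrete analogue of the continuous-time pathwise expansion lemma. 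If anything, you are more explicit than the paper on the two points it leaves implicit (the discrete Gronwall/moment argument for the $\mathcal{O}(\epsilon^3)$ remainder, and the accounting behind the $\Delta$-scaling of $\hat{Q}$ and $\hat{R}$), so the proposal is correct and matches the paper's argument.
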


To prove Theorem \ref{app_thm_exp_reg_discrete}, the key idea is to first obtain a discretized version of the loss function in Theorem \ref{thm_sdeexpansion}  by either discretizing the results in Theorem \ref{thm_sdeexpansion} or by proving directly from the discretized equations \eqref{app_e-m}. It then remains to  compute the expectation of this loss as functional of the  discrete-time process. The first part is straightforward while the second part involves some tedious recursive computations.

Let $0 \coloneqq t_0 < t_1 < \cdots < t_M \coloneqq T$ be a partition of the interval $[0,T]$ and let $\delta_m = t_{m+1} - t_m$ for each $m=0,1,\dots,M-1$.
For small parameter $\epsilon > 0$, the   E-M scheme is given by:
\begin{equation}\label{e-m-app}
    h^{\delta}_{m+1} = h^{\delta}_{m} + f(h^{\delta}_{m},\hat{x}_{m}) \delta_m + \epsilon \sigma(h^{\delta}_{m},\hat{x}_{m}) \sqrt{\delta_m} \xi_m,
\end{equation}
where $(\hat{x}_{m})_{m=0,\dots,M-1}$ is a given sequential data, each $\xi_m \sim \mathcal{N}(0,I)$ is an independent $r$-dimensional standard normal random vector, and $h^{\delta}_{0} = h_{0}$.

Consider the following hierarchy of recursive equations. For the sake of notation cleanliness, we replace the superscript $\delta$ by hat when denoting the $\delta$-dependent approximating solutions in the following.   

For $m=0,1,\dots,M-1$:
\begin{align}
\hat{h}_{m+1}^{(0)} &= \hat{h}_{m}^{(0)} + \delta_m f(\hat{h}_{m}^{(0)}, \hat{x}_{m}), \label{hie1_disc} \ \ \hat{h}_{0}^{(0)} = h_0, \\
\hat{h}^{(1)}_{m+1} &= \hat{J}_m \hat{h}_{m}^{(1)} + \sqrt{\delta_m} \sigma(\hat{h}_{m}^{(0)},\hat{x}_{m}) \xi_m, \  \ \hat{h}_{0}^{(1)} = 0, \label{hie2_disc} \\ 
\hat{h}^{(2)}_{m+1} &= \hat{J}_m \hat{h}_{m}^{(2)} + \sqrt{\delta_m} + \delta_m \Psi_1(\hat{h}_{m}^{(0)}, \hat{h}_{m}^{(1)}) +  \delta_m \Psi_2(\hat{h}_{m}^{(0)},\hat{h}_{m}^{(1)}) \xi_m, \  \ \hat{h}_{0}^{(2)} = 0, \label{hie3_disc}
\end{align}
where the
\begin{align}
\hat{J}_m &= I + \delta_m f'(\hat{h}_{m}^{(0)},\hat{x}_{m}) 
\end{align}
are the state-to-state Jacobians and
\begin{align}
\Psi_{1}(h_{0},h_{1}) & =\frac{1}{2}\sum_{i,j}\frac{\partial^{2}f_{m}}{\partial h^{i}\partial h^{j}}(h_{0})h_{1}^{i}h_{1}^{j}, \label{jac_1} \\
\Psi_2(h_{0},h_{1}) & =\sum_{i}\frac{\partial\sigma_{m}}{\partial h^{i}}(h_{0})h_{1}^{i}. \label{jac_2}
\end{align}

Note that the above equations can also be obtained by E-M discretization of Eq. \eqref{hie1_ct}-\eqref{hie2_ct}. 

The following theorem is a discrete-time analogue of Theorem \ref{thm_sdeexpansion}. Recall that the big $\mathcal{O}$ notation
is to be understood in the almost sure sense. 

\begin{thm}\label{thm_sdeexpansion_discrete}
Under the same assumption as before, for a scalar-valued loss function $\ell\in\mathcal{C}^{2}(\mathbb{R}^{d_{h}})$,
for $m=0,1,\dots,M-1$, we have
\begin{align}
\ell(\hat{h}_{m+1}) & =\ell(\hat{h}_{m}^{(0)})+\epsilon\nabla\ell(\hat{h}_{m}^{(0)})\cdot \hat{h}_{m}^{(1)} + \epsilon^{2}\left(\nabla\ell(\hat{h}_{m}^{(0)})\cdot \hat{h}_{m}^{(2)}+\frac{1}{2}(\hat{h}_{m}^{(1)})^{\top}\nabla^{2}\ell(\hat{h}_{m}^{(0)})(\hat{h}_{m}^{(1)})\right)+\mathcal{O}(\epsilon^{3}), \label{res_loss_disc}
\end{align}
as $\epsilon \to 0$, where the $\hat{h}^{(i)}_{m}$, $i=0,1,2$, satisfy Eq.\eqref{hie1_disc}-\eqref{hie3_disc}.
\end{thm}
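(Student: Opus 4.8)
The plan is to mirror the continuous-time argument behind Theorem \ref{thm_sdeexpansion}, replacing the It\^o-calculus machinery by its discrete recursive analogues. The backbone is a pathwise expansion
\[
\hat{h}_m = \hat{h}_m^{(0)} + \epsilon \hat{h}_m^{(1)} + \epsilon^2 \hat{h}_m^{(2)} + \epsilon^3 \hat{R}_3^\epsilon(m),
\]
valid uniformly over $m \in \{0,\dots,M\}$, together with a uniform $L^p$ bound on the remainder $\hat{R}_3^\epsilon(m)$. One checks first that matching powers of $\epsilon$ in the E-M scheme \eqref{e-m-app}, after Taylor-expanding $f$ and $\sigma$ about $\hat{h}_m^{(0)}$, reproduces exactly the recursions \eqref{hie1_disc}--\eqref{hie3_disc} (with $\Psi_1,\Psi_2$ as in \eqref{jac_1}--\eqref{jac_2}); once the expansion and its remainder bound are in hand, the claimed loss expansion \eqref{res_loss_disc} follows from a second-order Taylor expansion of $\ell$ about $\epsilon=0$ and collecting powers of $\epsilon$, just as in the continuous case.

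First I would establish the discrete analogue of Lemma \ref{lem_bound}: the deterministic bound $\max_m \|\hat{h}_m^{(0)}\|^p < \infty$ and the moment bounds $\mathbb{E} \max_m \|\hat{h}_m^{(i)}\|^p < \infty$ for $i=1,2$ and all $p>0$. For $\hat{h}^{(0)}$ this follows from the linear growth of $f$ and a \emph{discrete} Gronwall inequality applied to \eqref{hie1_disc}; since $\sum_m \delta_m = T$ is fixed, the telescoping product of the $(1+\delta_m L)$ factors stays bounded by $e^{LT}$. For the stochastic terms, the increments $\sqrt{\delta_m}\,\sigma(\hat{h}_m^{(0)},\hat{x}_m)\xi_m$ are conditionally centered martingale differences, so a discrete Burkholder-Davis-Gundy inequality controls $\mathbb{E}\max_m\|\cdot\|^p$ by the sum of conditional second moments; combined with the boundedness of the Jacobians $\hat{J}_m$ and a discrete Gronwall step this yields the bound for $\hat{h}^{(1)}$, and then, feeding this in, for $\hat{h}^{(2)}$.

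Next I would prove the discrete analogue of Lemma \ref{lem_expansion}. Substituting the expansion into \eqref{e-m-app} and subtracting \eqref{hie1_disc}--\eqref{hie3_disc}, the recursion for $\hat{R}_3^\epsilon(m)$ decomposes into four error contributions paralleling the terms $\theta_1,\dots,\theta_4$ of the continuous proof: a Lipschitz-in-$f$ remainder that feeds $\hat{R}_3^\epsilon$ back into itself, a Taylor remainder from the second-order expansion of $f$, a Lipschitz-in-$\sigma$ martingale remainder, and a Taylor remainder from the first-order expansion of $\sigma$. Each is controlled using Taylor's theorem together with the Lipschitz continuity of the partial derivatives of $f$ and $\sigma$ up to order three guaranteed by Assumption \ref{ass_add}, the moment bounds just established, and—for the two martingale terms—the discrete Burkholder-Davis-Gundy inequality. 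Summing these and applying discrete Gronwall yields $\sup_{\epsilon\in(0,\epsilon_0)}\mathbb{E}\max_m\|\hat{R}_3^\epsilon(m)\|^p < \infty$.

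The main obstacle will be exactly this uniform-in-$\epsilon$ (and in the mesh) control of $\hat{R}_3^\epsilon$: unlike the continuous case, where It\^o isometry and BDG apply cleanly to integrals, here one must track how errors propagate through the recursion via the products $\hat{\Phi}_{m,k} = \hat{J}_m\cdots\hat{J}_k$ of state-to-state Jacobians and verify the constants do not blow up as $M\to\infty$ (equivalently $\Delta\to 0$). The saving grace is again that $\sum_m\delta_m = T$ is fixed, so the discrete Gronwall constant remains bounded by $e^{KT}$ independently of the partition. With the pathwise expansion secured, the final step is routine: a second-order Taylor expansion of $\ell\in\mathcal{C}^2$ about $\hat{h}_m^{(0)}$, with the $\epsilon^3$ term absorbed into $\mathcal{O}(\epsilon^3)$ using the moment bounds, produces \eqref{res_loss_disc}.
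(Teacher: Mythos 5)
Your proposal is correct and follows exactly the route the paper takes: the paper's own proof of this theorem is a single sentence stating that it is ``analogous to the one for continuous-time case,'' i.e., one transplants Lemma~\ref{lem_bound} (moment bounds), Lemma~\ref{lem_expansion} (pathwise expansion with the four-term remainder decomposition and Gronwall/BDG control), and the final second-order Taylor expansion of $\ell$ to the discrete recursion \eqref{e-m-app}, which is precisely what you spell out. If anything, you supply more detail than the paper does, including the (not strictly needed for a fixed partition, but harmless) observation that the discrete Gronwall constants stay bounded uniformly in the mesh.
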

\begin{proof}
The proof is analogous to the one for continuous-time case, working with the discrete-time process \eqref{e-m-app}  instead of continuous-time process.
\end{proof}

We begin by recalling a remark from the main text.
\begin{rmk} \label{rmk_interesting}
Interestingly, Theorem \ref{app_thm_exp_reg_discrete} looks like discrete-time analogue of Theorem \ref{prop_explreg_CT} for CT-RNN, except that, unlike the term $Q$ there, the  term $\hat{Q}$ for the discrete-time case  has no explicit dependence on the {\it derivative (with respect to $h$) of the noise coefficient $\sigma$}. Therefore, a direct discretization of the result in Theorem \ref{prop_explreg_CT} would not give us the correct explicit regularizer for discrete-time NRNNs. This remark highlights the difference between learning in the practical discrete-time setting versus learning in the idealized continuous-time setting with NRNNs. This also means that we need to work out an independently crafted proof for the discrete-time case.
\end{rmk}

The proof of Theorem \ref{app_thm_exp_reg_discrete}  involves some  tedious, albeit technically straightforward,  computations. The key ingredients are the recursive relations \eqref{hie1_disc}-\eqref{hie3_disc} and the property of standard Gaussian random vectors that 
\begin{equation} \label{gaussian_property}
\mathbb{E} \xi_p^l \xi_q^j = e_{pq} e_{lj},
\end{equation}
where the $e_{pq}$ denote the Kronecker delta.

To organize our proof, we begin by introducing some notation and proving a lemma.

\noindent {\bf Notation.} For $m=1,\dots,M-1$, let us denote  $f'_{m} := f'(\hat{h}_{m}^{(0)},\hat{x}_{m})$, $\sigma_{m} := \sigma(\hat{h}_{m}^{(0)},\hat{x}_{m})$,
\begin{align}
      H_{lj} f_{m}^i  &:= \frac{\partial^2 f^i(\hat{h}_{m}^{(0)},\hat{x}_{m})}{\partial [\hat{h}_{m}^{(0)}]^l \partial [\hat{h}_{m}^{(0)} ]^j }, \\ 
      D_{l} \sigma^{ij}_{m} &:=  \frac{\partial \sigma^{ij}(\hat{h}_{m}^{(0)},\hat{x}_{m})}{\partial [\hat{h}_{m}^{(0)}]^l }, 
\end{align}
and
\begin{equation}
 \hat{\Phi}_{m,k} := J_m J_{m-1} \cdots J_k, \ \ \hat{\Phi}_{k,k+1} = I,
\end{equation}
for $k=1,\dots,m$. For computational convenience, we are using Einstein's summation notation for repeated indices in the following.

\begin{lem} \label{lem_stat_hhat1}
For $m=0,1,\dots,M$, $\mathbb{E} \hat{h}_{m}^{(1)} = 0$ and 
\begin{equation}\label{variance_hhat1}
    \mathbb{E} [\hat{h}_{m}^{(1)}]^l [\hat{h}_{m}^{(1)}]^j = \delta_{m-1} \sigma^{l l_1}_{m-1} \sigma^{j l_1}_{m-1} + \sum_{k=1}^{m-1}\delta_{k-1}  \hat{\Phi}_{m-1,k}^{l l_2} \hat{\Phi}_{m-1,k}^{j j_2} \sigma^{l_2 l_3}_{k-1} \sigma^{j_2 l_3}_{k-1}.  
\end{equation}
\end{lem}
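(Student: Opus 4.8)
The plan is to solve the linear recursion \eqref{hie2_disc} explicitly and then read off both moments. The crucial structural observation is that the zeroth-order process $\hat{h}^{(0)}_m$ defined by \eqref{hie1_disc} is purely deterministic; consequently the state-to-state Jacobians $\hat{J}_m = I + \delta_m f'_m$ and the diffusion matrices $\sigma_m$ are deterministic, and \eqref{hie2_disc} is an inhomogeneous linear recursion with deterministic coefficients driven by the independent standard Gaussian vectors $\xi_m$. Unrolling it from the initial condition $\hat{h}^{(1)}_0 = 0$, or equivalently proving by induction on $m$, I would establish the closed form
\begin{equation}
\hat{h}^{(1)}_m = \sum_{k=1}^{m} \sqrt{\delta_{k-1}}\, \hat{\Phi}_{m-1,k}\, \sigma_{k-1}\, \xi_{k-1},
\end{equation}
where the convention $\hat{\Phi}_{m-1,m} = I$ handles the $k=m$ term. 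The inductive step just applies $\hat{J}_m$ to the expression for $\hat{h}^{(1)}_m$, uses $\hat{J}_m \hat{\Phi}_{m-1,k} = \hat{\Phi}_{m,k}$, and appends the new innovation $\sqrt{\delta_m}\sigma_m \xi_m$.

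The mean statement $\mathbb{E}\hat{h}^{(1)}_m = 0$ then follows immediately: each summand is a deterministic matrix applied to $\xi_{k-1}$, and $\mathbb{E}\xi_{k-1} = 0$, so linearity of expectation gives the result. (Alternatively, one argues directly by induction on \eqref{hie2_disc}, since $\mathbb{E}\hat{h}^{(1)}_{m+1} = \hat{J}_m \mathbb{E}\hat{h}^{(1)}_m + \sqrt{\delta_m}\sigma_m \mathbb{E}\xi_m$.)

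For the second moment I would form the product of two copies of the closed form, writing the $l$th and $j$th components in Einstein notation,
\begin{equation}
[\hat{h}^{(1)}_m]^l [\hat{h}^{(1)}_m]^j = \sum_{k=1}^{m}\sum_{p=1}^{m} \sqrt{\delta_{k-1}\delta_{p-1}}\; \hat{\Phi}_{m-1,k}^{l l_2}\sigma_{k-1}^{l_2 l_3}\, \hat{\Phi}_{m-1,p}^{j j_2}\sigma_{p-1}^{j_2 j_3}\; \xi_{k-1}^{l_3}\xi_{p-1}^{j_3},
\end{equation}
and then take expectation. The key simplification is the Gaussian identity \eqref{gaussian_property}, $\mathbb{E}\,\xi_{k-1}^{l_3}\xi_{p-1}^{j_3} = e_{(k-1)(p-1)}e_{l_3 j_3}$, which forces $p=k$ and $j_3 = l_3$, collapsing the double sum to a single sum over $k$ with a shared contraction index. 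Isolating the $k=m$ term and applying $\hat{\Phi}_{m-1,m} = I$ yields exactly the standalone term $\delta_{m-1}\sigma_{m-1}^{l l_1}\sigma_{m-1}^{j l_1}$, while the terms $k=1,\dots,m-1$ reproduce the stated sum. I expect the only real obstacle to be careful bookkeeping of the matrix-index contractions and of the $\hat{\Phi}$ indexing convention; once the closed form is in hand, the computation is routine and the independence of the $\xi$'s together with \eqref{gaussian_property} does all the work.
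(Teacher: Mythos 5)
Your proposal is correct and follows essentially the same route as the paper: unroll the linear recursion \eqref{hie2_disc} into the explicit form $\hat{h}^{(1)}_{m} = \sqrt{\delta_{m-1}}\sigma_{m-1}\xi_{m-1} + \sum_{k=1}^{m-1}\sqrt{\delta_{k-1}}\hat{\Phi}_{m-1,k}\sigma_{k-1}\xi_{k-1}$ (your version merely absorbs the standalone term into the sum via the empty-product convention $\hat{\Phi}_{m-1,m}=I$), then take expectations and use the identity \eqref{gaussian_property} to collapse the double sum. Your explicit remark that $\hat{h}^{(0)}_m$, hence $\hat{J}_m$ and $\sigma_m$, are deterministic is a point the paper leaves implicit, but it is the same argument.
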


\begin{proof}

From Eq. \eqref{hie2_disc}, we have $\hat{h}_{0}^{(1)} = 0$, $\hat{h}_{1}^{(1)} = \sqrt{\delta_0} \sigma_{t_0} \xi_0$ and, upon iterating, for $m=1,\dots,M-1$,
\begin{equation} \label{hhat_one}
    \hat{h}_{m+1}^{(1)} = \sqrt{\delta_m} \sigma_{m} \xi_m + \sum_{k=1}^m \sqrt{\delta_{k-1}} \hat{\Phi}_{m,k}  \sigma_{k-1} \xi_{k-1}.
\end{equation}
The first equality in the lemma follows from taking expectation of Eq. \eqref{hhat_one}  and using the fact that the $\xi_k$ are (mean zero) standard Gaussian random variables. The second equality in the lemma follows from taking expectation of a product of components of the $\hat{h}_{m+1}^{(1)}$ in Eq. \eqref{hhat_one} and applying the property \eqref{gaussian_property}. 
\end{proof}

\begin{proof}[Proof of Theorem \ref{app_thm_exp_reg_discrete}]

Iterating Eq. \eqref{hie3_disc}, we obtain $\hat{h}_{0}^{(2)} = 0$, $\hat{h}_{1}^{(2)} = \delta_0 \Psi_1(h_0, 0) + \sqrt{\delta_0} \Psi_2(h_0,0) \xi_0$ and, for $m=1,\dots,M-1$,
\begin{align}
    \hat{h}_{m+1}^{(2)} &= \delta_m \Psi_1(\hat{h}_{m}^{(0)}, \hat{h}_{m}^{(1)}) + \sqrt{\delta_m} \Psi_2(\hat{h}_{m}^{(0)}, \hat{h}_{m}^{(1)}) + \sum_{k=1}^m \delta_{k-1}  \hat{\Phi}_{m,k}  \Psi_1(\hat{h}_{k-1}^{(0)}, \hat{h}_{k-1}^{(1)})\nonumber \\
    &\quad \ \ + \sum_{k=1}^m \sqrt{\delta_{k-1}} \hat{\Phi}_{m,k}  \Psi_1(\hat{h}_{k-1}^{(0)}, \hat{h}_{k-1}^{(1)}) \xi_{k-1}.
\end{align}

Substituting in the formulae \eqref{jac_1}-\eqref{jac_2} in the right hand side above and then using Eq. \eqref{hhat_one}:
\begin{align}
    [\hat{h}_{m+1}^{(2)}]^i &= \frac{\delta_m}{2} [\hat{h}_{m}^{(1)}]^l H_{lj} f^i_{m}  [\hat{h}_{m}^{(1)}]^j + \sum_{k=1}^m  \frac{\delta_{k-1}}{2} \hat{\Phi}_{m,k}^{ip}  [\hat{h}_{k-1}^{(1)}]^l H_{lj} f^p_{k-1 }  [\hat{h}_{k-1 }^{(1)}]^j \nonumber \\
    &\quad \ + \sqrt{\delta_m} D_l \sigma^{ij}_{m}  [\hat{h}_{m}^{(1)}]^l \xi_m^j + \sum_{k=1}^m \sqrt{\delta_{k-1}} \hat{\Phi}_{m,k}^{iq} D_l \sigma^{qr}_{k-1} [\hat{h}_{k-1 }^{(1)}]^l \xi_{k-1}^r \\
    &= \frac{\delta_m}{2} [\hat{h}_{m}^{(1)}]^l H_{lj} f^i_{m}  [\hat{h}_{m}^{(1)}]^j + \sum_{k=1}^m  \frac{\delta_{k-1}}{2} \hat{\Phi}_{m,k}^{ip}  [\hat{h}_{k-1}^{(1)}]^l H_{lj} f^p_{k-1}  [\hat{h}_{k-1}^{(1)}]^j \nonumber \\
    &\quad \ + \sqrt{\delta_m} D_l \sigma^{ij}_{m}  \xi_m^j \left( \sqrt{\delta_{m-1}} \sigma^{l l_1}_{m-1} \xi_{m-1}^{l_1} + \sum_{k=1}^{m-1} \sqrt{\delta_{k-1}} \hat{\Phi}_{m-1,k}^{l l_1} \sigma_{k-1}^{l_1 l_2} \xi_{k-1}^{l_2} \right) \nonumber \\
    &\quad \ + \sqrt{\delta_{1}} \hat{\Phi}_{m,2}^{iq} D_{l} \sigma^{qr}_{1 }  \xi_{1}^r (\sqrt{\delta_0} \sigma_{0}^{l l_1} \xi_0^{l_1})  \\  \nonumber
    &\quad \ + \sum_{k=3}^m \sqrt{\delta_{k-1}} \hat{\Phi}_{m,k}^{iq} D_{l} \sigma^{qr}_{k-1}  \xi_{k-1}^r \left( \sqrt{\delta_{k-2}} \sigma^{l p_1}_{k-2} \xi_{k-2}^{p_1} + \sum_{k'=1}^{k-2} \sqrt{\delta_{k'-1}} \hat{\Phi}_{k-2,k'}^{l p_1} \sigma_{k'-1}^{p_1 p_2} \xi_{k'-1}^{p_2} \right),
\end{align}
where we have made use of the fact that $\hat{h}_{0}^{(1)} = 0$ and $\hat{h}_{1}^{(1)} = \sqrt{\delta_0} \sigma_{0} \xi_0$ in the last two lines above to rewrite the summation (so that the summation over $k$ in the last line above starts at $k=3$).

Therefore,  using the above result, Lemma \ref{lem_stat_hhat1} and Eq. \eqref{gaussian_property}, we compute the expectation of $[\hat{h}_{m+1}^{(2)}]^i$:
\begin{align} \label{follow1}
    \mathbb{E} [\hat{h}_{m+1}^{(2)}]^i &= \frac{1}{2} \sum_{k=1}^{m+1} \delta_{k-1} \hat{\Phi}_{m,k}^{ip} H_{lj} f^p_{m} \sum_{k=1}^m \delta_{k-1} \hat{\Phi}_{m-1,k}^{l l_2} \sigma^{l_2 l_3}_{k-1} \sigma^{j_2 l_3}_{k-1} \hat{\Phi}_{m-1,k}^{j j_2}.
\end{align}

Moreover, using Lemma \ref{lem_stat_hhat1}, we obtain, for $m=1,2,\dots,M-1$,
\begin{equation} \label{follow2}
    \mathbb{E} [\hat{h}_{m+1}^{(1)} ]^l [H_{\hat{h}^{(0)}} l]^{lj} [\hat{h}_{m+1}^{(1)} ]^j = \sum_{k=1}^{m+1} \delta_{k-1} \sigma^{l_2 l_3}_{k-1} \hat{\Phi}_{m,k}^{l l_2}  [H_{\hat{h}^{(0)}} l]^{lj} \hat{\Phi}_{m,k}^{j j_2}  \sigma^{j_2 l_3}_{k-1}. 
\end{equation}

The first statement of the theorem then follows from Theorem \ref{thm_sdeexpansion_discrete} and  Eq. \eqref{follow1}-\eqref{follow2} (with $m:=M-1$):
\begin{align} 
    \hat{Q}(\bar{h}^{\delta}) &= \partial_i l(\bar{h}^{\delta}_M)  \sum_{k=1}^M \delta_{k-1} \hat{\Phi}_{M-1,k}^{ip}  \sum_{m=1}^{M-1} \delta_{m-1} \partial_{lj} [f_M]^{p} \hat{\Phi}_{M-2,m}^{l l_2} \sigma_{m-1}^{l_2 l_3} \sigma_{m-1}^{j_2 l_3}  \hat{\Phi}_{M-2,m}^{j j_2}, \\
    \hat{R}(\bar{h}^{\delta}) &= \sum_{m=1}^{M} \delta_{m-1} \sigma_{m-1}^{l_2 l_3} \hat{\Phi}_{M-1,m}^{l l_2} [H_{\bar{h}^{\delta}} l]^{lj} \hat{\Phi}_{M-1,m}^{j j_2}  \sigma_{m-1}^{j_2 l_3}.
\end{align}

The last statement of the theorem follows from taking straightforward bounds.    
\end{proof}

\begin{rmk}
We remark that the computed $\hat{h}_{m}^{(2)}$ (a key step in the above proof), like that for $h_{t}^{(2)}$ in the continuous-time case, has explicit dependence on the noise coefficient. It is only upon taking the expectation (see Eq. \eqref{follow1}) that the dependence on the noise coefficient vanishes (whereas $\mathbb{E}h_{t}^{(2)} \neq 0$ retains its dependence on the noise coefficient). This fully reconciles with Remark \ref{rmk_interesting}.
\end{rmk}

\begin{rmk}
Moreover, one can compute  the variance of $l(\hat{h}_M)$ to be $\epsilon^2 (\nabla l(\hat{h}_M^{(0)}))^T C \nabla l(\hat{h}_M^{(0)}) + \mathcal{O}(\epsilon^3)$, as $\epsilon \to 0$, where $C$ is a PSD matrix whose $(l,j)$-entry is given by Eq. \eqref{variance_hhat1} with $m:=M$. So we see that the spread of $l(\hat{h}_M)$ about its average is $\mathcal{O}(\epsilon^2)$ as $\epsilon \to 0$. 
\end{rmk}

\section{Bound on Classification Margin and a Generalization Bound for Deterministic RNNs: Proof of Theorem 2 in the Main Paper}
\label{app_sect_class_margins}

We recall the setting considered in the main paper before providing proof to the results presented there. 

Let $\mathcal{S}_N$ denote a set of training samples $s_n \coloneqq (\vecc{x}_n, y_{n})$ for $n=1,\dots,N$, where each input sequence $\vecc{x}_n = (x_{n,0},x_{n,1},\dots,x_{n,M-1}) \in \mathcal{X} \subset \mathbb{R}^{d_x M}$ has a corresponding class label $y_n \in \mathcal{Y} = \{1,\dots,d_y\}$. Following the statistical learning framework, these samples are assumed to be independently drawn from an underlying probability distribution $\mu$ on the sample space $\mathcal{S} = \mathcal{X} \times \mathcal{Y}$. An RNN-based classifier $g^\delta(\vecc{x})$ is constructed in the usual way by taking 

\begin{equation} \label{det_RNN_disc}
    g^\delta(\vecc{x}) = \mathrm{argmax}_{i=1,\dots,d_y} p^i(V \bar{h}^\delta_M[\vecc{x}]),
\end{equation}

where $p^i(x) = e^{x^i} / \sum_j e^{x^j}$ is the softmax function. Letting $\ell$ denoting the cross-entropy loss, such a classifier is trained from $\mathcal{S}_N$ by minimizing the empirical risk (training error) \[
\mathcal{R}_N(g^\delta) \coloneqq \frac{1}{N} \sum_{n=1}^N \ell(g^\delta(\vecc{x}_n), y_n)\]
as a proxy for the true (population) risk (testing error) $\mathcal{R}(g^\delta) = \mathbb{E}_{(\vecc{x},y)\sim\mu}\ell(g^\delta(\vecc{x}),y)$ with $(\vecc{x},y) \in \mathcal{S}$.

The measure used to quantify the prediction quality is the {\it generalization error} (or estimation error), which is the difference between the empirical risk of the classifier on the training set and the true risk:
\begin{equation}
    GE(g^\delta) := |\mathcal{R}(g^\delta) - \mathcal{R}_N(g^\delta)|. 
\end{equation}

The classifier is a  function of the output of the deterministic RNN, which is an Euler discretization of the ODE (1) in the main paper with step sizes $\delta = (\delta_m)$.  In particular, for the Lipschitz RNN, 
\begin{equation} \label{disc_phi}
    \hat{\Phi}_{m,k} =  \hat{J}_m \hat{J}_{m-1} \cdots \hat{J}_k,  
\end{equation}
where  $\hat{J}_l =  I + \delta_l (A + D_l W)$, with  $D_l^{ij} =  a'([W \bar{h}^{\delta}_l + U \hat{x}_l +  b]^i) e_{ij}$.

In the following, we let $\conv(\mathcal{X})$ denote the convex hull of $\mathcal{X}$.  
We denote $\hat{\vecc{x}}_{0:m} := (\hat{x}_0, \dots, \hat{x}_m)$ so that $\hat{\vecc{x}} = \hat{\vecc{x}}_{0:M-1}$, and use the notation $f[\vecc{x}]$ to indicate the dependence of the function $f$ on the vector $\vecc{x}$. Moreover, we will need the following two definitions to characterize a training sample $s_i = (\vecc{x}_i, y_i)$

Working in the above setting, we now recall and prove the  second main result in the main paper, providing bounds for classification margin  for the deterministic RNN classifiers $g^\delta$. 

\begin{thm}[Classification margin bound for the deterministic RNN -- Theorem 2 in the main paper] 
\label{app_thm_gen_discrete}
Suppose that Assumption A in the main paper holds. Assume that the $o(s_i) > 0 $ and
\begin{equation}
    \gamma(s_i) :=  \frac{o(s_i)}{ C \sum_{m=0}^{M-1} \delta_m \sup_{\hat{\vecc{x}} \in \conv(\mathcal{X})}
    \|\hat{\Phi}_{M,m+1}[\hat{\vecc{x}}] \|_2} > 0,
\end{equation}
where
 \[C = \|V\|_2 \left( \max_{m=0,1,\dots,M-1}  \left\| \frac{\partial f(\bar{h}^\delta_m, \hat{x}_m)}{\partial \hat{x}_m}   \right\|_2 \right) > 0\]
 is a constant (in particular, $C = \|V\|_2 \left(\max_{m=0,\dots,M-1}  \|D_m U\|_2 \right)$ for Lipschitz RNNs),
the $\hat{\Phi}_{m,k}$ are defined in \eqref{disc_phi} and the $\delta_m$ are the step sizes. 
Then, we have the following upper bound on the classification margin for the training sample $s_i$:
\begin{equation} \label{ub_classmargin}
\gamma^d(s_i)  \geq \gamma(s_i). 
\end{equation}
\end{thm}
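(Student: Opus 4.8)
The plan is to follow the robustness‑based margin argument of \cite{sokolic2017robust, xu2012robustness}: I will show that if $\vecc{x}$ lies within Euclidean distance $\gamma(s_i)$ of $\vecc{x}_i$, then the softmax output $S^\delta[\vecc{x}]$ cannot leave the decision region of class $y_i$, so that $g^\delta(\vecc{x}) = y_i$; by Definition \ref{defn_class_margin} this immediately gives $\gamma^d(s_i) \ge \gamma(s_i)$.

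First I would reduce the problem to the output (softmax) space. Since the softmax $p$ is strictly order‑preserving, $g^\delta(\vecc{x}) = \arg\max_i [S^\delta]^i[\vecc{x}]$, and the decision region for class $y_i$ is the convex polytope $R_{y_i} = \{z : (e_{y_i}-e_j)^\top z \ge 0 \text{ for all } j\neq y_i\}$, an intersection of half‑spaces whose bounding hyperplanes are the decision boundaries $\{z^{y_i}=z^j\}$. Because $o(s_i)>0$ (Definition \ref{defn_score}), the point $S^\delta[\vecc{x}_i]$ lies in the interior of $R_{y_i}$, and the radius of the largest Euclidean ball about $S^\delta[\vecc{x}_i]$ contained in $R_{y_i}$ equals the minimal point‑to‑hyperplane distance, namely $\min_{j\neq y_i} (e_{y_i}-e_j)^\top S^\delta[\vecc{x}_i]/\|e_{y_i}-e_j\|_2 = o(s_i)/2$, where the $\sqrt2$ in the score and the factor $\|e_{y_i}-e_j\|_2=\sqrt2$ combine to give the clean $1/2$. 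Hence it suffices to show $\|S^\delta[\vecc{x}]-S^\delta[\vecc{x}_i]\|_2 \le o(s_i)/2$ whenever $\|\vecc{x}-\vecc{x}_i\|\le\gamma(s_i)$.

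Next I would establish a Lipschitz estimate for $\vecc{x}\mapsto S^\delta[\vecc{x}] = p(V\bar{h}^\delta_M[\vecc{x}])$ using two ingredients. First, the softmax is $\tfrac12$‑Lipschitz in the Euclidean norm: its Jacobian $\diag(p)-pp^\top$ satisfies $v^\top(\diag(p)-pp^\top)v = \mathrm{Var}_{i\sim p}(v_i) \le \tfrac12\|v\|_2^2$ by Popoviciu's inequality, so its spectral norm is at most $1/2$. Second, for the pre‑softmax map, differentiating the Euler recursion $\bar{h}^\delta_{m+1}=\bar{h}^\delta_m+\delta_m f(\bar{h}^\delta_m,\hat{x}_m)$ gives, by the chain rule, $\partial \bar{h}^\delta_M/\partial \hat{x}_m = \hat{\Phi}_{M,m+1}\,\delta_m\,\partial f(\bar{h}^\delta_m,\hat{x}_m)/\partial \hat{x}_m$, so the full Jacobian is the block row with blocks $B_m := V\hat{\Phi}_{M,m+1}\delta_m\,\partial f/\partial\hat{x}_m$. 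Writing the increment as $V\bar{h}^\delta_M[\vecc{x}]-V\bar{h}^\delta_M[\vecc{x}_i]=\int_0^1 J(t)(\vecc{x}-\vecc{x}_i)\,\dd t$ along the segment from $\vecc{x}_i$ to $\vecc{x}$ (which stays in $\conv(\mathcal{X})$), bounding the block‑row spectral norm by $\|J(t)\|_2\le\sqrt{\sum_m\|B_m\|_2^2}\le\sum_m\|B_m\|_2$, and using $\|B_m\|_2\le\|V\|_2\|\hat{\Phi}_{M,m+1}\|_2\delta_m\|\partial f/\partial\hat{x}_m\|_2$, I obtain $\|V\bar{h}^\delta_M[\vecc{x}]-V\bar{h}^\delta_M[\vecc{x}_i]\|_2 \le L\|\vecc{x}-\vecc{x}_i\|$ with $L = C\sum_{m=0}^{M-1}\delta_m\sup_{\hat{\vecc{x}}\in\conv(\mathcal{X})}\|\hat{\Phi}_{M,m+1}[\hat{\vecc{x}}]\|_2$, the denominator of $\gamma(s_i)$.

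Finally I would combine the pieces: for $\|\vecc{x}-\vecc{x}_i\|\le\gamma(s_i)=o(s_i)/L$,
\[
\|S^\delta[\vecc{x}]-S^\delta[\vecc{x}_i]\|_2 \;\le\; \tfrac12 L\,\|\vecc{x}-\vecc{x}_i\| \;\le\; \tfrac12 L\,\gamma(s_i) \;=\; \tfrac{o(s_i)}{2},
\]
so $S^\delta[\vecc{x}]$ stays in $R_{y_i}$ and $g^\delta(\vecc{x})=y_i$, proving $\gamma^d(s_i)\ge\gamma(s_i)$. The main obstacle is matching constants exactly: the clean bound hinges on the $\tfrac12$‑Lipschitz constant of the softmax cancelling precisely against the $\sqrt2$‑normalized score together with the $1/\sqrt2$ point‑to‑hyperplane distance, so that both factors of $1/2$ disappear. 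The other delicate points are producing the $\sum_m\delta_m\|\hat{\Phi}_{M,m+1}\|_2$ form (rather than a root‑sum‑of‑squares, which requires $\sqrt{\sum_m a_m^2}\le\sum_m a_m$) and correctly taking the supremum over $\conv(\mathcal{X})$ so that the Jacobian bound holds uniformly along the whole interpolating segment.
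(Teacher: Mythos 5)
Your proof is correct and arrives at the theorem with exactly the stated constants, and its skeleton is the same robustness--margin template the paper follows (and that you cite from Sokolic et al.\ and Xu--Mannor): lower-bound the margin by the score divided by a Lipschitz constant of the input-to-output map, and obtain that Lipschitz constant by differentiating the Euler recursion to get the blocks $V\hat{\Phi}_{M,m+1}\,\delta_m\,\partial f/\partial\hat{x}_m$, then a segment/mean-value argument over $\conv(\mathcal{X})$ giving $C\sum_{m}\delta_m\sup_{\hat{\vecc{x}}\in\conv(\mathcal{X})}\|\hat{\Phi}_{M,m+1}[\hat{\vecc{x}}]\|_2$. Where you genuinely diverge is the output-space step. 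The paper's proof (its classification-margin proposition, following Theorem 4 of Sokolic et al.) measures the displacement $\mathcal{F}[\vecc{x}]-\mathcal{F}[\vecc{x}_i]$ against the normals $v_{y_i j}$ of the decision hyperplanes and folds the softmax Jacobian $E=\diag(p)-pp^T$ into the input--output Jacobian, bounding $\|VE\|_2\le\|V\|_2$; you instead inscribe a Euclidean ball of radius $o(s_i)/2$ in the decision polytope $R_{y_i}$ and invoke the sharp softmax Lipschitz constant $1/2$ (via Popoviciu), letting the two factors of $1/2$ cancel. Each route buys something: the paper's directional-functional argument is the off-the-shelf robustness lemma and needs no inscribed-ball geometry, while your bookkeeping is fully explicit and is in fact the more self-consistent one relative to the paper's stated score normalization --- with $o(s_i)=\min_{j\neq y_i}\sqrt{2}\,(e_{y_i}-e_j)^T S^\delta[\vecc{x}_i]$, the vectors $v_{ij}=\sqrt{2}(e_i-e_j)$ have norm $2$, not $1$ as asserted in the paper's proof, and that stray factor of $2$ is precisely what the softmax factor $1/2$ (which the paper discards by bounding $\|E\|_2\le 1$) must absorb. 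In your argument this cancellation is deliberate; in the paper's it happens silently.
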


Moreover, under additional assumptions one can obtain the following generalization bound, which follows from Theorem \ref{app_thm_gen_discrete}. 

\begin{thm}[A generalization bound for the deterministic RNN] 
\label{app_thm_gen_discrete_2} Under the same setting as Theorem \ref{app_thm_gen_discrete},
if we further assume that $\mathcal{X}$ is a (subset of)  $k$-dimensional manifold with $k \leq d_x M$, $\gamma := \min_{s_i \in \mathcal{S}_N} \gamma(s_i) > 0$, and $\ell(g^\delta(\vecc{x}),y) \leq L_g$ for all $s \in \mathcal{S}$, then for any $\delta' >0$, with probability at least $1-\delta'$,
\begin{align} \label{gen_bound}
    &GE(g^\delta) \leq L_g \left( \frac{1}{\gamma^{k/2}} \sqrt{ \frac{d_y C_M^k 2^{k+1} \log 2 }{N}}  + \sqrt{\frac{2 \log(1/\delta')}{N}} \right),
\end{align}
where $C_M > 0$ is a constant that measures complexity of  $\mathcal{X}$, $N$ is the number of training examples and $d_y$ is the number of label classes.
\end{thm}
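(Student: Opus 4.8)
The plan is to establish the bound through the \emph{algorithmic robustness} framework of \cite{xu2012robustness}, following the margin-based specialization in \cite{sokolic2017robust}. Recall that a learning algorithm is $(K,\varepsilon)$-robust if the sample space $\mathcal{S} = \mathcal{X}\times\mathcal{Y}$ admits a partition into $K$ disjoint cells such that whenever a training sample $s_i\in\mathcal{S}_N$ and an arbitrary sample $s\in\mathcal{S}$ lie in a common cell, their losses differ by at most $\varepsilon$. The structural fact I would borrow from \cite{xu2012robustness} is that any $(K,\varepsilon)$-robust algorithm with loss bounded by $L_g$ satisfies, for every $\delta'>0$, with probability at least $1-\delta'$,
\[
    GE(g^\delta) \le \varepsilon + L_g\sqrt{\frac{2K\log 2 + 2\log(1/\delta')}{N}}.
\]
I would thus reduce the theorem to two tasks: (i) exhibiting a partition with a controlled cell count $K$, and (ii) verifying robustness with $\varepsilon = 0$.

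For task (i), I would use the hypothesis that $\mathcal{X}$ is a subset of a $k$-dimensional manifold. Covering $\mathcal{X}$ by metric balls of radius $\gamma/2$ — so that any two points in a common ball are at distance at most $\gamma$ — the covering number is bounded by $(2C_M/\gamma)^k$, where $C_M>0$ is a constant encoding the intrinsic complexity (diameter, curvature) of $\mathcal{X}$; this is the standard volumetric covering estimate for low-dimensional sets. Taking the product of this cover with the $d_y$ label classes yields a partition of $\mathcal{S}$ into $K = d_y(2C_M/\gamma)^k = d_y\,C_M^k\,2^k/\gamma^k$ cells.

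For task (ii), I would invoke Theorem \ref{app_thm_gen_discrete}. Since $\gamma := \min_{s_i\in\mathcal{S}_N}\gamma(s_i)>0$ and, by that theorem, $\gamma^d(s_i)\ge\gamma(s_i)\ge\gamma$ for every training sample, any $\vecc{x}$ with $d(\vecc{x}_i,\vecc{x})\le\gamma$ satisfies $g^\delta(\vecc{x}) = y_i = g^\delta(\vecc{x}_i)$. Now consider a cell containing a training sample $s_i=(\vecc{x}_i,y_i)$ and any other sample $s=(\vecc{x},y)$: they carry the same label ($y=y_i$) and lie in a common radius-$\gamma/2$ ball, whence $d(\vecc{x}_i,\vecc{x})\le\gamma$ and $g^\delta$ classifies both identically. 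Because the classification (0--1) loss appearing in $GE$ depends only on the predicted label, the two loss values coincide, so robustness holds with $\varepsilon = 0$.

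Substituting $\varepsilon=0$, $K = d_y C_M^k 2^k/\gamma^k$, and the bound $L_g$ into the robustness inequality, then splitting the square root via $\sqrt{a+b}\le\sqrt a+\sqrt b$, yields exactly
\[
    GE(g^\delta)\le \frac{L_g}{\gamma^{k/2}}\sqrt{\frac{d_y C_M^k 2^{k+1}\log 2}{N}} + L_g\sqrt{\frac{2\log(1/\delta')}{N}},
\]
as claimed. The main obstacle is task (i): rigorously relating the covering number of the $k$-dimensional manifold to $(2C_M/\gamma)^k$ and pinning down precisely what the complexity constant $C_M$ measures, since the final bound is only as clean as this geometric estimate permits. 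The robustness verification in task (ii) is essentially immediate once the margin bound of Theorem \ref{app_thm_gen_discrete} is in hand.
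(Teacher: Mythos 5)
Your proposal is correct and follows essentially the same route as the paper's proof: the margin bound of Theorem \ref{app_thm_gen_discrete} yields $(d_y(2C_M/\gamma)^k,0)$-robustness via the covering argument (the paper's Proposition \ref{prop_classmargin_rob} together with Lemma \ref{ex_covering}), and the Xu--Mannor robustness generalization bound (the paper's Theorem \ref{thm_xu_gen}, with $M:=L_g$) combined with $\sqrt{a+b}\le\sqrt{a}+\sqrt{b}$ gives exactly \eqref{gen_bound}. The only cosmetic difference is that you describe the loss as the 0--1 loss, whereas the paper works with a generic loss $\ell(g^\delta(\vecc{x}),y)$ bounded by $L_g$; since any such loss depends only on the predicted and true labels, your $\varepsilon=0$ verification goes through unchanged.
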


\begin{rmk}
Generalization bounds involving classification margins (for RNNs in particular) are a separate topic with a significant presence in the literature. We emphasize that the generalization bound above is one of the many bounds that one can derive for RNNs. There exist much tighter bounds (for various variants of RNNs under various assumptions and settings) which may be equally applicable and lead to the same claimed conclusion, but are much more difficult to state (see, for instance, Theorem E.1 in \cite{wei2019data}). There are also other types of generalization bounds that are not obtained in terms of classification margin in the literature. Although they are interesting in their own, our focus here is on bounds that can be expressed in terms of classification margin. Therefore, meaningful comparisons between these generalization bounds are not straightforward.
\end{rmk}

In order to prove Theorem \ref{app_thm_gen_discrete} and Theorem \ref{app_thm_gen_discrete_2}, we place ourselves in the algorithmic robustness framework of \cite{xu2012robustness}. This framework provides bounds for the generalization error based on the robustness of a learning algorithm that learns a classifier $g$ by exploiting the structure of the training set $\mathcal{S}_N$. Robustness is, roughly speaking, the desirable property for a learning algorithm that if a testing sample is ``similar" to a training sample, then the testing error is close to the training error (i.e., the algorithm is insensitive to small perturbations in the training data). 

To ensure that our exposition is self-contained, we recall important definitions and results from \cite{xu2012robustness,sokolic2017generalization} to formalize the previous statement in the context of our deterministic RNNs in the following. 

\begin{defn}
Let $\mathcal{S}_N$ be a training set and $\mathcal{S}$ the sample space. A learning algorithm is $(K,\epsilon(\mathcal{S}_N))$-robust if $\mathcal{S}$ can be partitioned into $K$ disjoint sets denoted by $\mathcal{K}_k$, $k=1,\dots,K$:
\begin{align}
    &\mathcal{K}_k \subset \mathcal{S}, \ k=1,\dots,K, \\
    &\mathcal{S} = \cup_{k=1}^K \mathcal{K}_k, \ \text{ and } \ \mathcal{K}_k \cap \mathcal{K}_{k'} = \emptyset, \forall k \neq k',
\end{align}
such that for all $s_i \in \mathcal{S}_N$ and all $s \in \mathcal{S}$, 
\begin{equation}
    s_i = (\vecc{x}_i,y_i) \in \mathcal{K}_k \wedge s = (\vecc{x}, y) \in \mathcal{K}_k \implies |\ell(g(\vecc{x}_i),y_i) - \ell(g(\vecc{x}),y)| \leq \epsilon(\mathcal{S}_N).
\end{equation}
\end{defn}

The above definition says that a robust learning algorithm selects a classifier
$g$ for which the losses of any $s$ and $s_i$ in the same partition $\mathcal{K}_k$ are close. 

The following result from Theorem 1  in \cite{xu2012robustness} will be critical to the proof of Theorem \ref{app_thm_gen_discrete}. It provides a generalization bound for robust algorithms. 

\begin{thm}\label{thm_xu_gen}
If a learning algorithm is $(K,\epsilon(\mathcal{S}_N))$-robust and $\ell(g(\vecc{x}),y) \leq M$ for all $s = (\vecc{x},y) \in \mathcal{S}$, for some constant $M>0$, then for any $\delta > 0$, with probability at least $1-\delta$, 
\begin{equation}
     GE(g)  \leq \epsilon(\mathcal{S}_N) + M \sqrt{ \frac{2 K \log(2) + 2 \log(1/\delta) }{m} }.
\end{equation}
\end{thm}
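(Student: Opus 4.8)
The target, Theorem~\ref{thm_xu_gen}, is the robustness-to-generalization bound of Xu and Mannor; the plan is to control the generalization gap $GE(g)=|\mathcal{R}(g)-\mathcal{R}_N(g)|$ by exploiting the $K$-cell partition $\{\mathcal{K}_k\}_{k=1}^K$ supplied by $(K,\epsilon(\mathcal{S}_N))$-robustness. Writing $\ell(s):=\ell(g(\vecc{x}),y)$ for the loss incurred on a sample $s=(\vecc{x},y)$, let $N_k$ denote the number of training samples falling in $\mathcal{K}_k$ and $\mu(\mathcal{K}_k)$ its true mass. The whole argument reduces the random gap to two pieces: a within-cell fluctuation controlled by robustness, and a random frequency-mismatch controlled by a multinomial concentration inequality.

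First I would decompose the two risks across the cells. Using $\mathcal{R}(g)=\sum_{k=1}^K \mu(\mathcal{K}_k)\,\mathbb{E}[\ell(s)\mid s\in\mathcal{K}_k]$ and $\mathcal{R}_N(g)=\frac1N\sum_{k=1}^K\sum_{s_n\in\mathcal{K}_k}\ell(s_n)$, I insert the intermediate quantity $\frac{N_k}{N}\mathbb{E}[\ell\mid\mathcal{K}_k]$ to split
\begin{equation*}
\mathcal{R}(g)-\mathcal{R}_N(g)=\sum_{k=1}^K\Big(\mu(\mathcal{K}_k)-\tfrac{N_k}{N}\Big)\mathbb{E}[\ell\mid\mathcal{K}_k]+\frac1N\sum_{k=1}^K\sum_{s_n\in\mathcal{K}_k}\big(\mathbb{E}[\ell\mid\mathcal{K}_k]-\ell(s_n)\big).
\end{equation*}
For the second sum, robustness is decisive: any training point $s_n$ and any $s$ lying in the same cell $\mathcal{K}_k$ satisfy $|\ell(s)-\ell(s_n)|\le\epsilon(\mathcal{S}_N)$, hence $|\mathbb{E}[\ell\mid\mathcal{K}_k]-\ell(s_n)|\le\epsilon(\mathcal{S}_N)$, and summing over all $N$ training indices bounds this piece in absolute value by $\epsilon(\mathcal{S}_N)$. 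For the first sum I would use $0\le\mathbb{E}[\ell\mid\mathcal{K}_k]\le M$ to obtain the bound $M\sum_{k=1}^K|\mu(\mathcal{K}_k)-N_k/N|$, i.e. $M$ times the $\ell_1$ distance between the empirical and true cell-distributions.

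The crux is then a concentration estimate for $\sum_k|N_k/N-\mu(\mathcal{K}_k)|$. Since $(N_1,\dots,N_K)$ is Multinomial$(N,(\mu(\mathcal{K}_k))_k)$, I would invoke the Bretagnolle--Huber--Carol inequality, $\mathbb{P}\big(\sum_{k=1}^K|N_k/N-\mu(\mathcal{K}_k)|\ge\lambda\big)\le 2^{K}e^{-N\lambda^2/2}$; setting the right-hand side equal to $\delta$ and solving gives $\lambda=\sqrt{(2K\log 2+2\log(1/\delta))/N}$ with probability at least $1-\delta$. Combining this with the two deterministic estimates yields $GE(g)\le\epsilon(\mathcal{S}_N)+M\sqrt{(2K\log 2+2\log(1/\delta))/N}$, which is the claim (with $N$ the training-set size, written $m$ in the statement). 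The main obstacle is precisely this last step: all the probabilistic content sits in the multinomial $\ell_1$-deviation bound, which is what produces the $2^K$ (equivalently $K\log 2$) dependence on the number of cells; the decomposition and the robustness estimate are elementary, so correctly selecting and applying the multinomial concentration inequality is where the real care is needed.
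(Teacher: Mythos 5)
Your proof is correct. Note that the paper itself does not prove this statement at all: it is imported verbatim as Theorem 1 of the robustness framework of Xu and Mannor \cite{xu2012robustness}, so the "paper's proof" is a citation. Your argument is precisely the standard proof from that source: the three-way decomposition of $\mathcal{R}(g)-\mathcal{R}_N(g)$ via the cell-conditional means, the bound $|\mathbb{E}[\ell\mid\mathcal{K}_k]-\ell(s_n)|\leq\epsilon(\mathcal{S}_N)$ supplied by robustness, and the Bretagnolle--Huber--Carol multinomial $\ell_1$-concentration bound $\mathbb{P}\bigl(\sum_{k}|N_k/N-\mu(\mathcal{K}_k)|\geq\lambda\bigr)\leq 2^K e^{-N\lambda^2/2}$, which is indeed the sole source of the $2K\log 2$ term. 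The only cosmetic point worth flagging is that cells with $\mu(\mathcal{K}_k)=0$ (or $N_k=0$) should have their conditional expectations fixed by convention to some value in $[0,M]$ so every term in the decomposition is well defined; this does not affect the argument.
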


Note that the above generalization bound is data-dependent, in contrast to bounds obtained via approaches based on complexity or stability arguments that give bounds in terms of data agnostic measures such as the Rademacher complexity or the VC dimension, which are found not sufficient for explaining the good generalization properties of deep neural networks. 

The number of partition $K$ in the above can be bounded in terms of the covering number of the sample space $\mathcal{S}$, which gives a way to measure the complexity of sets. We recall the definition of covering number in the following.

\begin{defn}[Covering] Let $\mathcal{A}$ be a set. We say that $\mathcal{A}$ is $\rho$-covered by a set $\mathcal{A}'$, with respect to the (pseudo-)metric $d$, if for all $a \in \mathcal{A}$, there exists $a' \in \mathcal{A}'$ with $d(a,a') \leq \rho$. We call the cardinality of the smallest $\mathcal{A}'$ that $\rho$-covers $\mathcal{A}$ covering number, denoted by $\mathcal{N}(\mathcal{S}; d, \rho)$.
\end{defn}

The covering number is the smallest number of (pseudo-)metric 
balls of radius $\rho$ needed to cover $\mathcal{S}$ and we denote it by $\mathcal{N}(\mathcal{S};d,\rho)$, where $d$ denotes the (pseudo-)metric.
The choice of metric $d$ determines how efficiently one may cover $\mathcal{X}$. For example, the Euclidean metric $d(x,x') = \|x-x'\|_2$ for $x,x' \in \mathcal{X}$. The covering number of many structured low-dimensional data models can be bounded in terms of their intrinsic properties.  Since in our case the space $\mathcal{S} = \mathcal{X} \times \mathcal{Y}$, we write $\mathcal{N}(\mathcal{S};d,\rho) \leq d_y \cdot \mathcal{N}(\mathcal{X}; d,\rho)$, where $d_y$ is the number of label classes. 
We take $d$ to be the Euclidean metric: $d(\vecc{x},\vecc{x}') = \| \vecc{x}-\vecc{x}'\|_2$ for $\vecc{x},\vecc{x}' \in \mathcal{X}$, unless stated otherwise.

\begin{lem}[Example 27.1 from \cite{shalev2014understanding}] \label{ex_covering}
Assume that $\mathcal{X} \subset \RR^m$ lies in a $k$-dimensional subspace of $\RR^m$. Let $c = \max_{x \in \mathcal{X}} \|x\|$ and take $d$ to be the Euclidean metric. Then $\mathcal{N}(\mathcal{X}; d,\rho) \leq (2c\sqrt{k}/\rho)^k$. 
\end{lem}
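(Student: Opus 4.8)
The plan is to prove this purely geometric covering bound by reducing to $\RR^k$ and then laying down an explicit axis-aligned grid, whose cardinality we count and whose covering radius we verify. First I would exploit the hypothesis that $\mathcal{X}$ lies in a $k$-dimensional subspace $V \subset \RR^m$: fixing an orthonormal basis of $V$ produces a linear isometry $\iota : V \to \RR^k$. Since $\iota$ preserves Euclidean distances it preserves covering numbers, so $\mathcal{N}(\mathcal{X}; d, \rho) = \mathcal{N}(\iota(\mathcal{X}); d, \rho)$, and any cover constructed in $\RR^k$ pulls back to a cover inside $V \subset \RR^m$ (the definition of covering permits the covering set to lie outside $\mathcal{X}$, which is what makes a grid admissible). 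Because $\|x\| \leq c$ for every $x \in \mathcal{X}$, the image $\iota(\mathcal{X})$ is contained in the Euclidean ball $\{z \in \RR^k : \|z\| \leq c\}$, hence in the cube $[-c,c]^k$; it therefore suffices to $\rho$-cover this cube.

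Next I would construct the cover as a Cartesian product of one-dimensional nets. Along each coordinate axis I place points spaced $2\rho/\sqrt{k}$ apart so as to cover the interval $[-c,c]$ of length $2c$; this requires at most $\lceil c\sqrt{k}/\rho \rceil$ points per axis, each covering an interval of radius $\rho/\sqrt{k}$. Taking the product of these $k$ one-dimensional nets yields a candidate set $\mathcal{A}'$ of cardinality at most $\lceil c\sqrt{k}/\rho\rceil^k$.

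It then remains to verify the covering radius. Given any $z \in [-c,c]^k$, round each coordinate to its nearest grid value; the per-coordinate error is at most $\rho/\sqrt{k}$, so by the Pythagorean theorem the total Euclidean error is at most $\sqrt{\sum_{i=1}^k (\rho/\sqrt{k})^2} = \rho$. Hence $\mathcal{A}'$ is a genuine $\rho$-cover and $\mathcal{N}(\mathcal{X}; d, \rho) \leq \lceil c\sqrt{k}/\rho\rceil^k$.

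The main (and essentially only) obstacle is cosmetic: converting the ceiling into the clean stated constant. In the regime of interest $\rho \leq c\sqrt{k}$ one has $\lceil c\sqrt{k}/\rho\rceil \leq c\sqrt{k}/\rho + 1 \leq 2c\sqrt{k}/\rho$, which upgrades the bound to $\mathcal{N}(\mathcal{X}; d, \rho) \leq (2c\sqrt{k}/\rho)^k$, as claimed; the complementary large-$\rho$ regime is degenerate, since a single point then covers all of $\mathcal{X}$, and this suffices whenever the stated right-hand side is at least one. No deeper difficulty arises: the only care needed is to aggregate the coordinatewise errors correctly via the choice of spacing $2\rho/\sqrt{k}$ and to track the reduction to $\RR^k$ so that the ambient Euclidean metric is respected.
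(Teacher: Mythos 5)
Your proof is correct, and it is essentially the standard grid-covering argument of the cited source (Example 27.1 in \cite{shalev2014understanding}), which the paper itself does not reprove but simply quotes: reduce to $\RR^k$ by an isometry, enclose $\mathcal{X}$ in the cube $[-c,c]^k$, and cover with an axis-aligned grid of spacing $2\rho/\sqrt{k}$. Your handling of the ceiling and the degenerate large-$\rho$ regime is also the right way to recover the clean constant $(2c\sqrt{k}/\rho)^k$.
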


In other words, a subset, $\mathcal{X}$,  of a $k$-dimensional manifold has the covering number  $(C_M/\rho)^k$, where $C_M > 0$ is a constant.
We remark that other complexity measures such as  Rademacher complexity can be bounded based on the covering number (see \cite{shalev2014understanding} for details). 

The class of robust learning algorithms that is of interest to us is the large margin classifiers.  We define classification margin in the following.

\begin{defn}[Classification margin]
The classification margin of a training sample $s_i = (\vecc{x}_i, y_i)$ measured by a metric $d$ is defined as the radius of the largest $d$-metric ball in $\mathcal{X}$ centered at $\vecc{x}_i$ that is contained in the decision region associated with the class label $y_i$, i.e., it is: 
\begin{equation}
    \gamma^d(s_i) = \sup\{ a: d(\vecc{x}_i, \vecc{x}) \leq a \implies g(\vecc{x}) = y_i \ \ \forall \vecc{x}\}.
\end{equation}
\end{defn}

Intuitively, a larger classification margin allows a classifier to associate a larger region centered on a point $\vecc{x}_i$ in the input space to the same class. This makes the classifier less sensitive to input perturbations and  a noisy perturbation of $\vecc{x}_i$ is still likely to fall within this region, keeping the classifier prediction. In this sense, the classifier becomes more robust.

The following result follows from Example 9 in \cite{xu2012robustness}.

\begin{prop} \label{prop_classmargin_rob}
If there exists a $\gamma > 0$ such that $\gamma^d(s_i) > \gamma$ for all $s_i \in \mathcal{S}_N$, then the classifier $g$ is $(d_y \cdot \mathcal{N}(\mathcal{X}; d, \gamma/2), 0)$-robust.  
\end{prop}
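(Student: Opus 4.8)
The plan is to exhibit an explicit partition of the sample space $\mathcal{S} = \mathcal{X} \times \mathcal{Y}$ that witnesses $(d_y \cdot \mathcal{N}(\mathcal{X}; d, \gamma/2), 0)$-robustness, following Example 9 in \cite{xu2012robustness}. The guiding observation is that in this framework the relevant loss is the $0$-$1$ misclassification loss $\ell(g(\vecc{x}), y) = \mathbb{1}[g(\vecc{x}) \neq y]$ evaluated on the predicted label $g(\vecc{x})$, so the robustness slack $\epsilon(\mathcal{S}_N)$ will come out to be exactly $0$ once I show that every training sample, together with any other sample in its partition cell, is classified correctly.

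First I would fix a minimal $(\gamma/2)$-cover of $\mathcal{X}$ by balls $B_1, \dots, B_N$ of radius $\gamma/2$, where $N = \mathcal{N}(\mathcal{X}; d, \gamma/2)$. Since the balls of a cover may overlap, I would convert the cover into a genuine disjoint partition of $\mathcal{X}$ by the greedy first-index peeling rule, setting $C_j = (B_j \cap \mathcal{X}) \setminus \bigcup_{j' < j} B_{j'}$. Each $C_j \subseteq B_j$, so by the triangle inequality $\mathrm{diam}(C_j) \leq 2 \cdot (\gamma/2) = \gamma$, the $C_j$ are pairwise disjoint, and $\bigcup_j C_j = \mathcal{X}$. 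Taking the product with the finite label set then yields the partition $\mathcal{K}_{j,y} := C_j \times \{y\}$ of $\mathcal{S}$ into $K = d_y \cdot N = d_y \cdot \mathcal{N}(\mathcal{X}; d, \gamma/2)$ disjoint cells (cells that happen to be empty are harmless).

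Next I would verify the zero-robustness condition. Suppose $s_i = (\vecc{x}_i, y_i) \in \mathcal{S}_N$ and an arbitrary $s = (\vecc{x}, y) \in \mathcal{S}$ lie in a common cell $\mathcal{K}_{j,y'}$. By construction the labels agree, $y = y_i = y'$, and both inputs lie in $C_j$, so $d(\vecc{x}_i, \vecc{x}) \leq \mathrm{diam}(C_j) \leq \gamma$. The hypothesis $\gamma^d(s_i) > \gamma$ means, by the definition of the classification margin, that every $\vecc{x}$ with $d(\vecc{x}_i, \vecc{x}) \leq \gamma$ lies in the decision region of class $y_i$; in particular $g(\vecc{x}) = y_i = y$, and taking $\vecc{x} = \vecc{x}_i$ also gives $g(\vecc{x}_i) = y_i$. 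Both $0$-$1$ losses therefore vanish, so $|\ell(g(\vecc{x}_i), y_i) - \ell(g(\vecc{x}), y)| = 0$. As this holds for every such pair in every cell, $g$ is $(d_y \cdot \mathcal{N}(\mathcal{X}; d, \gamma/2), 0)$-robust, which is the claim.

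The only genuinely delicate step is the passage from an overlapping cover to a disjoint partition while preserving the diameter bound $\gamma$; the greedy peeling construction handles this cleanly, and the remainder of the argument is a direct unwinding of the classification-margin definition together with the fact that the loss in the robustness framework is the $0$-$1$ loss.
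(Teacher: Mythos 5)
Your proposal is correct and takes essentially the same approach as the paper: the paper offers no independent proof, simply citing Example 9 of \cite{xu2012robustness}, and your cover-to-disjoint-partition argument (peeling the $(\gamma/2)$-cover into disjoint cells of diameter at most $\gamma$, crossing with the $d_y$ labels, and unwinding the margin definition so that every sample in a training point's cell receives the same predicted and true label) is precisely the argument behind that example. One minor remark: the zero slack needs nothing specific to the $0$--$1$ loss --- within a cell the pairs $(g(\vecc{x}),y)$ and $(g(\vecc{x}_i),y_i)$ coincide, so any loss of the form $\ell(g(\vecc{x}),y)$ yields a difference of exactly zero, which is consistent with how the proposition is subsequently applied with a general bounded loss.
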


In our case the networks are trained by a loss (cross-entropy) that promotes separation of different classes at the network output. The training aims at maximizing a certain notion of score of each training sample.

\begin{defn}[Score] \label{app_defn_score}
For a  training sample $s_i = (\vecc{x}_i, y_i)$, we define its score as
\begin{equation}
    o(s_i) = \min_{j \neq y_i } \sqrt{2} (e_{y_i} - e_j)^T  S^\delta[\vecc{x}_i] \geq 0,
\end{equation}
where $e_i \in \RR^{d_y}$ is the Kronecker delta vector with  $e_i^i = 1$ and $e_i^j = 0$ for $i \neq j$, $S^\delta[\vecc{x}_i] :=  p(V \bar{h}^\delta_M[\vecc{x}_i ])$ with $\bar{h}^\delta_M[\vecc{x}_i]$ denoting the hidden state of the RNN, driven by the input sequence $\vecc{x}_i$, at terminal index $M$. 
\end{defn}

The RNN classifier $g^\delta$ is defined as 
\begin{equation} \label{defn_classifier}
    g^\delta(\vecc{x}) = \arg \max_{i \in \{1, \dots, d_y\}} S^i[\vecc{x}],
\end{equation}
and the decision boundary between class $i$ and class $j$ in the output   space is given by the hyperplane $\{ z = p(V\bar{h}^\delta_M) : z^i = z^j\}$. A positive score implies that at the network output, classes are separated by  a margin that corresponds to the score. However, a large score may not imply a large classification margin -- recall that the classification margin is a function of the decision boundary in the input space, whereas the training algorithm aims at optimizing the decision boundary at the network output in the output space. 

We need the following lemma relating a pair of vectors in the input space and the output space. 

\begin{lem}\label{lem_Jac}
For any $\vecc{x}$, $\vecc{x}' \in \mathcal{X} \subset \RR^{d_x M}$, and a given RNN output functional $\mathcal{F}[\cdot]$, 
\begin{equation}
    \|\mathcal{F}[\vecc{x}] - \mathcal{F}[\vecc{x}']\|_2  \leq \sup_{\bar{\vecc{x}} \in \conv(\mathcal{X})} \|J[\bar{\vecc{x}}]\|_2 \cdot  \|\vecc{x} - \vecc{x}'\|_2, 
\end{equation}
where $\vecc{J}[\vecc{x}] = d\mathcal{F}[\vecc{x}]/d\vecc{x}$ is the input-output Jacobian of the RNN output functional.
\end{lem}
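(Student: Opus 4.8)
The plan is to prove this standard Lipschitz estimate via the fundamental theorem of calculus applied along the straight-line segment joining $\vecc{x}'$ and $\vecc{x}$, exploiting the convexity of $\conv(\mathcal{X})$ to keep the entire segment inside the set over which the supremum of the Jacobian norm is taken. The reason $\conv(\mathcal{X})$ (rather than $\mathcal{X}$ itself) appears is precisely that $\mathcal{X}$ need not be convex, so the connecting segment may leave $\mathcal{X}$ but can never leave its convex hull.

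First I would record that, under Assumption A in the main paper, the deterministic hidden state $\bar{h}^\delta_M[\vecc{x}]$ is a continuously differentiable function of the input vector $\vecc{x}$. This follows by induction on the Euler recursion \eqref{eq:DeterRNNDiscrete}: each update map $h \mapsto h + \delta_m f(h,\hat{x}_m)$ is $C^1$ in both its state and its input arguments (indeed $C^3$ by Assumption A), and differentiability is preserved under composition by the chain rule. Consequently any RNN output functional $\mathcal{F}[\vecc{x}]$ realized as a differentiable map of this hidden state is itself $C^1$ in $\vecc{x}$, so its input-output Jacobian $\vecc{J}[\vecc{x}] = d\mathcal{F}[\vecc{x}]/d\vecc{x}$ exists and is continuous on $\conv(\mathcal{X})$.

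Next, for fixed $\vecc{x},\vecc{x}' \in \mathcal{X}$, I would introduce the path $\gamma(t) = \vecc{x}' + t(\vecc{x} - \vecc{x}')$ for $t \in [0,1]$. Since both endpoints lie in $\mathcal{X} \subset \conv(\mathcal{X})$ and $\conv(\mathcal{X})$ is convex, $\gamma(t) \in \conv(\mathcal{X})$ for every $t \in [0,1]$. The fundamental theorem of calculus then gives
\begin{equation}
    \mathcal{F}[\vecc{x}] - \mathcal{F}[\vecc{x}'] = \int_0^1 \frac{d}{dt} \mathcal{F}[\gamma(t)] \, dt = \int_0^1 \vecc{J}[\gamma(t)] (\vecc{x} - \vecc{x}') \, dt.
\end{equation}
Taking Euclidean norms, using the compatibility of the spectral norm with the Euclidean vector norm, namely $\|\vecc{J}[\gamma(t)](\vecc{x}-\vecc{x}')\|_2 \leq \|\vecc{J}[\gamma(t)]\|_2 \|\vecc{x}-\vecc{x}'\|_2$, and pulling the constant factor $\|\vecc{x}-\vecc{x}'\|_2$ out of the integral, I obtain
\begin{equation}
    \|\mathcal{F}[\vecc{x}] - \mathcal{F}[\vecc{x}']\|_2 \leq \left( \int_0^1 \|\vecc{J}[\gamma(t)]\|_2 \, dt \right) \|\vecc{x} - \vecc{x}'\|_2 \leq \sup_{t \in [0,1]} \|\vecc{J}[\gamma(t)]\|_2 \cdot \|\vecc{x} - \vecc{x}'\|_2.
\end{equation}
Since $\gamma(t) \in \conv(\mathcal{X})$ for all $t$, the supremum over the segment is dominated by $\sup_{\bar{\vecc{x}} \in \conv(\mathcal{X})} \|\vecc{J}[\bar{\vecc{x}}]\|_2$, which yields the claimed bound.

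I expect the only genuine subtlety to be the justification of the differentiability of $\mathcal{F}$ and the continuity of $\vecc{J}$ on $\conv(\mathcal{X})$, which are what make the integral representation valid; this is exactly where Assumption A is used, since it guarantees that the compositions defining the RNN output are smooth enough. Everything after that is a routine application of the mean value inequality for vector-valued maps, and in particular requires no explicit computation of $\vecc{J}$ — the concrete form of the Jacobian in terms of $\hat{\Phi}_{M,m+1}$ and the input Jacobians is only needed later when specializing to $\mathcal{F}[\vecc{x}] = V\bar{h}^\delta_M[\vecc{x}]$ in the proof of Theorem \ref{app_thm_gen_discrete}.
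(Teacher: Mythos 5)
Your proof is correct and takes essentially the same route as the paper's own proof: both parametrize the straight-line segment between the two inputs, apply the fundamental theorem of calculus to write $\mathcal{F}[\vecc{x}]-\mathcal{F}[\vecc{x}']$ as an integral of the Jacobian against the increment, and then bound by the supremum of $\|\vecc{J}\|_2$ over $\conv(\mathcal{X})$, using convexity to keep the segment inside the hull. Your opening paragraph justifying differentiability of the RNN output via induction on the Euler recursion is a small extra layer of rigor that the paper leaves implicit, but it does not change the argument.
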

\begin{proof}
Let $t \in [0,1]$ and define the function $F(t) = \mathcal{F}[\vecc{x} + t(\vecc{x}'-\vecc{x})]$. Note that 
\begin{equation}
    \frac{dF(t)}{dt} = \vecc{J}[\vecc{x} + t(\vecc{x}'-\vecc{x})]  (\vecc{x}'-\vecc{x}).
\end{equation}

Therefore,
\begin{equation}
    \mathcal{F}[\vecc{x}'] - \mathcal{F}[\vecc{x}] = F(1)-F(0) = \int_0^1 \frac{dF(t)}{dt}  dt = \left( \int_0^1 \vecc{J}[\vecc{x} + t(\vecc{x}'-\vecc{x})] dt \right) (\vecc{x}'-\vecc{x}) ,
\end{equation}
where we have used the fundamental theorem of calculus. 

Now, 
\begin{align}
     \|\mathcal{F}[\vecc{x}] - \mathcal{F}[\vecc{x}']\|_2  &\leq \left\|\int_0^1 \vecc{J}[\vecc{x} + t(\vecc{x}'-\vecc{x})]   dt \right\|_2 \cdot \|(\vecc{x}'-\vecc{x})\|_2  \\
     &\leq \sup_{\vecc{x}, \vecc{x}' \in \mathcal{X}, t \in [0,1]} \| \vecc{J}[\vecc{x} + t(\vecc{x}'-\vecc{x})] \|_2  \cdot \|(\vecc{x}'-\vecc{x})\|_2  \\
     &\leq      \sup_{\bar{\vecc{x}} \in \conv(\mathcal{X})}  \|J[\bar{\vecc{x}}]\|_2  \cdot \|\vecc{x} - \vecc{x}'\|_2,
\end{align}
where we have used the fact that $\vecc{x} + t(\vecc{x}'-\vecc{x}) \in \conv(\mathcal{X})$ for all $t \in [0,1]$ to arrive at the last line. The proof is done.
\end{proof}

The classification margin depends on the score and the network's expansion and contraction
of distances around the training points. These can be quantified by studying the network's
input-output Jacobian matrix. The following proposition provides classification margin bounds in terms of the score and input-output Jacobian associated to the RNN classifier. 

\begin{prop}\label{prop_classmargin}
Assume that a RNN classifier $g^\delta(\vecc{x})$, defined in \eqref{defn_classifier}, classifies a training sample $\vecc{x}_i$ with the score $o(s_i) > 0$. Then we have the following lower bound for the classification margin:
\begin{equation}
    \gamma^d(s_i) \geq \frac{o(s_i)}{\sup_{\vecc{x} \in \conv(\mathcal{X})} \|J[\vecc{x}]\|_2 },
\end{equation}
where $\conv(\mathcal{X})$ denotes the convex hull of $\mathcal{X}$ and $J[\vecc{x}] = d\mathcal{F}[\vecc{x}]/d\vecc{x}$, with $\mathcal{F}[\vecc{x}] = p(V \bar{h}_M^\delta[\vecc{x}])$, is the input-output Jacobian associated to the RNN. 
\end{prop}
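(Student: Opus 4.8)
The plan is to prove the bound by showing directly that every input lying within the claimed radius of $\vecc{x}_i$ is assigned the label $y_i$; since $\gamma^d(s_i)$ is by Definition \ref{app_defn_score}'s accompanying margin definition the supremum of radii with this property, establishing it for the claimed radius yields the lower bound. First I would translate ``$\vecc{x}$ is correctly classified'' into a family of linear inequalities in the output space. Because $g^\delta(\vecc{x}) = \arg\max_{k} [S^\delta]^k[\vecc{x}]$ with $S^\delta[\vecc{x}] = p(V\bar{h}^\delta_M[\vecc{x}])$, we have $g^\delta(\vecc{x}) = y_i$ exactly when $(e_{y_i}-e_j)^T S^\delta[\vecc{x}] > 0$ for every $j \neq y_i$, i.e.\ when the output sits strictly on the correct side of each decision hyperplane $\{z : z^{y_i} = z^j\}$.

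Next I would decompose each of these quantities around the training point, writing $(e_{y_i}-e_j)^T S^\delta[\vecc{x}] = (e_{y_i}-e_j)^T S^\delta[\vecc{x}_i] + (e_{y_i}-e_j)^T(S^\delta[\vecc{x}]-S^\delta[\vecc{x}_i])$. The first term is controlled from below by the score: by Definition \ref{app_defn_score}, $(e_{y_i}-e_j)^T S^\delta[\vecc{x}_i] \geq o(s_i)/\sqrt{2}$ for all $j$. The second term I would bound below by $-\|e_{y_i}-e_j\|_2\,\|S^\delta[\vecc{x}]-S^\delta[\vecc{x}_i]\|_2 = -\sqrt{2}\,\|S^\delta[\vecc{x}]-S^\delta[\vecc{x}_i]\|_2$ via Cauchy--Schwarz, using $\|e_{y_i}-e_j\|_2=\sqrt{2}$; this is precisely the normalization that the $\sqrt{2}$ in the score is designed to absorb.

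I would then invoke Lemma \ref{lem_Jac} with the RNN output functional $\mathcal{F}[\cdot]=S^\delta[\cdot]=p(V\bar{h}^\delta_M[\cdot])$ and input--output Jacobian $J[\vecc{x}]=dS^\delta[\vecc{x}]/d\vecc{x}$, which gives $\|S^\delta[\vecc{x}]-S^\delta[\vecc{x}_i]\|_2 \leq \sup_{\bar{\vecc{x}}\in\conv(\mathcal{X})}\|J[\bar{\vecc{x}}]\|_2\,\|\vecc{x}-\vecc{x}_i\|_2$. Combining the three estimates, $(e_{y_i}-e_j)^T S^\delta[\vecc{x}]$ remains strictly positive for every $j$ as soon as $\|\vecc{x}-\vecc{x}_i\|_2$ is smaller than the ratio $o(s_i)/\sup_{\bar{\vecc{x}}\in\conv(\mathcal{X})}\|J[\bar{\vecc{x}}]\|_2$ (up to the constant fixed by the $\sqrt{2}$ normalization). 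Hence the entire metric ball of that radius lies in the decision region of $y_i$, and passing to the supremum over admissible radii gives $\gamma^d(s_i)\geq o(s_i)/\sup_{\bar{\vecc{x}}\in\conv(\mathcal{X})}\|J[\bar{\vecc{x}}]\|_2$.

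I expect the main obstacle to be the careful bookkeeping of the output-space geometry so that the normalization constant comes out exactly as stated rather than off by a numerical factor: the interplay between the $\sqrt{2}$ in the score, the norm $\|e_{y_i}-e_j\|_2=\sqrt{2}$ entering Cauchy--Schwarz, and the distance from $S^\delta[\vecc{x}_i]$ to the nearest decision hyperplane must be tracked consistently. A secondary technical point is ensuring Lemma \ref{lem_Jac} is legitimately applicable, i.e.\ that the whole segment joining $\vecc{x}_i$ to any $\vecc{x}$ in the ball stays in $\conv(\mathcal{X})$ so that the supremum of $\|J\|_2$ is taken over the correct set. The remaining ingredients (Cauchy--Schwarz and the fundamental-theorem-of-calculus estimate already packaged inside Lemma \ref{lem_Jac}) are routine.
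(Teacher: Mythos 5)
Your proposal follows essentially the same route as the paper's own proof: the paper likewise reduces $g^\delta(\vecc{x})=y_i$ to positivity of $\min_{j\neq y_i} v_{y_i j}^T\mathcal{F}[\vecc{x}]$ with $v_{ij}=\sqrt{2}(e_i-e_j)$, splits this quantity into the score at $\vecc{x}_i$ plus the increment $v_{y_i j}^T(\mathcal{F}[\vecc{x}]-\mathcal{F}[\vecc{x}_i])$, bounds the increment by Cauchy--Schwarz combined with Lemma \ref{lem_Jac}, and then passes to the supremum over admissible radii. The decomposition, the key lemma, and the conclusion are identical to yours.

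The one substantive point concerns exactly what you flagged as the main obstacle: the $\sqrt{2}$ bookkeeping, and your worry is well founded. Tracked literally against Definition \ref{app_defn_score}, in which the $\sqrt{2}$ enters as a multiplicative factor, your estimates give
\begin{equation*}
(e_{y_i}-e_j)^T S^\delta[\vecc{x}] \;\geq\; \frac{o(s_i)}{\sqrt{2}} \;-\; \sqrt{2}\,\Bigl(\sup_{\bar{\vecc{x}}\in\conv(\mathcal{X})}\|J[\bar{\vecc{x}}]\|_2\Bigr)\,\|\vecc{x}-\vecc{x}_i\|_2,
\end{equation*}
which yields $\gamma^d(s_i)\geq o(s_i)/\bigl(2\sup_{\bar{\vecc{x}}\in\conv(\mathcal{X})}\|J[\bar{\vecc{x}}]\|_2\bigr)$, a factor of $2$ short of the stated bound. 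The paper's proof closes this gap by asserting $\|v_{y_i j}\|_2=1$; but with $v_{ij}=\sqrt{2}(e_i-e_j)$ one actually has $\|v_{ij}\|_2=2$, so that step is consistent only with the normalized convention $v_{ij}=(e_i-e_j)/\sqrt{2}$, i.e., reading the $\sqrt{2}$ in the score as a divisor rather than a multiplier. In short: under the paper's literal normalization neither your argument nor the paper's produces the stated constant, while under the normalized convention both do, so the discrepancy is a normalization inconsistency in the statement/definitions rather than a defect of your argument. Your secondary concern about the applicability of Lemma \ref{lem_Jac} is not an issue: the lemma is stated for arbitrary pairs in $\mathcal{X}$ and its proof already confines the connecting segment to $\conv(\mathcal{X})$, which is precisely the set over which the supremum of $\|J\|_2$ is taken.
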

\begin{proof}

The proof is essentially identical to that of Theorem 4 in \cite{sokolic2017generalization}. We provide the full detail here for completeness. 

Denote $o(s_i) = o(\vecc{x}^{(i)},y^{(i)})$, where $\vecc{x}^{(i)}:= (x^{(i)}_0, \cdots, x^{(i)}_{M-1}) \in \mathcal{X} \subset \RR^{d_x M}$, and $v_{ij} = \sqrt{2} (e_i - e_j)$, where $e_i \in \RR^{d_y}$ denotes the Kronecker delta vector. 

The classification margin of the training sample $s_i$ is:
\begin{align}
    \gamma^d(s_i) &= \sup\{ a: \|\vecc{x}^{(i)} -  \vecc{x}\|_2 \leq a \implies g^\delta(\vecc{x}) = y^{(i)} \ \ \forall \vecc{x} \} \\
    &= \sup\{ a: \|\vecc{x}^{(i)} -  \vecc{x}\|_2 \leq a \implies o(\vecc{x},y^{(i)}) > 0 \ \ \forall \vecc{x} \}.
\end{align}

By Definition \ref{app_defn_score}, $o(\vecc{x},y^{(i)}) > 0$ if and only if $\min_{j \neq y^{(i)}} v^T_{y^{(i)} j} \mathcal{F}[\vecc{x}] > 0$. 

On the other hand, 
\begin{align}
    \min_{j \neq y^{(i)}} v^T_{y^{(i)} j} \mathcal{F}[\vecc{x}] &= \min_{j \neq y^{(i)}} (v^T_{y^{(i)} j} \mathcal{F}[\vecc{x}^{(i)}] +   v^T_{y^{(i)} j} (\mathcal{F}[\vecc{x}] - \mathcal{F}[\vecc{x}^{(i)}])) \\ 
    &\geq \min_{j \neq y^{(i)}} v^T_{y^{(i)} j} \mathcal{F}[\vecc{x}^{(i)}] + \min_{j \neq y^{(i)}}   v^T_{y^{(i)} j} (\mathcal{F}[\vecc{x}] - \mathcal{F}[\vecc{x}^{(i)}]) \\
    &= o(\vecc{x}^{(i)}, y^{(i)}) + \min_{j \neq y^{(i)}}   v^T_{y^{(i)} j} (\mathcal{F}[\vecc{x}] - \mathcal{F}[\vecc{x}^{(i)}]).
\end{align}

Therefore, $o(\vecc{x}^{(i)}, y^{(i)}) + \min_{j \neq y^{(i)}}   v^T_{y^{(i)} j} (\mathcal{F}[\vecc{x}] - \mathcal{F}[\vecc{x}^{(i)}]) > 0$ implies that $o(\vecc{x}, y^{(i)}) > 0$ and so
\begin{align}
    \gamma^d(s_i) &\geq \sup\left\{ a: \|\vecc{x}^{(i)} -  \vecc{x}\|_2 \leq a \implies o(\vecc{x}^{(i)}, y^{(i)}) + \min_{j \neq y^{(i)}}   v^T_{y^{(i)} j} (\mathcal{F}[\vecc{x}] - \mathcal{F}[\vecc{x}^{(i)}]) > 0  \ \ \forall \vecc{x} \right\} \\ 
    &= \sup\left\{ a: \|\vecc{x}^{(i)} -  \vecc{x}\|_2 \leq a \implies o(\vecc{x}^{(i)}, y^{(i)}) - \max_{j \neq y^{(i)}}   v^T_{y^{(i)} j} (\mathcal{F}[\vecc{x}^{(i)}]  - \mathcal{F}[\vecc{x}] ) > 0  \ \ \forall \vecc{x} \right\} \\ 
    &= \sup\left\{ a: \|\vecc{x}^{(i)} -  \vecc{x}\|_2 \leq a \implies o(\vecc{x}^{(i)}, y^{(i)}) > \max_{j \neq y^{(i)}}   v^T_{y^{(i)} j} (\mathcal{F}[\vecc{x}^{(i)}]  - \mathcal{F}[\vecc{x}])  \ \ \forall \vecc{x} \right\}.
\end{align}

Now, using the fact that $\|v_{y^{(i)} j}\|_2 = 1$ and Lemma \ref{lem_Jac}, we have:
\begin{equation}
    \max_{j \neq y^{(i)}}   v^T_{y^{(i)} j} (\mathcal{F}[\vecc{x}^{(i)}]  - \mathcal{F}[\vecc{x}]) \leq \sup_{\bar{\vecc{x}} \in \conv(\mathcal{X})} \|J[\bar{\vecc{x}}]\|_2 \cdot \|\vecc{x}^{(i)} - \vecc{x}\|_2. 
\end{equation}

Using this inequality gives:
\begin{align}
\gamma^d(s_i) &\geq \sup\left\{ a: \|\vecc{x}^{(i)} -  \vecc{x}\|_2 \leq a \implies o(\vecc{x}^{(i)}, y^{(i)}) > \sup_{\bar{\vecc{x}} \in \conv(\mathcal{X})} \|J[\bar{\vecc{x}}]\|_2 \cdot \|\vecc{x}^{(i)} - \vecc{x}\|_2  \ \ \forall \vecc{x} \right\} \\
&\geq \frac{ o(\vecc{x}^{(i)}, y^{(i)})}{ \sup_{\bar{\vecc{x}} \in \conv(\mathcal{X})} \|J[\bar{\vecc{x}}]\|_2 }.
\end{align}
The proof is done.
\end{proof}

We now have all the needed ingredients to prove Theorem \ref{app_thm_gen_discrete} and Theorem \ref{app_thm_gen_discrete_2}. 

\begin{proof}[Proof of Theorem \ref{app_thm_gen_discrete} and Theorem \ref{app_thm_gen_discrete_2}]

By Proposition \ref{prop_classmargin_rob}, Lemma \ref{ex_covering}, and our assumption on complexity of the sample space, the RNN classifier is $(d_y \cdot (2C_M/\gamma)^k, 0)$-robust, for some constant $C_M > 0$. Due to Theorem \ref{thm_xu_gen} (with $M:=L_g$ there),  it remains to prove the upper bound \eqref{ub_classmargin} for the classification margin of a training sample to complete the proof. Theorem \ref{app_thm_gen_discrete_2} then follows from Theorem \ref{thm_xu_gen} (with $M:=L_g$ there) and the inequality \eqref{gen_bound}  follows immediately from Theorem \ref{thm_xu_gen}. 

By Proposition \ref{prop_classmargin}, we have 
\begin{equation}
    \gamma^d(s_i) \geq \frac{o(s_i)}{\sup_{\hat{\vecc{x}} \in \conv(\mathcal{X})} \|J[\hat{\vecc{x}}]\|_2 },
\end{equation}
where $J[\hat{\vecc{x}}] := d\mathcal{F}[\hat{\vecc{ \vecc{x}}}]/d\hat{\vecc{ \vecc{x}}}$ is the input-output Jacobian associated to the RNN. Therefore, to complete the proof it suffices to show that 
\begin{equation}
  \|J[\hat{\vecc{x}}]\|_2  \leq  C \sum_{m=0}^{M-1} \delta_m  \|\hat{\Phi}_{M,m+1}[\hat{\vecc{x}}]\|_2,
\end{equation}
where  $C$ is the constant from the theorem and  $\hat{\Phi}_{m+1,k}$, $0 \leq k \leq m \leq M-1$ satisfies:
\begin{align}
    \hat{\Phi}_{k,k} &=  I, \\ 
    \hat{\Phi}_{m+1,k} &= \hat{J}_m \hat{\Phi}_{m,k}, \label{sm_ivp_dis}
\end{align}
where $\hat{J}_m = I + \delta_m f'(\hat{h}^{0}_m, \hat{x}_m)$ (with the $\hat{h}^{(0)}_m$ satisfying Eq. \eqref{hie1_disc}, recalling that we are replacing the superscript $\delta$ by hat when denoting the $\delta$-dependent approximating solutions for the sake of notation cleanliness) and the $\delta_m > 0$ are the step sizes.

Iterating \eqref{sm_ivp_dis} up to the $(m+1)$th step, for $m \geq k$, gives: 
\begin{equation} 
    \hat{\Phi}_{m+1,k} = \hat{J}_m \hat{J}_{m-1} \cdots \hat{J}_k =:  \prod_{l = k}^m  \hat{J}_l.
\end{equation}

Note that 
\begin{equation}
    \hat{\Phi}_{m+1,k} = \frac{\partial \hat{h}_{m+1}^{(0)}}{\partial \hat{h}_{m}^{(0)} } \frac{\partial \hat{h}_{m}^{(0)}}{\partial \hat{h}_{m-1}^{(0)} } \cdots \frac{\partial \hat{h}_{k+1}^{(0)}}{\partial \hat{h}_{k}^{(0)} } = \frac{d\hat{h}_{m+1}^{(0)}}{d\hat{h}_k^{(0)}}.  \label{chain}
\end{equation}

Now, applying chain rule:
\begin{align}
    J[\hat{\vecc{x}}] &= \frac{\partial p(V\hat{h}^{(0)}_M)}{\partial \hat{h}^{(0)}_M } \sum_{j=0}^{M-1} \frac{\partial \hat{h}_{M}^{(0)}}{\partial \hat{h}_{M-1}^{(0)} }
    \cdots 
    \frac{\partial \hat{h}_{j+2}^{(0)}}{\partial \hat{h}_{j+1}^{(0)} }
    \frac{\partial \hat{h}^{(0)}_{j+1}}{\partial \hat{x}_{j}},
\end{align}
where $p$ is the softmax function.

We compute:
\begin{equation}
    \frac{\partial p(V\hat{h}^{(0)}_M)}{\partial \hat{h}^{(0)}_M } =  VE,
\end{equation}
where $E^{ij} = p^i(e^{ij} - p^j)$.
From \eqref{chain}, we have 
\begin{equation}
    \frac{\partial \hat{h}_{M}^{(0)}}{\partial \hat{h}_{M-1}^{(0)} }
    \cdots 
    \frac{\partial \hat{h}_{j+2}^{(0)}}{\partial \hat{h}_{j+1}^{(0)} } = \hat{\Phi}_{M,j+1}.
\end{equation}

On the other hand, 
\begin{equation}
\frac{\partial \hat{h}^{(0)}_{j+1}}{\partial \hat{x}_{j}} = \delta_j \frac{\partial f(\hat{h}^{(0)}_j, \hat{x}_j)}{\partial \hat{x}_j},
\end{equation}
for $j=0,1,\dots,M-1$. Note that for Lipschitz RNNs, we have $\frac{\partial \hat{h}^{(0)}_{j+1}}{\partial \hat{x}_{j}} = \delta_j D_j U$, where $D_l^{ij} =  a'([W \hat{h}^{(0)}_l + U \hat{x}_l +  b]^i) e_{ij}$.

Using the results of the above computations gives:
\begin{align}
    J[\hat{\vecc{x}}] &= 
     VE
    \sum_{m=0}^{M-1} \delta_m 
    \hat{\Phi}_{M,m+1} \frac{\partial f(\hat{h}^{(0)}_m, \hat{x}_m)}{\partial \hat{x}_m}.
\end{align}
Therefore, 
\begin{align}
    \|J[\hat{\vecc{x}}]\|_2 &\leq 
     \|VE\|_2
    \sum_{m=0}^{M-1} \|\delta_m  
    \hat{\Phi}_{M,m+1}\|_2
     \left\| \frac{\partial f(\hat{h}^{(0)}_m, \hat{x}_m)}{\partial \hat{x}_m} \right\|_2  \\ 
     &\leq  \|V\|_2 \left(\max_{m=0,1,\dots,M-1}  \left\| \frac{\partial f(\hat{h}^{(0)}_m, \hat{x}_m)}{\partial \hat{x}_m}   \right\|_2 \right) 
    \sum_{m=0}^{M-1} \delta_m \|  
    \hat{\Phi}_{M,m+1}\|_2= C \sum_{m=0}^{M-1} \delta_m \|  
    \hat{\Phi}_{M,m+1}\|_2. \label{bound_ioJac}
\end{align}
For the Lipschitz RNN, we have $C:= \|V\| (max_{m=0,1,\dots,M-1}  \left\| D_m U \right\|_2)$.
The proof is done. 
\end{proof}

It follows immediately from Eq. \eqref{bound_ioJac} that we have the following sufficient condition for stability with respect to hidden states of deterministic RNN to guarantee stability with respect to  input sequence.

\begin{cor}
Fix a $M$ and assume that $C \sum_{m=0}^{M-1} \delta_m < 1$. Then,  $\|\hat{\Phi}_{M,m+1}\|_2 \leq 1 $ for $m=0,\dots,M-1$ implies that $\|J[\hat{\vecc{x}}]\|_2 < 1$.
\end{cor}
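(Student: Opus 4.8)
The plan is to derive the Corollary directly from the input-output Jacobian bound \eqref{bound_ioJac} established in the proof of Theorem \ref{app_thm_gen_discrete}. Recall that that proof produced, for every $\hat{\vecc{x}}$, the purely deterministic estimate
\begin{equation*}
\|J[\hat{\vecc{x}}]\|_2 \leq C \sum_{m=0}^{M-1} \delta_m \|\hat{\Phi}_{M,m+1}\|_2,
\end{equation*}
where $C = \|V\|_2 \max_{m} \|\partial f / \partial \hat{x}_m\|_2 > 0$ is the same constant appearing in the theorem statement. Since this inequality is already available, no new analytic machinery is needed; the entire argument reduces to substituting the two stated hypotheses into it.

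First I would invoke the hypothesis $\|\hat{\Phi}_{M,m+1}\|_2 \leq 1$ for each $m = 0,\dots,M-1$ to replace each spectral norm inside the sum by $1$. This replacement is valid because every $\delta_m > 0$, so each summand is monotone increasing in $\|\hat{\Phi}_{M,m+1}\|_2$, and it yields
\begin{equation*}
\|J[\hat{\vecc{x}}]\|_2 \leq C \sum_{m=0}^{M-1} \delta_m.
\end{equation*}
Then the second hypothesis $C \sum_{m=0}^{M-1} \delta_m < 1$ gives $\|J[\hat{\vecc{x}}]\|_2 < 1$ immediately, which is the claimed contraction property of the input-output map.

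There is essentially no obstacle here: the result is a one-step consequence of the already-proven Jacobian bound, and the only points worth checking are bookkeeping ones --- namely that $C$ is the same positive constant as in Theorem \ref{app_thm_gen_discrete}, independent of the input, and that \eqref{bound_ioJac} is being applied for the relevant $\hat{\vecc{x}}$. The conceptual content lies entirely in \eqref{bound_ioJac}; the Corollary merely records the natural sufficient condition, in terms of the step sizes and the state-to-state Jacobian products, under which stability with respect to the hidden states transfers to stability with respect to the input sequence.
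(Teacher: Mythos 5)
Your proposal is correct and is exactly the paper's argument: the corollary is stated as an immediate consequence of the Jacobian bound \eqref{bound_ioJac}, and substituting $\|\hat{\Phi}_{M,m+1}\|_2 \leq 1$ into that bound together with $C\sum_{m=0}^{M-1}\delta_m < 1$ gives $\|J[\hat{\vecc{x}}]\|_2 < 1$. Nothing further is needed.
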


\section{Stability and Noise-Induced Stabilization for NRNNs: Proof of Theorem 3 in the Main Paper}
\label{app_sec:StabilityAppendix}

We begin by discussing stochastic stability for SDEs, which are the underlying continuous-time models for our NRNNs.

Although the additional complexities of SDEs over ODEs often necessitate more involved analyses, many of the same ideas typically carry across. This is also true for stability. A typical approach for proving stability of ODEs involves Lyapunov functions --- in Chapter 4 of \cite{mao2007stochastic}, such approaches are extended for SDEs. This gives way to three notions of stability: (1) stability in probability; (2) moment stability; and (3) almost sure stability. Their definitions are provided in Definitions 4.2.1, 4.3.1, 4.4.1 in \cite{mao2007stochastic}, and are repeated below for convenience.

To preface the definition, consider initializing (\ref{app_NLRNN}) at two different random variables $h_0$ and $h_0' := h_0 + \epsilon_0$, where $\epsilon_0 \in \RR^{d_h}$ is a constant non-random perturbation 
with $\|\epsilon_0\| \leq \delta$. The resulting hidden states, $h_t$ and $h_t'$, are set to satisfy (\ref{app_NLRNN}) with the same Brownian motion $B_t$, starting from their initial values $h_0$ and $h_0'$, respectively. 
The evolution of $\epsilon_t = h_t' - h_t$ satisfies 
\begin{equation} 
    \dd\epsilon_t = A\epsilon_t \dd t + \Delta a_t(\epsilon_t) \dd t  + \Delta \sigma_t(\epsilon_t) \dd B_t, \label{eq_perturb}
\end{equation}
where $\Delta a_t(\epsilon_t) = a(Wh'_t + Ux_t + b) - a(Wh_t + Ux_t + b)$ and $\Delta\sigma_t(\epsilon_t) = \sigma_t(h_t+\epsilon_t) - \sigma_t(h_t)$. 
Since $\Delta a_t(0) = 0$, $\Delta\sigma_t(0) = 0$ for all $t \in [0,T]$, $\epsilon_t = 0$ admits a trivial \emph{equilibrium} for \eqref{eq_perturb}. 

\begin{defn}[Stability for SDEs]
\label{def:Stability}
The trivial solution of the SDE \eqref{eq_perturb} is 
\begin{enumerate}[label=(\roman*)]
    \item \emph{stochastically stable} (or, stable in probability) if for every $\epsilon \in (0,1)$, $r>0$, there exists a $\delta = \delta(\epsilon,r) > 0$ such that $
    \mathbb{P}(\|\epsilon_t\| < r \text{  for all } t \geq 0) \geq 1-\epsilon$ whenever $\|\epsilon_0\| < \delta$.
    \item \emph{stochastically asymptotically stable} if it is stochastically stable and, moreover, for every $\epsilon \in (0,1)$, there exists a $\delta_0 = \delta_0(\epsilon) > 0$ such that $    \mathbb{P}(\lim_{t \to \infty} \epsilon_t = 0) \geq 1-\epsilon$ whenever $\|\epsilon_0\| < \delta_0$. 
    \item \emph{almost surely exponentially stable} if $\limsup_{t \to \infty} t^{-1} \log \|\epsilon_t\| < 0$ with probability one whenever $\|\epsilon_0\| < \delta_1$.
    \item \emph{$p$-th moment exponentially stable} if there exists $\lambda, C > 0$ such that $\mathbb{E}\|\epsilon_t\|^p \leq C\|\epsilon_0\|^p e^{-\lambda(t - t_0)}$ for all $t \geq t_0$.
\end{enumerate}
\end{defn}
The properties in Definition \ref{def:Stability} are said to hold globally if they also hold under no restrictions on $\epsilon_0$. Stability in probability neglects to quantify rates of convergence, and is implied by almost sure exponential stability. On the other hand, for our class of SDEs, $p$-th moment exponential stability would imply almost sure exponential stability (see Theorem 4.2 in \cite{mao2007stochastic}). 

One critical difference between Lyapunov stability theory for ODEs and SDEs lies in the \emph{stochastic stabilization phenomenon}. Let $L$ be the infinitesimal generator (for a given input signal $x_{t}$) of the diffusion process described by the SDE \eqref{eq_perturb}:
\begin{equation}
L=\frac{\partial}{\partial t}+\sum_{i}\left((A\epsilon)^{i}+\Delta a_{t}^{i}(\epsilon)\right)\frac{\partial}{\partial\epsilon^{i}}+\frac{1}{2}\sum_{i,j}\left[\Delta\sigma_{t}(\epsilon)\Delta\sigma_{t}(\epsilon)^{\top}\right]^{ij}\frac{\partial^{2}}{\partial\epsilon^{i}\partial\epsilon^{j}}.
\end{equation}
The generator for the corresponding ODE arises by taking $\Delta \sigma_t \equiv 0$. In classical Lyapunov theory for ODEs, the existence of a non-negative Lyapunov function $V$ satisfying $LV \leq 0$ in some neighbourhood of the equilibrium is both necessary and sufficient for stability (see Chapter 4 of \cite{khalil2002nonlinear}). For SDEs, it has been shown that this condition is sufficient, but no longer necessary \cite{mao2007stochastic,mao1994exponential}. This is by the nature of stochastic stabilization --- the addition of noise can can have the surprising effect of \emph{increased} stability over its deterministic counterpart. Of course, this is not universally the case as some forms of noise can be sufficiently extreme to induce instability; see Section 4.5 in \cite{mao2007stochastic}.

Identifying sufficient conditions which quantify the stochastic stabilization phenomenon are especially useful in our setting, and as it turns out (see also \cite{liu2019neural}), these are most easily obtained for almost sure exponential stability. Therefore, our stability analysis will focus on establishing \emph{almost sure exponential stability}. The objective is to analyze such stability of the solution $\epsilon_t = 0$, that is, to see how the final state $\epsilon_T$ (and hence the output $y'_T-y_T =V \epsilon_T$ of the RNN) changes for an arbitrarily small initial perturbation $\epsilon_0 \neq 0$. 

To this end, we  consider an extension of the Lyapunov exponent to SDEs at the level of sample path \cite{mao2007stochastic}.

\begin{defn}[Almost sure global exponential stability]
The sample (or pathwise) Lyapunov exponent of the trivial solution of \eqref{eq_perturb} is
$\Lambda = \limsup_{t \to \infty} t^{-1} \log \|\epsilon_t\|$. 
The trivial solution $\epsilon_t = 0$ is \emph{almost surely globally exponentially stable} if $\Lambda$ is almost surely negative for all $\epsilon_0 \in \RR^{d_h}$. 
\end{defn}

For the sample Lyapunov exponent $\Lambda(\omega)$, there is a constant $C>0$ and a random variable $0 \leq \tau(\omega) < \infty$ such that for all $t > \tau(\omega)$, $\|\epsilon_t\| = \|h_t'-h_t\| \leq  C e^{\Lambda t}$ almost surely. Therefore, almost sure exponential stability implies that almost all sample paths of (\ref{eq_perturb}) will tend to the equilibrium solution $\epsilon = 0$ exponentially fast. With this definition in tow, we state and prove our primary stability result, which is equivalent to  Theorem 3 in the main paper.

\begin{thm}[Bounds for sample Lyapunov exponent of the trivial solution]  \label{app_prop:Stability}
Assume that Assumption \ref{ass} holds. Suppose that $a$ is $L_a$-Lipschitz, $0 \leq a_\Delta^T(\epsilon, t) \epsilon \leq L_a \|\epsilon\|_2^2$ and $0 \leq \sigma_1 \|\epsilon\| \leq  \|\Delta \sigma_t(\epsilon)\|_F 
\leq \sigma_2 \|\epsilon\|$ for all nonzero $\epsilon \in \mathbb{R}^{d_h}$, $t \in [0,T]$.
Then, with probability one,
\begin{equation} \label{ineq_cor}
    -\sigma_2^2 + \frac{\sigma_1^2}{2} + \lambda_{\min}(A^{\sym}) \leq \Lambda \leq -\sigma_1^2+\frac{\sigma_2^2}{2} + L_a\sigma_{\max}(W) + \lambda_{\max}(A^{\sym}),
\end{equation} 
for any $\epsilon_0 \in \mathbb{R}^{d_h}$.
\end{thm}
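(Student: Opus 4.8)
The plan is to follow the classical pathwise route for Lyapunov exponents of SDEs: apply It\^o's formula to $\log\|\epsilon_t\|^2$ along the perturbation equation \eqref{eq_perturb}, bound the resulting drift uniformly, and show the stochastic integral contributes nothing to the exponent. Writing \eqref{eq_perturb} as $\dd\epsilon_t = F_t\,\dd t + G_t\,\dd B_t$ with $F_t = A\epsilon_t + \Delta a_t(\epsilon_t)$ and $G_t = \Delta\sigma_t(\epsilon_t)$, It\^o's formula for $V(\epsilon)=\|\epsilon\|^2$ yields
\begin{equation}
\log\|\epsilon_t\|^2 = \log\|\epsilon_0\|^2 + \int_0^t g_s\,\dd s + M_t, \quad M_t = \int_0^t \frac{2\epsilon_s^T G_s}{\|\epsilon_s\|^2}\,\dd B_s,
\end{equation}
with integrand
\begin{equation}
g_s = \frac{2\epsilon_s^T F_s + \|G_s\|_F^2}{\|\epsilon_s\|^2} - \frac{2\|G_s^T\epsilon_s\|^2}{\|\epsilon_s\|^4}.
\end{equation}
Since $\Lambda = \tfrac12\limsup_{t\to\infty}t^{-1}\log\|\epsilon_t\|^2$, it suffices to two-sidedly bound $g_s$ and to show $t^{-1}M_t\to 0$. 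I would first record that the coefficients are Lipschitz and vanish at the origin, with $\|\Delta\sigma_t(\epsilon)\|_F\ge\sigma_1\|\epsilon\|$ preventing degeneracy, so a nonzero $\epsilon_0$ keeps $\epsilon_t\neq 0$ for all $t$ almost surely (the standard nonattainment lemma for such SDEs, cf. \cite{mao2007stochastic}); this makes $\log\|\epsilon_t\|^2$ well defined along the path.

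Next I would bound $g_s$. Splitting $\epsilon_s^T F_s = \epsilon_s^T A^{\sym}\epsilon_s + \epsilon_s^T\Delta a_s(\epsilon_s)$, Rayleigh quotients give $\lambda_{\min}(A^{\sym})\|\epsilon_s\|^2 \le \epsilon_s^T A^{\sym}\epsilon_s \le \lambda_{\max}(A^{\sym})\|\epsilon_s\|^2$; the monotonicity of $a$ gives $\epsilon_s^T\Delta a_s(\epsilon_s)\ge 0$, while Lipschitz continuity with $\|W\epsilon_s\|\le\sigma_{\max}(W)\|\epsilon_s\|$ gives $\epsilon_s^T\Delta a_s(\epsilon_s)\le L_a\sigma_{\max}(W)\|\epsilon_s\|^2$; and the diffusion hypothesis gives $\sigma_1^2\|\epsilon_s\|^2\le\|G_s\|_F^2\le\sigma_2^2\|\epsilon_s\|^2$. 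For the It\^o-correction term I would use $\|G_s^T\epsilon_s\|\le\|G_s\|_F\|\epsilon_s\|\le\sigma_2\|\epsilon_s\|^2$ (upper) and the directional lower bound $\|G_s^T\epsilon_s\|\ge\sigma_1\|\epsilon_s\|^2$. Assembling, $2(\phi+\lambda_{\min}(A^{\sym}))\le g_s\le 2(\psi + L_a\sigma_{\max}(W)+\lambda_{\max}(A^{\sym}))$ for all $s$, with $\phi,\psi$ as in the statement.

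For the martingale, its quadratic variation obeys $\dd\langle M\rangle_s = 4\|G_s^T\epsilon_s\|^2\|\epsilon_s\|^{-4}\dd s \le 4\sigma_2^2\,\dd s$, so $\langle M\rangle_t\le 4\sigma_2^2 t$ grows at most linearly; the strong law of large numbers for local martingales (Theorem~3.4, Ch.~1 of \cite{mao2007stochastic}) then forces $t^{-1}M_t\to 0$ almost surely. Dividing the It\^o identity by $t$, using $t^{-1}\log\|\epsilon_0\|^2\to 0$ and the uniform sandwich on $t^{-1}\int_0^t g_s\,\dd s$, and sending $t\to\infty$ yields $\phi+\lambda_{\min}(A^{\sym})\le\Lambda\le\psi + L_a\sigma_{\max}(W)+\lambda_{\max}(A^{\sym})$ almost surely, as claimed.

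The main obstacle is the It\^o-correction term in the upper bound: producing the stabilizing $-\sigma_1^2$ contribution (the very signature of noise-induced stabilization) requires the directional lower bound $\|\Delta\sigma_t(\epsilon)^T\epsilon\|\ge\sigma_1\|\epsilon\|^2$, whereas the hypothesis $\sigma_1\|\epsilon\|\le\|\Delta\sigma_t(\epsilon)\|_F$ only controls the Frobenius norm, which bounds $\|G_s^T\epsilon_s\|$ from above but not below. Care is therefore needed here --- either by invoking the concrete structure of $\sigma$ or by reading the hypothesis as a directional one --- and this is where the argument is most fragile. By contrast, the lower bound on $\Lambda$ uses only the two Frobenius bounds (lower for $\|G_s\|_F^2$, upper for the correction term) and is clean, and the martingale and nonattainment steps are routine.
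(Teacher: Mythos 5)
Your proposal is, at bottom, the same argument as the paper's, just inlined. The paper proves the theorem by citing a stochastic Lyapunov theorem (a combination of Theorems 4.3.3 and 4.3.5 of \cite{mao2007stochastic} with $p=2$) applied to $V(\epsilon)=\|\epsilon\|^2$, verifying (i) the quadratic sandwich on $V$, (ii) the drift bounds $\bigl(2\lambda_{\min}(A^{\sym})+\sigma_1^2\bigr)\|\epsilon\|^2 \leq LV(\epsilon) \leq \bigl(2\lambda_{\max}(A^{\sym})+2L_a\sigma_{\max}(W)+\sigma_2^2\bigr)\|\epsilon\|^2$, and (iii) $4\sigma_1^2\|\epsilon\|^4 \leq \|V_\epsilon(\epsilon)\Delta\sigma_t(\epsilon)\|_F^2 \leq 4\sigma_2^2\|\epsilon\|^4$. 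What you do differently is re-prove that cited theorem from scratch: It\^o's formula on $\log\|\epsilon_t\|^2$, a uniform sandwich on the drift $g_s$ (whose constants are exactly those in (ii) and (iii)), and the strong law of large numbers for continuous local martingales in place of Mao's exponential-martingale-plus-Borel--Cantelli argument. This buys self-containedness and makes visible precisely where each hypothesis enters, at the cost of having to invoke the nonattainment-of-zero lemma explicitly (which you do correctly); the constants and the final sandwich match the paper's exactly.

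The fragility you flag is genuine, and it is worth stressing that the paper's own proof contains the very same leap: in verifying condition (iii), the paper asserts $4\sigma_1^2\|\epsilon\|^4 \leq \|V_\epsilon(\epsilon)\Delta\sigma_t(\epsilon)\|_F^2$ ``by the conditions on $\Delta\sigma_t$,'' but $\|V_\epsilon(\epsilon)\Delta\sigma_t(\epsilon)\|_F = 2\|\Delta\sigma_t(\epsilon)^T\epsilon\|$ is a directional quantity, while the stated hypothesis controls only $\|\Delta\sigma_t(\epsilon)\|_F$. The upper bound transfers, the lower bound does not, and the gap is not cosmetic: take $d_h=2$, $r=1$, $A=0$, $W=0$, $\sigma(h,x)=(h^2,-h^1)^T$, so that $\Delta\sigma_t(\epsilon)=(\epsilon^2,-\epsilon^1)^T$ satisfies the Frobenius hypothesis with $\sigma_1=\sigma_2=1$ yet $\Delta\sigma_t(\epsilon)^T\epsilon\equiv 0$; then $\dd\|\epsilon_t\|^2=\|\epsilon_t\|^2\,\dd t$, so $\Lambda=1/2$, violating the claimed upper bound $-\sigma_1^2+\sigma_2^2/2=-1/2$. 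So the hypothesis must indeed be read as (or strengthened to) the directional bound $\|\Delta\sigma_t(\epsilon)^T\epsilon\|\geq\sigma_1\|\epsilon\|^2$, exactly as in Mao's stochastic-stabilization theorems; this holds automatically for isotropic noise with $\Delta\sigma_t(\epsilon)\propto\epsilon$, as in the linear example the paper discusses after the theorem. Your observation that the lower bound on $\Lambda$ needs only the stated Frobenius hypotheses, and is therefore unconditionally sound, is also correct.
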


To establish the bounds in Theorem \ref{app_prop:Stability}, we appeal to the following theorem, which arises from combining Theorems 4.3.3 and 4.3.5 in \cite{mao2007stochastic} in the case $p = 2$. Here, for a function $V$, we let $V_\epsilon = \partial V / \partial \epsilon$.
\begin{thm}[Stochastic Lyapunov theorem]
\label{stoch_lyp_thm}
If there exists a function $V\in\mathcal{C}^{2,1}(\mathbb{R}^{d_{h}}\times\mathbb{R}^{+};\mathbb{R}^{+})$ and $c_{1},C_{1}>0$, $c_{2},C_{2}\in\mathbb{R}$, $c_{3},C_{3}\geq 0$ such that for all $\epsilon\neq 0$ and $t\geq t_{0}$, 
\begin{enumerate}[label=(\roman*)]
    \item $c_{1}\|\epsilon\|^{2}\leq V(\epsilon,t)\leq C_{1}\|\epsilon\|^{2}$,
    \item $c_{2}V(\epsilon,t)\leq LV(\epsilon,t)\leq C_{2}V(\epsilon,t)$, and 
    \item $c_{3}V(\epsilon,t)^{2}\leq\left\lVert V_{\epsilon}(\epsilon,t)\Delta\sigma_{t}(\epsilon)\right\rVert _{F}^{2}\leq C_{3}V(\epsilon,t)^{2}$,
\end{enumerate}
then, with probability one, the Lyapunov exponent $\Lambda$ lies in the interval
\begin{equation} \label{lyp_exp_bounds}
\frac{2c_{2}-C_{3}}{4}\leq\Lambda\leq-\frac{c_{3}-2C_{2}}{4}.    
\end{equation}
\end{thm}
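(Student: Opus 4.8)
The plan is to apply the stochastic Lyapunov theorem (Theorem \ref{stoch_lyp_thm}) with the simplest admissible candidate, the time-independent quadratic Lyapunov function $V(\epsilon,t) = \|\epsilon\|^2$, which is manifestly of class $\mathcal{C}^{2,1}(\RR^{d_h}\times\RR^+;\RR^+)$. Condition (i) then holds trivially with $c_1 = C_1 = 1$, so the whole argument reduces to extracting the constants $c_2,C_2,c_3,C_3$ from conditions (ii)--(iii) and substituting them into the interval \eqref{lyp_exp_bounds}, which I would then simplify to recover \eqref{ineq_cor}.

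First I would compute the action of the generator $L$ (displayed just above the theorem) on $V$. Since $V_\epsilon = 2\epsilon^\top$ and the Hessian of $V$ equals $2I$, one obtains
\[
LV(\epsilon,t) = 2\epsilon^\top A\epsilon + 2\epsilon^\top \Delta a_t(\epsilon) + \|\Delta\sigma_t(\epsilon)\|_F^2 ,
\]
using $\tr(\Delta\sigma_t\Delta\sigma_t^\top) = \|\Delta\sigma_t\|_F^2$. Each term is then bounded separately. The quadratic form obeys the Rayleigh bounds $\lambda_{\min}(A^{\sym})\|\epsilon\|^2 \le \epsilon^\top A\epsilon \le \lambda_{\max}(A^{\sym})\|\epsilon\|^2$, as only the symmetric part contributes. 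The drift-increment term is dropped from below by the hypothesis $0 \le \Delta a_t(\epsilon)^\top\epsilon$, and controlled from above by Lipschitz continuity with Cauchy--Schwarz, $\epsilon^\top\Delta a_t(\epsilon) \le \|\epsilon\|\,L_a\|W\epsilon\| \le L_a\sigma_{\max}(W)\|\epsilon\|^2$. The diffusion term is sandwiched directly by the assumed Frobenius bounds $\sigma_1^2\|\epsilon\|^2 \le \|\Delta\sigma_t(\epsilon)\|_F^2 \le \sigma_2^2\|\epsilon\|^2$. Collecting these, condition (ii) holds with $c_2 = 2\lambda_{\min}(A^{\sym}) + \sigma_1^2$ and $C_2 = 2\lambda_{\max}(A^{\sym}) + 2L_a\sigma_{\max}(W) + \sigma_2^2$.

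Next I would verify condition (iii). Here $V_\epsilon\Delta\sigma_t(\epsilon) = 2\epsilon^\top\Delta\sigma_t(\epsilon)$, so $\|V_\epsilon\Delta\sigma_t(\epsilon)\|_F^2 = 4\,\epsilon^\top\Delta\sigma_t(\epsilon)\Delta\sigma_t(\epsilon)^\top\epsilon$, and since $V^2 = \|\epsilon\|^4$ I need this directional quadratic form pinned between $\sigma_1^2\|\epsilon\|^4$ and $\sigma_2^2\|\epsilon\|^4$, which supplies $c_3 = 4\sigma_1^2$ and $C_3 = 4\sigma_2^2$. Substituting the four constants into \eqref{lyp_exp_bounds} then gives $\Lambda \ge (2c_2 - C_3)/4 = \lambda_{\min}(A^{\sym}) + \tfrac{\sigma_1^2}{2} - \sigma_2^2$ and $\Lambda \le (2C_2 - c_3)/4 = \lambda_{\max}(A^{\sym}) + L_a\sigma_{\max}(W) + \tfrac{\sigma_2^2}{2} - \sigma_1^2$, which are precisely the two sides of \eqref{ineq_cor}.

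The main obstacle is the lower bound in condition (iii). The hypothesis controls $\|\Delta\sigma_t(\epsilon)\|_F = \sqrt{\tr(\Delta\sigma_t\Delta\sigma_t^\top)}$, i.e.\ the sum of squared singular values, whereas $\epsilon^\top\Delta\sigma_t\Delta\sigma_t^\top\epsilon$ is governed only by the spectrum of $\Delta\sigma_t\Delta\sigma_t^\top$ along the single direction $\epsilon$. The upper side is harmless, since $\epsilon^\top\Delta\sigma_t\Delta\sigma_t^\top\epsilon \le \|\Delta\sigma_t\|_2^2\|\epsilon\|^2 \le \|\Delta\sigma_t\|_F^2\|\epsilon\|^2$; but a lower bound on a trace does not by itself bound the smallest eigenvalue, so the lower estimate requires reading the hypothesis $\sigma_1\|\epsilon\| \le \|\Delta\sigma_t(\epsilon)\|_F$ as the non-degeneracy statement that the diffusion increment does not collapse in the direction of the perturbation, i.e.\ $\|\Delta\sigma_t(\epsilon)^\top\epsilon\| \ge \sigma_1\|\epsilon\|^2$. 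This is the one place where the passage from the Frobenius norm to the directional quantity must be handled with care, and it is exactly the lower bound $c_3 = 4\sigma_1^2$ that produces the stabilizing $-\sigma_1^2$ term in the upper estimate for $\Lambda$, which is the quantitative signature of the noise-induced stabilization phenomenon.
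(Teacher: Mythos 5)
There is a genuine gap here, and it is structural: your proposal does not prove the statement in question at all. Theorem \ref{stoch_lyp_thm} is the abstract stochastic Lyapunov criterion itself --- the claim that conditions (i)--(iii) on a general $V$ force the sample Lyapunov exponent into the interval \eqref{lyp_exp_bounds}. Your opening sentence announces the plan ``to apply the stochastic Lyapunov theorem,'' i.e., you take the conclusion as a black box and then verify its hypotheses for the particular candidate $V(\epsilon,t)=\|\epsilon\|^2$, extracting $c_2 = 2\lambda_{\min}(A^{\sym})+\sigma_1^2$, $C_2 = 2\lambda_{\max}(A^{\sym})+2L_a\sigma_{\max}(W)+\sigma_2^2$, $c_3 = 4\sigma_1^2$, $C_3 = 4\sigma_2^2$. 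That is, almost line for line, the paper's proof of the \emph{downstream} result, Theorem \ref{app_prop:Stability} (Theorem 3 of the main text) --- a correct and well-executed argument, but for a different statement. As a proof of Theorem \ref{stoch_lyp_thm} it is circular: it assumes exactly what is to be shown, and it contains no probabilistic content, whereas the theorem's entire substance is a pathwise almost-sure assertion about $\Lambda=\limsup_{t\to\infty}t^{-1}\log\|\epsilon_t\|$.

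What an actual proof requires (this is Mao's Theorems 4.3.3 and 4.3.5 with $p=2$, which is precisely what the paper cites, noting that the argument ``involves the It\^o formula, an exponential martingale inequality and a Borel--Cantelli type argument''): apply It\^o's formula to $\log V(\epsilon_t,t)$ to obtain
\begin{equation*}
\log V(\epsilon_t,t)=\log V(\epsilon_0,t_0)+\int_{t_0}^{t}\frac{LV}{V}\,\mathrm{d}s-\frac{1}{2}\int_{t_0}^{t}\frac{\|V_{\epsilon}\Delta\sigma_s(\epsilon_s)\|_F^{2}}{V^{2}}\,\mathrm{d}s+M_t,
\end{equation*}
where $M_t=\int_{t_0}^{t}V^{-1}V_{\epsilon}\Delta\sigma_s(\epsilon_s)\,\mathrm{d}B_s$ is a continuous local martingale whose quadratic-variation density lies in $[c_3,C_3]$ by (iii). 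The exponential martingale inequality, $\mathbb{P}\bigl(\sup_{t\leq n}\bigl[M_t-\tfrac{\theta}{2}\langle M\rangle_t\bigr]>\tfrac{2}{\theta}\log n\bigr)\leq n^{-2}$, combined with Borel--Cantelli, shows $M_t\leq\tfrac{\theta}{2}C_3\,t+o(t)$ almost surely; using (ii) and (iii) for the drift and correction terms and sending $\theta\downarrow 0$ gives $\limsup_{t\to\infty}t^{-1}\log V\leq C_2-c_3/2$, and condition (i) converts this into $\Lambda\leq-(c_3-2C_2)/4$ since $\log\|\epsilon_t\|=\tfrac12\log\|\epsilon_t\|^2$ differs from $\tfrac12\log V$ by a bounded amount. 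The lower bound runs symmetrically, applying the same martingale estimate to $-M_t$ (here the upper bound $\langle M\rangle_t\leq C_3 t$ from (iii) is what prevents the noise from destroying the estimate) to get $\liminf_{t\to\infty}t^{-1}\log V\geq c_2-C_3/2$, hence $\Lambda\geq(2c_2-C_3)/4$. None of these ingredients appear in your proposal. Separately, your closing observation --- that the paper's hypothesis $\sigma_1\|\epsilon\|\leq\|\Delta\sigma_t(\epsilon)\|_F$ controls a trace and does not by itself yield the directional lower bound $\|\Delta\sigma_t(\epsilon)^{\top}\epsilon\|\geq\sigma_1\|\epsilon\|^{2}$ needed for $c_3=4\sigma_1^2$ --- is a legitimate and sharp criticism, but it concerns the paper's application of this theorem in proving Theorem \ref{app_prop:Stability}, not the theorem you were asked to prove.
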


The proof of Theorem \ref{stoch_lyp_thm} involves the It\^o formula, an exponential martingale inequality and a Borel-Cantelli type argument. The functions $V$ above are called stochastic Lyapunov functions and the use of the theorem involves construction of these functions. We are now in a position to prove Theorem \ref{app_prop:Stability}, and will find that the choice $V(\epsilon,t) = \|\epsilon\|^2$ will suffice.

\begin{proof}[Proof of Theorem \ref{app_prop:Stability}]

It suffices to verify the conditions of Theorem 2 with $V(\epsilon,t)=V(\epsilon)=\|\epsilon\|^{2}$. 

Clearly (i) is satisfied. To show (iii), by the conditions on $\Delta\sigma_{t}$, we have that $4\sigma_{1}^{2}\|\epsilon\|^{4}\leq\|V_{\epsilon}(\epsilon)\Delta\sigma_{t}(\epsilon)\|_{F}^{2}\leq4\sigma_{2}^{2}\|\epsilon\|^{4}$. It remains only to show (ii). Observe that
\[
LV(\epsilon)=\epsilon^{\top}(A+A^{\top})\epsilon+2\Delta a_{t}(\epsilon)\epsilon+\mbox{tr}(\Delta\sigma_{t}(\epsilon)\Delta\sigma_{t}(\epsilon)^{\top}).
\]
Since $0\leq\Delta a_{t}(\epsilon)\epsilon$ and
\begin{align*}
\left|\Delta a_{t}(\epsilon)\epsilon\right|&	\leq\left\lVert a(Wh_{t}'+Ux_{t}+b)-a(Wh_{t}+Ux_{t}+b)\right\rVert \left\lVert \epsilon\right\rVert \\
&\leq L_{a}\left\lVert W\epsilon\right\rVert \left\lVert \epsilon\right\rVert \leq L_{a}\sigma_{\max}(W)\left\lVert \epsilon\right\rVert ^{2},
\end{align*}
it follows that
\[
LV(\epsilon)\leq(2\lambda_{\max}(A^{sym})+2L_{a}\sigma_{\max}(W)+\sigma_{2}^{2})\|\epsilon\|^{2},
\]
and
\[
LV(\epsilon)\ge(2\lambda_{\min}(A^{sym})+\sigma_{1}^{2})\|\epsilon\|^{2}.
\]
The bound \eqref{lyp_exp_bounds} now follows from Theorem \ref{stoch_lyp_thm} with $c_{1}=C_{1}=1$, $c_{2}=2\lambda_{\min}(A^{sym})+\sigma_{1}^{2}$, $C_{2}=2\lambda_{\max}(A^{sym})+2L_{a}\sigma_{\max}(W)+\sigma_{2}^{2}$, $c_{3}=4\sigma_{1}^{2}$, and $C_{3}=4\sigma_{2}^{2}$. 
\end{proof}
 
\begin{rmk} \label{rmk_A2}
 To see if the bounds in Theorem \ref{app_prop:Stability} are indeed sharp (at least for certain cases), consider the linear SDE $dH_t = AH_t dt + B H_t dW_t$, where $A \in \RR^{d_h \times d_h}$, $B = \sigma I$, $\sigma \in \RR$ and $W_t$ is a scalar Wiener process. Then, since $A$ and $B$ commute, they can be simultaneously diagonalized, and so the linear SDE can be reduced via transformation to a set of independent one-dimensional linear SDEs. In particular, one can show that $H_t = \exp((A-B^2/2)t+BW_t)) H_0$ and the Lyapunov exponents $\Lambda$ of this system are the real part of the eigenvalues of $A-B^2/2$. Note that $\lambda_{\min}(A_{sym}-B^2/2) \leq \Lambda \leq \lambda_{\max}(A_{sym}-B^2/2) $ a.s.. Since $B = \sigma I$, this inequality implies Eqn. \eqref{ineq_cor} with $L_a:=0$.
 The bounds are tight  in the scalar case ($d_h=1$ and $A$ is a scalar), with the inequality becoming an equality. 
\end{rmk}

\begin{rmk}
Even in the additive noise setting, however, the Lyapunov exponents of the CT-NRNN driven by additive noise are not generally the same as those of the corresponding deterministic CT-RNN. Oseledets multiplicative ergodic theorem implies they will be the same if the data generating process $x_t$ is ergodic \cite{arnold1986lyapunov}. Characterizing Lyapunov exponents for SDEs is a non-trivial affair in general --- we refer to, for instance, \cite{arnold1987large} for  details on this.
\end{rmk}

\section{Experimental Details}
\label{app_sect_experiment}

\subsection{Experimental Results Presented in the Main Paper}

Following \cite{erichson2020lipschitz}, we construct the hidden-to-hidden weight matrices $A$ and $W$ as
\begin{align}
A & = T(B, \beta_a, \gamma_a) := (1-\beta_a) \cdot (B+B^T) + \beta_a \cdot (B-B^T) - \gamma_a I,\\
W & = T(C ,\beta_w, \gamma_w) := (1-\beta_w) \cdot (C+C^T) + \beta_w \cdot (C-C^T) - \gamma_w I.
\end{align}
Here, $B$ and $C$ denote weight matrices that have the same dimensions as $A$ and $W$. 
The tuning parameters $\gamma_a$ and $\gamma_w$ can be used to increase dampening.
We initialize the weight matrices by sampling weights from the normal distribution $\mathcal{N}(0,\sigma^2_{init})$, where $\sigma_{init}^2$ is the variance.
Table~\ref{tab:tuning} summarizes the tuning parameters that we have used in our experiments. We train our models for $100$ epochs, with scheduled learning rate decays at epochs $\{90\}$. We use Adam with default parameters for minimizing the objective. 

\begin{table}[h]
	\caption{Tuning parameters used to train the NRNN.}
	\label{tab:tuning}
	\centering
	\scalebox{0.8}{
		\begin{tabular}{l c c c c c c c c c c c}
			\toprule
			Name           &  d\_h & lr  & decay & $\beta$ & $\gamma_a$ & $\gamma_w$ & $\epsilon$ & $\sigma_{init}^2$ & add. noise & mult. noise  \\
			\midrule 
			Ordered MNIST  & 128 & 0.001 & 0.1  & 0.75 &  0.001 & 0.001 & 0.01 & $0.1 / 128$  & 0.02& 0.02\\
			Ordered MNIST  & 128 & 0.001 & 0.1  & 0.75 &  0.001 & 0.001 & 0.01 & $0.1 / 128$  & 0.05 & 0.02\\			
			\midrule 
			Permuted MNIST & 128 & 0.001 & 0.1 & 0.75 & 0.001 & 0.001 & 0.01 & $0.1 / 128$  & 0.02 & 0.02\\
			Permuted MNIST & 128 & 0.001 & 0.1 & 0.75 & 0.001 & 0.001 & 0.01 & $0.1 / 128$  & 0.05 & 0.02\\			\midrule 
			ECG & 128 & 0.001 & 0.1 & 0.9 & 0.001 & 0.001 & 0.1 & $0.1 / 128$  & 0.06 & 0.03\\			
			
			\bottomrule
	\end{tabular}}
\end{table}

We performed a random search to obtain the tuning parameters. Since our model is closely related to the Lipschitz RNN, we started with the tuning parameters proposed in \cite{erichson2020lipschitz}. We evaluated different noise levels, both for multiplicative and additive noise, in the range $\left[ 0.01, 0.1 \right]$. We tuned the levels of noise-injection so that the models achieve state-of-the-art performance on clean input data. Further, we observed that the robustness of the model is not significantly improving when trained with increased levels of noise-injections. Overall, our experiments indicated that the model is relatively insensitive to the particular amount of additive and multiplicative noise level in the small noise regime. 

Further, we need to note that we only considered models that used a combination of additive and multiplicative noise-injections. One could also train models using either only additive or multiplicative noise-injections. We did not investigate in detail the trade-offs between the different strategies. The motivation for our experiments was to demonstrate that (i) models trained with noise-injections can achieve state-of-the-art performance on clean input data, and (ii) such models are also more resilient to input perturbations.

Figure \ref{fig_hessian} shows that NRNN exhibits a smoother Hessian landscape than that of the deterministic counterpart.
\begin{figure}[!t] 
		\begin{center}
    \includegraphics[width=0.7\textwidth]{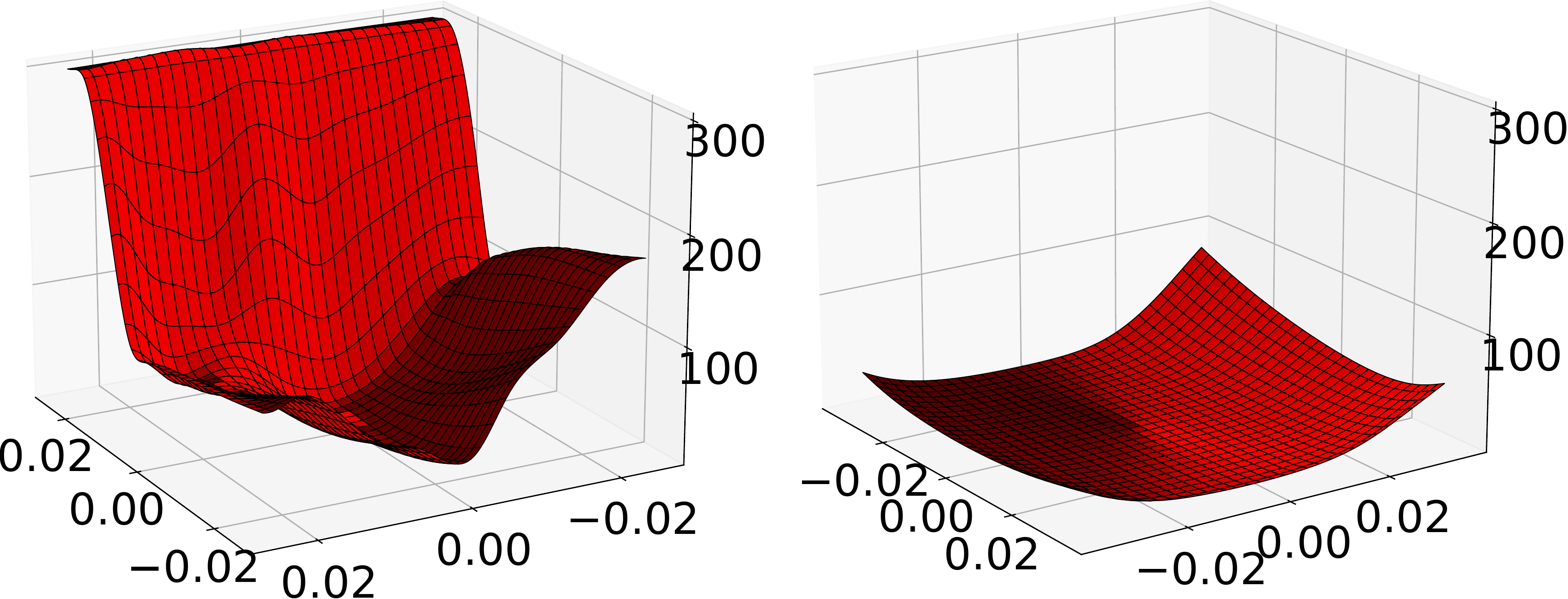}
  \end{center}
  \caption{Hessian loss landscapes for  deterministic (left) and noisy (right) model, computed using PyHessian.}
   \label{fig_hessian}
\end{figure}

For establishing a fair set of baselines, we used the following implementations and prescribed tuning parameters for the other models that we considered.

\begin{itemize}
    \item \textbf{Exponential RNN.} We used the following implementation: \url{https://github.com/Lezcano/expRNN}. We used the default parameters. We trained the model, with hidden dimension $d_h=128$, for $100$ epochs. 
    
    \item \textbf{CoRNN.} We used the following implementation, provided as part of the Supplementary Material: \url{https://openreview.net/forum?id=F3s69XzWOia}. We used the default parameters proposed by the authors for training the model with hidden dimension $d_h=128$. We trained the model for $100$ epochs with learning rate decay at epoch 90.

    \item \textbf{Lipschitz RNN.} We used the following implementation, provided as part of the Supplementary Material: \url{https://openreview.net/forum?id=-N7PBXqOUJZ}. We used the default parameters proposed by the authors for training the model with hidden dimension $d_h=128$. We trained the model for $100$ epochs with learning rate decay at epoch 90.

    \item \textbf{Antisymmetric RNN.} To our best knowledge, there is no public implementation by the authors for  this model. However, the Antisymmetric RNN can be seen as a special case of the Lipschitz RNN or the NRNN, without the stabilizing term $A$ and without noise-injection. We trained this model by using our implementation and the following tuning parameters: $\beta=1.0$, $\gamma=0.001$, lr = $0.002$, $\epsilon=0.01$. We trained the model for $100$ epochs with learning rate decay at epoch 90.

\end{itemize}

\subsection{Additional Results for Permuted Pixel-by-Pixel MNIST Classification}\label{sec:permutedMNIST}

Here we consider the permuted pixel-by-pixel MNIST classification task.
This task sequentially presents a scrambled sequence of the $784$ pixels to the model and uses the final hidden state to predict the class membership probability of the input image. 

Table~\ref{tab:pmnist-table} shows the average test accuracy (evaluated for models that are trained with 10 different seed values). Here we present results for white noise and salt and pepper (S\&P) perturbations.
Again, the NRNNs show an improved resilience to input perturbations. 
Figure~\ref{fig:pmnist} summarizes the performance of different models with respect to white noise and salt and pepper perturbations. 
\begin{table*}[!t]
	\caption{Robustness w.r.t. white noise ($\sigma$) and S\&P ($\alpha$) perturbations on the permuted MNIST task.}
	\label{tab:pmnist-table}
	\centering
	\scalebox{0.8}{
		\begin{tabular}{l c c c c | c c c c c c}
			\toprule
			Name                  &  clean & $\sigma=0.1$ & $\sigma=0.2$ & $\sigma=0.3$ &  $\alpha=0.03$ & $\alpha=0.05$ & $\alpha=0.1$\\
			\midrule

			Antisymmetric RNN~\cite{chang2019antisymmetricrnn}  & 92.8\% & 92.4\% & 89.5\% & 81.9\%   & 90.5\% & 87.9\% & 72.6\%  \\
				
			CoRNN~\cite{rusch2021coupled}  & 96.05\% & 65.1\% & 38.25\% & 29.1\%   & 84.8\% & 73.8\% & 52.6\%  \\
			
			Exponential RNN~\cite{lezcano2019cheap} & 93.3\% & 90.6\% & 78.4\% & 61.6\%   & 80.4\% & 70.6\% & 51.6\%  \\
			
			Lipschitz RNN~\cite{erichson2020lipschitz}   & \textbf{95.9}\% & 95.4\% & 93.5\% & 83.7\%    & 93.7\% & 90.2\% & 70.8\%  \\
			
			NRNN (mult./add. noise: 0.02/0.02)   & 94.9\% & \textbf{94.8}\% & \textbf{94.6}\% & 94.3\%   & \textbf{94.0}\% & {93.1}\% & 88.6\%  \\			
			
			NRNN (mult./add. noise: 0.02/0.05)   & 94.7\% & {94.6}\% & \textbf{94.6}\% & \textbf{94.4}\%  & \textbf{94.0}\% & \textbf{93.2}\% & \textbf{90.5}\%  \\
			
			\bottomrule
	\end{tabular}}
\end{table*}

\begin{figure*}[!t]
	\centering
	\begin{subfigure}[t]{0.49\textwidth}
		\centering
		\begin{overpic}[width=1\textwidth]{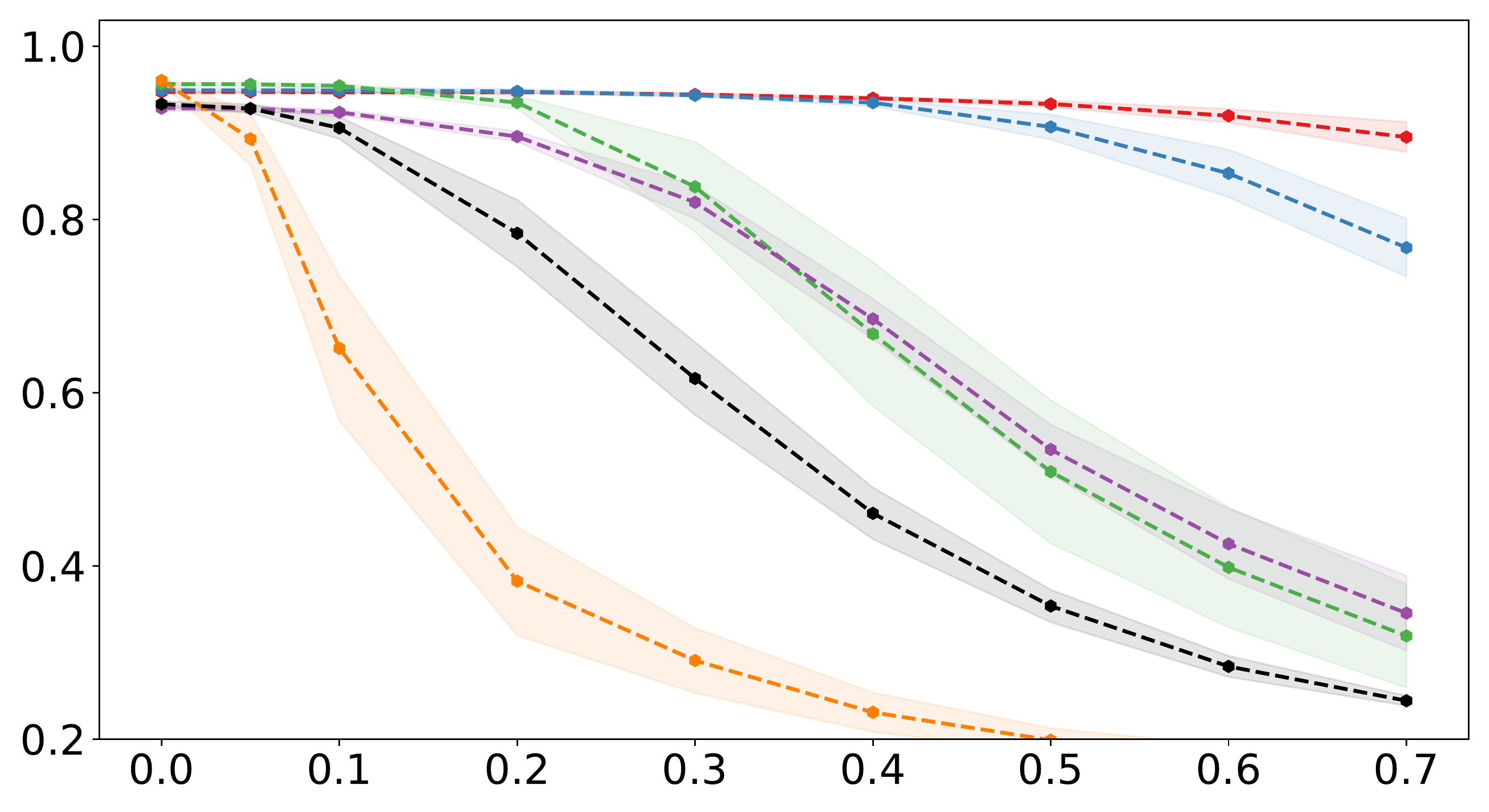}
			\put(-6,15){\rotatebox{90}{\footnotesize test accuracy}}			
			\put(42,-3){\footnotesize {amount of noise}}  	
		\end{overpic}\vspace{+0.2cm}		
		\caption{White noise perturbations.}
	\end{subfigure}
	~
	\begin{subfigure}[t]{0.49\textwidth}
		\centering
		\begin{overpic}[width=1\textwidth]{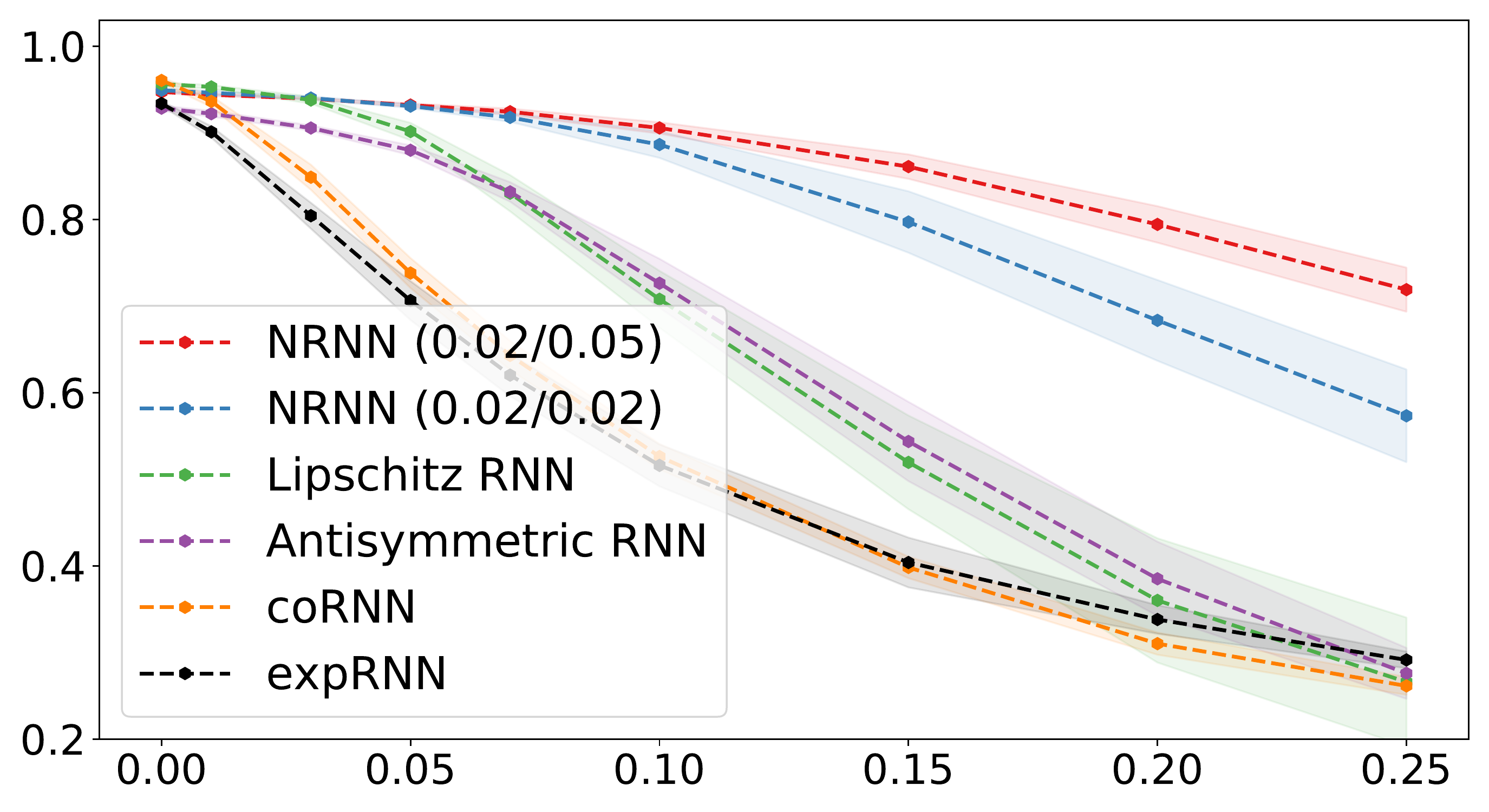} 
			\put(42,-3){\footnotesize {amount of noise}}  		
		\end{overpic}\vspace{+0.2cm}			
		\caption{Salt and pepper perturbations.}
	\end{subfigure}	
	\caption{Test accuracy for the permuted MNIST task as function of the strength of input perturbations.}
	\label{fig:pmnist}
\end{figure*}

\end{document}